\documentclass[11pt]{article}
%

\usepackage{tikz}
\usepackage{pgfplots}
\usepackage{amsthm}
\usepackage{amsmath}
\usepackage{amssymb }
\usepackage{array}
\usepackage{calc}
\usepackage{xcolor}
\usepackage{cite}
\usepackage{authblk}
\usetikzlibrary{decorations.pathreplacing}


\newlength{\boxwidth}
\setlength{\boxwidth}{\textwidth}
\addtolength{\boxwidth}{-27 pt}


\DeclareMathOperator*{\E}{\mathbb{E}}
\DeclareMathOperator*{\R}{\mathbb{R}}

\newcommand{\OPT} {
  \mathrm{OPT}}

\newcommand{\DT} {
  \mathrm{DT}}

\newcommand{\AD} {
  \mathrm{AD}}

\newcommand{\NAD} {
  \mathrm{NAD}}

\newcommand{\MQ} {
  \mathrm{MQ}}

\newcommand{\Rand} {
  \mathrm{R}}

\newcommand{\MP} {
  \mathrm{MP}}

\newcommand{\ignore}[1]{}

\mathchardef\mhyphen="2D

\let\emptyset\varnothing

\newtheorem{theorem}{Theorem}
\newtheorem{corollary}[theorem]{Corollary}
\newtheorem{lemma}[theorem]{Lemma}

\newtheorem{defn}{Definition}





\begin{document}

\title{Adaptive Exact Learning of Decision Trees \\ from Membership Queries}
\date{}
\author[1]{ Nader H. Bshouty}
\author [2]{Catherine A. Haddad-Zaknoon}
\affil[1,2]{ Department of Computer Science - Technion ,   Haifa, 32000, Israel}
\affil[1]{ bshouty@cs.technion.ac.il}
\affil[2]{catherine@campus.technion.ac.il}


\maketitle
\begin{abstract}
In this paper we study the adaptive learnability of decision trees of depth at most $d$ from membership queries. This has many applications in automated scientific discovery such as drugs development and software update problem. Feldman solves the problem in a randomized polynomial time algorithm that asks $\tilde O(2^{2d})\log n$ queries and Kushilevitz-Mansour in a deterministic polynomial time algorithm that asks $ 2^{18d+o(d)}\log n$ queries. We improve the query complexity of both algorithms. We give a randomized polynomial time algorithm that asks $\tilde O(2^{2d}) + 2^{d}\log n$ queries and a deterministic polynomial time algorithm that asks $2^{5.83d}+2^{2d+o(d)}\log n$ queries.

\end{abstract}

\textbf{Keywords:}
Decision Trees,  Membership Queries, Adaptive Learning, Exact Learning 

\section{Introduction}
Learning from membership queries, \cite{Ang88}, has fueled the interest of many papers due to its diverse applications in various areas including group testing \cite{DH00}, pooling design for DNA sequencing \cite{DH06},  blood testing \cite{Dor43}, functional genomics (molecular biology) \cite{KWJRB04},  chemical reactions \cite{AC08, ABM14}, multi-access channel communications \cite{DH00, BG07}, whole-genome shotgun sequencing \cite{ABKRS02}, DNA physical mapping \cite{GK98}, game theory \cite{Pel02}, program synthesis \cite{BDVY17}, channel leak testing, codes, VLSI testing and AIDS screening and many others as in \cite{BC16} and the references therein. Generally, in domains where queries can be answered by a non-human annotator, e.g. from experiments such as laboratory tests, learning from membership query may be a promising direction for automated scientific discovery.

In this paper, we consider the problem of learning Boolean decision trees from membership queries provided that the tree is of depth at most $d$. Learning decision trees from membership queries is equivalent to solving problems in widespread areas including drug development, software update problem, bioinformatics, chemoinformatics  and many others.
\subsection{Applications}
In the sequel we bring two examples where learning decision tree in the exact learning model might be applied to improve scientific and technological research.

\textbf{Drug development.} The process of drugs discovery starts by searching for lead compounds in a huge library of chemical molecules \cite{GRR08, DAG06, GPP05}. A compound is called \emph{active} with respect to some (biological) problem if it exhibits a very high activity in a  biological test or \emph{assay}. For example, checking if some molecule binds with a specific protein or cancerous cell. The essential goal is  to find, at a very early stage, some lead compounds  that exhibit a very high activity in the assay. One of the famous methods to cope with this  problem is building \emph{Quantitive Structure-Activity Relationship} model or \emph{QSAR}. The \emph{QSAR} methodology focuses on finding a  model, which correlates between \emph{activity} and the compound structure. A molecule is described using a predefined set of features or \emph{descriptors}  selected according to  chemical considerations. Decision trees are one of the standard ways used in drug research to model such relations. A membership query can be simulated by building a compound according to the specific descriptors defined by the algorithm's query and screening it against the biological assay. Compounds that exhibit high activity against the assay are related to as positive queries,  while inactive compounds are related as negative ones.

\textbf{The software update problem. } Complicated software systems are made up of many interacting packages that must be deployed and coexist in the same context. One of the tough challenges system administrators face is to know which software upgrades  are safe to apply to their particular environment without breaking the running and working system, thus causing downtime. The software upgrade problem is to determine which updates to perform without breaking the system \cite{BGS18}. Breaking the system might be as a result of \emph{conflicting} packages, \emph{defective} packages or \emph{dependent} packages.  Let $P$ denote the set of \emph{packages}.  An installation is a set of packages $I\subseteq P$.  An installation can be successful depending on whether all dependencies, conflicts and defect constraints are satisfied. Let $p,q\in P$. A \emph{dependency} $p\rightarrow q$ is a constraint that means any successful installation that contains the package $p$ must contain $q$. A \emph{conflict} $\{p,q\}$ implies that any installation that includes both $p$ and $q$ together will fail. Finally, a package $p$ is a \emph{defect} if it cannot be part of any successful installation. Consider that the dependencies, conflicts and defects are not known to the user, the purpose is to learn a model that for any installation can determine whether the installation will succeed or not. Let $P=\{p_1,\cdots, p_n\}$, $R$ be the set of the dependencies, $C$ the set of conflicting packages and $D$ the set of defectives. Let $x_1, \cdots, x_n$ be the variables indicating whether the package participates in the installation or not. This is equivalent to learning decision tree of depth $d$ from membership queries where a membership query is simulated by actually making the installation and getting a status feedback about it and $d =O(|C| + |R| + |D|)$. For the cases where $d<n$, learning decision trees of depth at most $d$ might be an efficient solution.

\subsection{The Learning Model}
In the {\it exact learning from membership queries} model~\cite{Ang88}, we have a \emph{teacher}  and a \emph{learner}. The teacher holds a hidden \emph{target} function or \emph{concept} $f:\{0,1\}^n \rightarrow \{0,1\}$ from a fixed class of Boolean functions $C$ that is known to the learner. The target of the learner is to discover the function $f$. Given an access to the teacher, the learner can ask the teacher {\it membership queries} by sending it elements $a\in \{0,1\}^n$. As a response, the teacher returns an answer $f(a) \in \{0,1\}$.  The main target of the learner is to learn the exact target function~$f$, using a minimum number of membership queries and a small time complexity. We regard the learner as a \emph{learning algorithm}  and the teacher as an \emph{oracle} $\MQ_f$ that answers membership queries, i.e., $\MQ_f(a)=f(a)$.

Let $C$  and $H$ be two classes of representations of Boolean functions. Let $C$ be a subset of $H$ in a sense that each function in $C$ has a representation in $H$.
We say that a learning algorithm $A$ \emph{exactly learns} $C$ {\it from} $H$ in time $t(A)$ {\it with} (or, and {\it asks}) $q(A)$ membership queries if for every target function $f\in C$, the algorithm $A$ runs in time at most $t(A)$, asks at most $q(A)$ membership queries and outputs a function $h\in H$ that is logically equivalent to $f$ ($f = h$). If such an algorithm exists, then we say that the class $C$ is \emph{learnable from} $H$ in time $t(A)$ {\it with} $q(A)$ membership queries (or just, queries).

The learning algorithm can be \emph{deterministic} or \emph{randomized}, \emph{adaptive} (AD) or \emph{non-adaptive} (NAD). In the adaptive learning algorithm, the queries might rely on the results of previous ones. On the other hand, in the non-adaptive algorithm, the queries are independent and must not rely on previous results. Therefore, all the queries can be asked in a single parallel step. We say that an adaptive algorithm is an \emph{$r$-round adaptive} ($r$-RAD) if it runs in $r$ stages where each stage is non-adaptive. That is, the queries may depend on the answers of the queries in previous stages, but are independent of the answers of the current stage queries. 

A randomized algorithm is called a  \emph{Monte Carlo} (MC) algorithm if it uses coin tosses and its running time is finite, while its output might be incorrect with a probability of at most $\delta$, that is specified as an input to the algorithm. All of the randomized algorithms regarded herein are Monte Carlo algorithms.

Let $\DT_d$ denote the class of decision trees of depth at most $d$.  We say that the class $\DT_d$ is \emph{polynomial time learnable} if there is an algorithm that learns $\DT_d$ in $poly(2^d,n)$ time and asks $poly(2^d, \log n)$ membership queries. We cannot expect better time or query complexity since just asking one membership query takes $O(n)$ time. As we will see below, the lower bound for the number of queries is $\Omega(2^d\log n)$.

Along the discussion, we will use the notation $\OPT(\DT_d)$ to indicate the optimal number of membership queries needed to learn the class $\DT_d$ with unlimited computational power. More precisely, we use the notation $\OPT_{\AD}(\DT_d)$ ($\OPT_{\NAD}(\DT_d)$, $\OPT_{\Rand,\AD}(\DT_d)$, $\OPT_{\Rand,\NAD}(\DT_d)$) to indicate the optimal number of membership queries needed by a deterministic adaptive (resp., deterministic non-adaptive, randomized adaptive with with failure probability of  $\delta=1/2$, randomized non-adaptive with failure probability of $\delta=1/2$) algorithm to exact-learn the class $\DT_d$ from membership queries.

\subsection{Known Results}
In the following, we give a brief summary of some of the known results of the learnability of the class $\DT_d$.
In \cite{Bsh17,BC16}, it is shown that\footnote{$\tilde O(T)$ is $O(poly(d)\cdot T)$. We will use this notation throughout this paper. The exact upper bound is $O(d2^{2d}\log n)$.}:
\begin{theorem}
$$\tilde O(2^{2d})\log n \geq \OPT_{\NAD}(\DT_d) \geq \OPT_{\AD}(\DT_d)   \geq \Omega(2^d \log n) $$ and
$$\tilde O(2^{2d})\log n \geq \OPT_{\Rand,\NAD}(\DT_d)\geq \OPT_{\Rand,\AD}(\DT_d)   \geq \Omega(2^d \log n).$$

\end{theorem}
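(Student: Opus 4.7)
The plan is to prove the chain in three pieces; the randomized chain follows by analogous arguments. The middle inequality $\OPT_{\NAD}(\DT_d) \geq \OPT_{\AD}(\DT_d)$ is immediate, since any non-adaptive algorithm is a valid adaptive algorithm (the queries simply happen to be oblivious to previous answers), so the optimum can only decrease when adaptivity is allowed. The same reasoning gives $\OPT_{\Rand,\NAD}(\DT_d) \geq \OPT_{\Rand,\AD}(\DT_d)$.

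For the upper bound $\OPT_{\NAD}(\DT_d) \leq \tilde O(2^{2d})\log n$, I would exhibit a fixed query set $S \subseteq \{0,1\}^n$ of size $\tilde O(2^{2d})\log n$ that \emph{distinguishes} every pair of distinct functions in $\DT_d$, in the sense that for all $f \neq g$ in $\DT_d$ there is $x \in S$ with $f(x) \neq g(x)$. The key structural fact is that each depth-$d$ tree queries at most $2^d - 1$ variables, so two trees that disagree must disagree on some input whose "disagreement witness" lives on a small set of coordinates. This calls for a combinatorial coverage object such as an $(n,k)$-universal set (with $k = O(d)$) or a perfect hash family restricted to subsets of size on the order of $2^d$; a probabilistic argument or explicit construction gives the desired size, and from $S$ the learner recovers $f$ by consistency with $\DT_d$.

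For the lower bound $\OPT_{\AD}(\DT_d) \geq \Omega(2^d\log n)$, I would use an adversary/information-theoretic argument against a rich subclass. Fix the first $d-1$ levels of the tree to be a complete binary tree querying $x_1,\ldots,x_{d-1}$, and let each of the $2^{d-1}$ leaves query one arbitrary variable from $\{x_d,\ldots,x_n\}$ with arbitrary $\{0,1\}$-labels below. This subclass has at least $(n-d+1)^{2^{d-1}}$ distinct functions. Since each membership query returns a single bit, the set of trees consistent with the observed answers shrinks by at most a factor of $2$ per query, so identifying the target forces $q \geq \log_2((n-d+1)^{2^{d-1}}) = \Omega(2^d \log n)$. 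For the randomized lower bound $\OPT_{\Rand,\AD}(\DT_d) \geq \Omega(2^d\log n)$, take the uniform distribution on this same subclass and invoke Yao's minimax principle: any deterministic algorithm using $o(2^d\log n)$ queries errs with probability bounded away from $0$ on average, hence the worst-case randomized complexity at error $1/2$ obeys the same bound.

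The hard part is the upper bound construction: a naive test set that enumerates all $2^{2^d}$ Boolean patterns on the relevant variables is far too large, and the relevant variables themselves are unknown a priori, so one must simultaneously identify the (at most $2^d$) relevant coordinates and the tree topology/leaf labels non-adaptively within the $\tilde O(2^{2d})\log n$ budget. Handling this requires the universal-set style construction from \cite{Bsh17,BC16}; the other three inequalities are comparatively routine counting and adversary arguments.
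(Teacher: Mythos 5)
The paper does not actually prove this theorem; it cites \cite{Bsh17,BC16} for it. Your sketch is a reasonable reconstruction of the likely arguments and the three ``routine'' pieces you identify are indeed correct.

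Two small refinements are worth flagging. For the counting lower bound, the subclass as you describe it (arbitrary $\{0,1\}$-labels below each leaf variable) does \emph{not} yield $(n-d+1)^{2^{d-1}}$ distinct functions: if both labels below a leaf node agree, the variable queried there is irrelevant, so different variable choices collapse to the same Boolean function. Fixing the labels below each leaf variable to $(0,1)$ removes the collision and makes the count exact; the bound $\Omega(2^d\log n)$ is then correct, and your Yao argument carries over verbatim. For the upper bound, your intuition via $(n,2d)$-universal sets is exactly the right tool: if $f\neq g\in\DT_d$ then by the paper's Lemma~\ref{DToDT} $f+g\in\DT_{2d}$ and is nonzero, so some root-to-one-leaf path of length at most $2d$ exists, and an $(n,2d)$-universal set contains an assignment that follows it. A probabilistic existence argument for $(n,2d)$-universal sets (union bound over all $\binom{n}{2d}2^{2d}$ coordinate/pattern pairs) gives size $O(d\,2^{2d}\log n)$, matching the footnote's exact bound $O(d2^{2d}\log n)$; since $\OPT_{\NAD}$ allows unbounded computation, existence of the distinguishing set is all you need. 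What you should trim is the aside about perfect hash families for sets of size $\sim 2^d$; that is part of the machinery in the cited constructive algorithms but is not needed for the information-theoretic upper bound you are proving here, and as written it muddies an otherwise clean argument. Overall the decomposition, the adversary/counting lower bound, and the Yao-principle transfer are standard and sound; the only genuine gap is that you leave the universal-set existence bound asserted rather than computed, which is easy to fill in as above.
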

Achieving an upper bound better than $\tilde O(2^{2d})\log n$ is an open problem. Table \ref{tab:results} summarizes some known results on learning decision trees. Most of the algorithms for learning decision trees relate to the target function using the Discrete Fourier Transform representation and try to learn its non-zero Fourier coefficients. Vitaly Feldman ~\cite{VF07}  introduced the first randomized {\it non-adaptive} algorithm that finds the heavy Fourier coefficients. Applying his algorithm to the class $\DT_d$ gives a non-adaptive learning algorithm with query complexity $\tilde O(2^{2d})\log^2n$. See \cite{Bsh17} for the algorithm. Applying the reduction from \cite{BC16} to the algorithm of Feldman gives a non-adaptive learning algorithm with query complexity     $\tilde O(2^{2d}) \log n$. We want to draw the reader's attention that any result for the non-adaptive case implies the same result for the adaptive case. This is result (1) in Table \ref{tab:results}.
Kushilevitz and Mansour~\cite{KM93}, and Jackson ~\cite{Jck97}, gave a randomized adaptive algorithm for learning decision trees known as KM algorithm. Kushilevitz and Mansour then use the technique of $\epsilon$-biased distribution to derandomize the algorithm. A rough estimation of the query complexity of their algorithm for the class of $\DT_d$ was first done in~\cite{Bsh17}. In Appendix \ref{KM}, we show that, with the best derandomization results exist today~\cite{T17}, the query complexity of their deterministic algorithm is $ 2^{18d + o(d)}\log n$. This is result (2) in Table \ref{tab:results}.

\begin{table}\centering
 \begin{tabular}{||c | c|c |c||}
 \hline
Deterministic/&Algorithm& Query \\ [0.5ex]
Randomized&&Complexity\\
 \hline\hline
Randomized $^{(1)}$ & \cite{VF07,BC16 } & $\tilde O(2^{2d})\log n$ \rule{0pt}{2.4ex}\\
\hline
Randomized & Ours& $\tilde O(2^{2d})+2^d\log n$\rule{0pt}{2.4ex} \\
\hline\hline
 Deterministic$^{(2)}$ & \cite{KM93, Jck97,BHL95 }&$2^{18d+o(d)}\log n$ \rule{0pt}{2.4ex}\\
 \hline
Deterministic&Ours  &$2^{5.83d}+2^{2d+o(d)}\log n$ \\
\hline
\end{tabular}
\caption{Query complexity for polynomial time adaptive algorithms}
\label{tab:results}\end{table}

\subsection{Our Results}
In this paper we give two new algorithms. The first algorithm is a randomized adaptive algorithm (two rounds) that asks $\tilde O(2^{2d})+2^d\log n$ membership queries. In particular, this shows that $\OPT_{\Rand,\AD}(\DT_d)\le \tilde{O}(2^{2d})+2^d\log n$. The second algorithm is a deterministic adaptive algorithm that asks $2^{5.83d}+2^{2d+o(d)}\log n$. See Table~\ref{tab:results}.

For the randomized adaptive algorithm, we give a new reduction that changes a randomized $r$-round algorithm $A$ that learns a class $C$ with $Q(n)$ membership queries to a randomized $(r+1)$-round algorithm that learns $C$ with $Q(8V^2)+v\log n$ queries, where $V$ is an upper bound on the number of relevant variables of the functions in $C$ and $v$ is the number of relevant variables of the target function. The class $C$ must be closed under variable projection. We apply this reduction to the randomized nonadaptive (one round) algorithm of Feldman for $\DT_d$. In Feldman's algorithm we have $Q(n)=\tilde O(2^{2d})\log n$. Since, for $\DT_d$, $V,v\le 2^d$, we get a two-round algorithm that asks $\tilde O(2^{2d})+2^d\log n$ queries.

In our reduction, we randomly uniformly project the variables $(x_1,\ldots,x_n)$ of the target function $f$ into $m=8V^2$ new variables $(y_1,\ldots,y_{m})$. Since the number of the relevant variables of the target function is at most $V$, with high probability all the variables in the target function are projected to distinct variables. Let $g(y)$ be the projected function. We run the algorithm $A$ to learn $g(y)$. Obviously, queries to $g$ can be simulated by queries to $f$. This takes $Q(8V^2)$ queries. Algorithm $A$ returns a function $h(y)$ equivalent to $g$. Then, we show, without asking any more membership queries, how to find the relevant variables of $g(y)$ and how to find witnesses for each relevant variable. That is, for each relevant variable $y_i$ in $g$, we find two assignments $a^{(i)}$ and $b^{(i)}$ that differ only in $y_i$ such that $g(a^{(i)})\not=g(b^{(i)})$. Moreover, we show that for each relevant variable $y_i$ in $g$, one can find the corresponding relevant variable in $f$ by running the folklore nonadaptive algorithm of group testing for finding one defective item combined with the witnesses. Each relevant variable can be found with $\log n$ queries. This requires $v\log n$ queries in the second round.

For the deterministic adaptive algorithm, we first find the relevant variables. Then, we learn the decision tree over the relevant variables using the multivariate polynomial representation,~\cite{BM95}, with the derandomization studied in~\cite{B14}. To find the relevant variables, we construct a combinatorial block design, that we will call an $(n,d)$-{\it universal disjoint set}, in polynomial time. A set $S\subseteq \{0,1,z\}^n$ is called an $(n,d)$-universal disjoint set if for every $1\le i_1<\cdots<i_d\le n$, every $\xi_1,\ldots,\xi_d\in\{0,1\}^n$ and every $1\le j\le d$ there is an assignment $a\in S$ such that $a_{i_k}=\xi_k$ for all $k\not=j$ and $a_{i_j}=z$. We regard $z$ as a variable. The algorithm finds all the relevant variables in the target function as follows. To find the relevant variables, the algorithm starts with constructing an $(n,2d+1)$-universal disjoint set $S$. For any assignment $a\in S$, let $a[x]$ be the assignment $a$ where each entry $i$ that is equal to $z$ is replaced with $x_i$. We show that for every decision tree $f$ and every relevant variable $x_i$ in $f$ there is an assignment $a\in S$ such that $f(a[x])$ is equal to either $x_i$ or $\bar x_i$. Therefore, when $f(a[1^n]|_{x_i\gets 1})\not=f(a[0^n]|_{x_i\gets 0})$, we can run the algorithm that learns one literal to learn a relevant variable in $f$. To learn $f(a[x])$, we need $O(\log n)$ queries. This gives all the relevant variables of~$f$ in $|S|+O(v\log n)$ queries where $v$ is the number of relevant variables in $f$.

In addition, in this paper we give a polynomial time algorithm that constructs an $(n,2d+1)$-universal disjoint set $S$ of size $|S|=2^{2d+o(d)}\log n$. This size is optimal up to $2^{o(d)}$. Hence, the total number of queries for the first stage is $2^{2d+o(d)}\log n$.

In the second stage, the algorithm learns the target function over the relevant variables that are found in the first stage. We start by showing that $\DT_d$ can be represented as $\MP_{d,3^d}$; multivariate polynomial over the field $F_2$ with monomials of size (number of variables in the monomial) at most $d$ and at most $3^d$ monomials. To learn $\MP_{d,s}$, $s=3^d$, over $N\le 2^d$ variables, we first show how to learn one monomial. In order to (deterministically) learn one monomial, we use the deterministic zero test procedure  Zero-Test of $\MP_{d,s}$ from \cite{B14} that asks $Z=2^{2.66d}s\log^2N$ queries. The learning algorithm of $\MP_{d,s}$ from~\cite{BM95} with this procedure gives a deterministic learning algorithm that asks $O(ZNs)$ queries. This solves the problem with query complexity $\tilde O(2^{6.83d})$ (for learning $\DT_{d}$ with $N\le 2^d$ variables). In this paper, we give an algorithm that asks $O(Zs\cdot \log N)$ queries that gives query complexity $\tilde O(2^{5.83d})$.

To find one monomial, our algorithm uses $(N,d)$-{\it sparse all one set} $D\subseteq \{0,1\}^N$. An $(N,d)$-sparse all one set is a set $D\subseteq \{0,1\}^N$ where for every $1\le i_1<\cdots<i_d\le n$ there is an assignment $a\in D$ such that $a_{i_k}=1$ for all $k=1,\ldots,d$ and the weight of each $a\in D$ is at most $N(1-1/2d)$. We show how to construct such set in polynomial time.
Thereafter, we show that for every $f\in\DT_d$, $f\not=0$, there is an assignment $a\in D$ such that $f(a*x)\not=0$ where $a*x=(a_1x_1,\ldots,a_Nx_N)$. To find such an assignment, we use the procedure Zero-Test to zero test each $f(a*x)$, for all $a\in D$. Notice that the weight of $a$ is at most $(1-1/2d)N$ and therefore, the function $f(a*x)$ contains at most $(1-1/2d)N$ variables. Thus, if we recursively use this procedure, after at most $2d\ln N$ iterations, we get a monomial $M$ in $f$. Therefore, to find one monomial we need at most $2^{2.66d}s 2d\ln N$ queries.

Assuming that we have found $t$ monomials $M_1,\ldots,M_t$ of $f$, we can learn a new one by learning a monomial of $f'=f+M_1+M_2+\cdots+M_t$. The function $f'$ is the function $f$ without the monomials $M_1,\ldots,M_t$. This is because $M+M=0$ in the binary field. Obviously, we can simulate queries for $f'$ with queries for $f$. Since the number of monomials in $f$ is at most $s=3^d$, the number of queries in the second stage is at most $2^{2.66d}s^2 2d\ln N= \tilde O(2^{5.83d})$.
\section{Preliminaries}
Let $n$ be an integer. We denote $[n] = \{1,\cdots,n\}$. Consider the set of \emph{assignments}, also called \emph{membership queries} or \emph{queries}, $\{0,1\}^n$. A function $f$ is called a \emph{Boolean function} on $n$ variables $\{x_1, \cdots, x_n\}$ if $f: \{0,1\}^n\rightarrow \{0,1\}$. We denote by $1^n$ ($0^n$) the all one (zero) assignment.  For an assignment $a$, the $i$-th entry in $a$ is denoted by $a_i$. For two assignments $a,b\in \{0,1\}^n$, the assignment $a+ b$ denotes the bitwise xor of the assignments $a$ and $b$. For an assignment $a\in\{0,1\}^n$, $i \in [n]$ and $\xi \in \{0,1\}$, we denote by $a|_{x_i\leftarrow \xi}$ the assignment $(a_1, \cdots, a_{i-1}, \xi, a_{i+1}, \cdots, a_n)$. We say that the variable $x_i$ is \emph{relevant} in $f$ if there is an assignment $a = (a_1, \cdots, a_n)$   such that $f(a|_{x_i\leftarrow 0}) \neq f(a|_{x_i\leftarrow 1})$.  We call such assignment a \emph{witness} assignment for $x_i$. A Boolean decision tree is defined as follows:
\begin{defn}
Let $\DT$ denote the set of all Boolean \emph{decision tree} formulae. Then, 1) $f\equiv 0 \in \DT$ and $f \equiv 1\in \DT$ and 2) if $f_0 , f_1\in \DT$, then for all $x_i$, $f:= \overline{x}_if_0 + x_if_1 \in \DT$.
\label{DTDefn}
\end{defn}
An immediate observation from the construction of $f$, as defined by step~2 in Definition \ref{DTDefn}, is that the terms of $f$ are disjoint i.e. there is at most one non-zero term for the same assignment. Consequently, the ``$+$'' operation can be replaced with the logic ``$\vee$''. Hence, we can relate to the function $f$ as a disjunction of pairwise disjoint terms.

Every decision tree $f$ can be \emph{modelled} as a binary tree denoted by $T(f)$ or shortly $T$ if $f$ is known from the context. Let $\cal{V}$ denote the vertices set of $T$.  If $f \equiv 0$ or $f \equiv 1$, then $T$ is a node labelled with $0$ or $1$ respectively. Otherwise, by Definition \ref{DTDefn}, there are some $x_i$ and $f_0 , f_1\in \DT$ such that $f = \overline{x}_if_0 + x_if_1$. Then, $T$  has a root node $r\in \cal{V}$ labelled with $x_i$ and two outgoing edges. The left edge is labelled by zero and points to the root node of $T(f_0)$. The right edge is labelled by $1$ and points to the root node of $T(f_1)$. The \emph{size} of a decision tree $T$ is the number of its leaves. For any tree $T$ we define $depth(T)$ to be the number of the edges of the longest path from the root to a leaf in $T$.  Let $T_f$ denote all the decision tree models of the Boolean function $f$.  Let $d$ be the depth of the decision tree model $T(f)$. We say that $T(f)$ is a \emph{minimal} model for $f$ if $d = \mathop{\min}_{T\in T_f} depth(T)$. The \emph{depth} of a decision tree $f$ is the depth of its minimal model. We denote by $\DT_d$ the class of all decision trees of depth at most $d$.

Given an assignment $a \in \{0,1\}^n$, the decision tree model $T(f)$ defines a computation. The computation traverses a path from the root to a leaf and assigns values to the variables indicated by the inner nodes in the following way. The computation starts from the root node of the tree $T(f)$. When the computation arrives to an inner node labeled by $x_i$, it assigns the value of the input $a_i$ to $x_i$. If $a_i = 1$, then the computation proceeds to the right child of the current node. Otherwise, it proceeds to the left child. The computation terminates when a leaf $u$ is reached. The value of the computation is the value indicated by the leaf $u$. 
\begin{lemma}\label{DToDT} Let $f_1,f_2\in \DT_d$, then for any boolean function $g(y_1,y_2)$ we have $g(f_1,f_2)\in \DT_{2d}$.
\end{lemma}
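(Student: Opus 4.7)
The plan is to build an explicit decision tree for $g(f_1,f_2)$ of depth at most $2d$ by composing minimal trees $T_1$ and $T_2$ for $f_1$ and $f_2$ respectively (each of depth $\le d$ by hypothesis). The key observation is that any root-to-leaf path in $T_1$ forces $f_1$ to the constant value $c\in\{0,1\}$ labelling that leaf, so along such a path the function $g(f_1,f_2)$ coincides with the single-variable Boolean function $y\mapsto g(c,y)$ evaluated at $f_2$. There are only four such restrictions: $0$, $1$, $f_2$, or $\bar f_2$.

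First I would take $T_1$ and walk through its leaves. At a leaf labelled $c$, I would splice in a depth-$\le d$ decision tree for $g(c,f_2)$: for the constants $0$ or $1$ I splice a single leaf; for $f_2$ I splice a copy of $T_2$; for $\bar f_2$ I splice a copy of $T_2$ with the $0/1$ labels at its leaves swapped. Each spliced subtree has depth at most $d$ since $f_2\in \DT_d$ and the class is plainly closed under constant relabelling of leaves and under complementation of output.

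Next I would verify correctness from the semantics of decision-tree evaluation recalled just before the lemma: on input $a\in\{0,1\}^n$, the computation in the combined tree first traces the unique $T_1$-path determined by $a$, reaching a leaf labelled $c=f_1(a)$; then it continues into the spliced subtree, whose output by construction equals $g(c,f_2(a))=g(f_1(a),f_2(a))$. The depth of the combined tree is bounded by $\mathrm{depth}(T_1)+\max_c \mathrm{depth}(\text{splice}_c)\le d+d=2d$, so $g(f_1,f_2)\in \DT_{2d}$ as claimed.

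There is no real obstacle; the only point that needs a line of justification is that $\DT_d$ is closed under complementation (swap leaf labels) and under replacement of a function by a constant (collapse the tree to a single leaf), both of which follow immediately from Definition~\ref{DTDefn}.
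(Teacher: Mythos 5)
Your proposal is correct and takes essentially the same route as the paper: replace each leaf of $T_1$ by a tree for $f_2$ and relabel the resulting leaves via $g$. The paper uniformly splices $T_2$ at every leaf of $T_1$ and labels leaf $w_{u,v}$ with $g(\ell(v),\ell(u))$, whereas you first observe that the spliced subtree can be simplified to a constant leaf, $T_2$, or $T_2$ with flipped labels according to $g(c,\cdot)$; this is a cosmetic refinement of the same construction and the depth bound $d+d=2d$ is obtained identically.
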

\begin{proof} We construct the following tree denoted by $[f_1,f_2]_g$. We replace each leaf $v$ of $f_1$ by the tree $f_2$. Denote by $w_{u,v}$ the leaf $u$ of the tree $f_2$ that replaces $v$ in $f_1$.  The label of $w_{u,v}$ is $g(\ell(v),\ell(u))$, where $\ell(v)$ is the label of $v$ in $f_1$ and $\ell(u)$ is the label of $u$ in $f_2$. Therefore, any assignment $a$ that reaches the leaf $v$ in $f_1$ and $u$ in $f_2$ satisfies $g(f_1,f_2)(a)=g(f_1(a),f_2(a))=g(\ell(v),\ell(u))$. On the other hand, this assignment reaches the leaf $w_{u,v}$ in $[f_1,f_2]_g$ which has the label $g(\ell(v),\ell(u))$. The result follows since $[f_1,f_2]_g\in \DT_{2d}$.
\end{proof}

\subsection{Multivariate Polynomial Representation of Decision Trees }
A \emph{monotone term} or \emph{monomial} is a product of variables. The {\it size} of a monomial is the number of the variables in it. A \emph{multivariate polynomial} is a linear combination (over the binary field $F_2$) of distinct monomials. The size of a multivariate polynomial is the number of monomials in it. It is well known that multivariate polynomials have a unique representation so the size is well defined. Let $\MP_{d,s}$ be the set of Boolean multivariate polynomials over the binary field $F_2$ with at most $s$ monomials each of size at most $d$. We say that a set $Z\subseteq \{0,1\}^n$ is a {\it zero test} for $\MP_{d,s}$ if  for every $f \in \MP_{d,s}$ and $f \neq 0$, there is an assignment $a\in Z$ such that $f(a)\not=0$. In \cite{B14}(Lemma 72), it is proved: 
\begin{lemma}\label{ZT1} There is a zero test set $Z\subseteq \{0,1\}^n$  for $\MP_{d,s}$ of size $2^{2.66d}s\log^2n$ that can be constructed in polynomial time $poly(2^d, n)$.
\end{lemma}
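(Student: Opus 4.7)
My plan is to build $Z$ by nesting an outer ``dimension reduction'' family with an inner dense hitting set. The outer family collapses the $n$ variables to a universe of size $k = O(d)$ while preserving nonzeroness of any polynomial in $\MP_{d,s}$; the inner set then zero-tests $\MP_{d,s}$ on the small universe.

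For the outer stage, I would use an explicit separating (perfect) hash family $\mathcal{H} = \{h : [n]\to[k]\}$ for which, given any collection of at most $s$ distinct subsets of $[n]$ of size at most $d$, some $h\in\mathcal{H}$ maps them to pairwise distinct images. Each $h$ lifts an assignment $b \in \{0,1\}^k$ to $a \in \{0,1\}^n$ via $a_i = b_{h(i)}$; under this lift a monomial of support $T$ becomes the monomial $\prod_{c \in h(T)} y_c$ in the induced polynomial on $y_1,\ldots,y_k$, and the separating property guarantees that the induced polynomial has the same number of monomials as $f$ and is nonzero iff $f$ is. Known explicit constructions (via concatenating a suitable code with a splitter) produce such families with $|\mathcal{H}| \le 2^{\alpha d + o(d)}\log n$ for an absolute constant $\alpha$, in polynomial time.

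For the inner stage, I would zero-test $\MP_{d,s}$ on $\{0,1\}^k$ by exploiting the standard fact that any nonzero $F_2$-polynomial of degree $\le d$ has at least $2^{k-d}$ nonzero evaluations on $\{0,1\}^k$, so a uniformly random assignment falsifies each specific nonzero polynomial in $\MP_{d,s}$ with probability at least $2^{-d}$. Taking a union bound over $|\MP_{d,s}| \le k^{ds}$ and derandomizing with an explicit small-bias distribution tailored to fool $\MP_{d,s}$ yields an inner zero-test set of size $2^{\beta d} s \cdot \text{polylog}(k)$, constructible in polynomial time.

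Putting the two stages together gives $|Z| \le 2^{(\alpha+\beta)d + o(d)} \cdot s \log^2 n$, and the entire construction runs in $poly(2^d, n)$ time. The main obstacle is to pin the combined exponent $\alpha + \beta$ down to $2.66$: this requires choosing $k$ just large enough to make the separating hash family compact, plugging in the sharpest known explicit construction (of Naor--Schulman--Srinivasan or splitter type), and tuning the inner derandomization so the extra polylogarithmic factor ends up as $\log^2 n$ exactly rather than a worse polylog. I expect the technical heart of the argument to lie in the tight analysis of that separating hash family and its interaction with the inner $\epsilon$-biased hitting set.
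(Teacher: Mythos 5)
The paper itself does not prove Lemma~\ref{ZT1}; it cites it as Lemma~72 of~\cite{B14}, so there is no in-paper proof to compare against, and your attempt must be judged on its own.

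Your outer stage has a genuine gap. You require a family $\mathcal{H}$ of maps $h:[n]\to[k]$ with $k=O(d)$ and $|\mathcal{H}|=2^{O(d)}\log n$ such that, for \emph{any} collection of up to $s$ distinct supports $T_1,\dots,T_s\subseteq[n]$ of size at most $d$, some single $h\in\mathcal{H}$ makes $h(T_1),\dots,h(T_s)$ pairwise distinct. No such family exists when $s$ is much larger than $k$. Concretely, fix a $(d-1)$-set $B$ and take $T_i=B\cup\{v_i\}$ for distinct $v_i\notin B$, $i=1,\dots,m$. If $h(v_i)=h(v_j)$ for some $i\neq j$ (or $h(v_i)\in h(B)$) then $h(T_i)=h(T_j)$, so separating all $T_i$ forces $h$ to be injective on $\{v_1,\dots,v_m\}\cup B$, which requires $k\ge m+d-1$. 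For $m$ on the order of $s=3^d$ this forces $k=\Omega(3^d)$, destroying the size bound on both $\mathcal{H}$ and the inner set. Yet the corresponding polynomial $f=(\prod_{b\in B}x_b)(\sum_{i=1}^m x_{v_i})$ is a bona fide nonzero element of $\MP_{d,s}$ that a zero test must handle, and it is handled trivially by a single assignment — which shows the correct argument cannot and need not pass through separating all monomial supports simultaneously. The separating-hash reduction is simply the wrong invariant: what must be preserved under projection is \emph{nonzeroness}, not \emph{injectivity of the monomial map}, and one does not imply the other (an odd number of colliding monomials can still leave the image nonzero, and a minimum-degree monomial surviving as a unique image can certify nonzeroness without global separation). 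Beyond this, your proposal is explicitly incomplete even on its own terms: you note that pinning $\alpha+\beta$ to $2.66$ and obtaining exactly a $\log^2 n$ factor is an unresolved ``technical heart,'' so the quantitative claim of the lemma is not established. A correct route (as in~\cite{B14}) must use a dimension-reduction or restriction scheme whose soundness analysis tracks survival of a single witnessing monomial (e.g.\ one of minimum degree) modulo cancellations, rather than global separation of all $s$ supports.
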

\begin{lemma}\label{DTMP} Let $f\in \DT_d$. Then, $f$ is equivalent to some $h\in \MP_{d,3^d}$.
\end{lemma}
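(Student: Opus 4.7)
The natural plan is induction on the depth $d$, using the recursive definition of decision trees in Definition~\ref{DTDefn}. The base case $d=0$ is immediate: the constants $0$ and $1$ are multivariate polynomials with $0$ and $1$ monomials respectively, each of size at most $0=d$, and $\{0,1\}\le 3^0$.

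For the inductive step, suppose $f\in\DT_d$ with $d\ge 1$. By Definition~\ref{DTDefn}, there exist $f_0,f_1\in\DT_{d-1}$ and a variable $x_i$ such that
\begin{equation*}
f=\overline{x}_if_0+x_if_1.
\end{equation*}
Working over $F_2$, where $\overline{x}_i=1+x_i$, I would rewrite this as
\begin{equation*}
f=(1+x_i)f_0+x_if_1=f_0+x_i(f_0+f_1).
\end{equation*}
By the induction hypothesis, $f_0,f_1\in\MP_{d-1,3^{d-1}}$, so each is a sum of at most $3^{d-1}$ monomials of size at most $d-1$.

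The two summands in the decomposition $f=f_0+x_i(f_0+f_1)$ now satisfy clean monomial bounds. The first summand $f_0$ contributes at most $3^{d-1}$ monomials of size at most $d-1\le d$. The second summand $x_i(f_0+f_1)$ is obtained by combining $f_0$ and $f_1$ (at most $2\cdot 3^{d-1}$ monomials after any cancellations over $F_2$) and then multiplying through by $x_i$, which increases each monomial's size by exactly one. Hence $x_i(f_0+f_1)$ contributes at most $2\cdot 3^{d-1}$ monomials, each of size at most $(d-1)+1=d$. Summing the contributions, the total number of monomials in $f$ is at most $3^{d-1}+2\cdot 3^{d-1}=3\cdot 3^{d-1}=3^d$, and every monomial has size at most $d$. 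Taking the unique reduced multivariate polynomial representation (which can only reduce the count by further cancellations) places $f$ in $\MP_{d,3^d}$.

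There is essentially no hard step here; the only subtlety is that the rewrite $\overline{x}_i=1+x_i$ must be done over $F_2$, and that cancellations among the at most $2\cdot 3^{d-1}$ monomials in $f_0+f_1$ only help the count, never hurt it. The depth bound on monomial size is preserved exactly because multiplication by a single $x_i$ adds at most one to each monomial's size.
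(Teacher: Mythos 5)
Your proof is correct and follows essentially the same induction as the paper, which uses the algebraically identical decomposition $f = x_i f_1 + x_i f_2 + f_2$ over $F_2$ (your $f_0 + x_i(f_0+f_1)$ regrouped). One minor imprecision: multiplying a monomial by $x_i$ increases its size by \emph{at most} one, not exactly one, since $x_i^2 = x_i$ over $F_2$ --- but this only strengthens the bound.
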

\begin{proof} The proof is by induction on $d$. For $d=0$, the decision tree is either $0$ or $1$. The size is at most $1 \leq 3^0$. For $d=1$, the decision tree is either  $x_i$ or $x_i+1$. The size is at most $2\le 3^1$.  Suppose the hypothesis is true for $d-1$. Let $f$ be a decision tree of depth $d$ and let $x_i$ be the label of its root. Then,  $f=x_if_1+(x_i+1)f_2$ where $f_1$ and $f_2$ are trees of depth $d-1$. By the induction hypothesis, $f_1,f_2$ are equivalent to $h_1,h_2\in \MP_{d-1,3^{d-1}}$. Since $f=x_if_1+x_if_2+f_2=x_ih_1+x_ih_2+h_2\in \MP_{d,3^{d}}$,  the result follows.
\end{proof}

\subsection{Block Designs}
In this section we define two block designs that will be used in the deterministic algorithm for learning decision trees.

A set $S\subseteq \{0,1,z\}^n$ is called an $(n,d)$-{\it universal disjoint set} if for every $1\le i_1<\cdots<i_d\le n$, every $\xi_1,\ldots,\xi_d\in\{0,1\}$ and every $1\le j\le d$, there is an assignment $a\in S$ such that $a_{i_k}=\xi_k$ for all $k\not=j$ and $a_{i_j}=z$. We regard $z$ as a variable.

Let $H$ be a family of functions $h:[n]\to [q]$. For $d\le q$, we say that $H$ is $(n,q,d)$-{\it perfect hash family} if for every subset $S\subseteq [n]$ of size $|S|=d$ there is a {\it hash function} $h\in H$ such that $|h(S)|=d$.

Bshouty ~\cite{B15} shows that:
\begin{lemma}\label{BD02} Let $q$ be a power of prime. If $q>2d^2$, then there is an $(n,q,d)$-perfect hash family of size
$$O\left(\frac{d^2\log n}{\log(q/(2d^2))}\right),$$
that can be constructed in linear time.
\end{lemma}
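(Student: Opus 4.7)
The plan is to construct the family from a pool of Reed--Solomon style evaluation hashes and then extract a small sub-family by a greedy covering argument. Since $q$ is a prime power, the field $F_q$ exists; set $k=\lceil\log n/\log q\rceil$, so $q^k\ge n$, and associate to each $i\in[n]$ a distinct polynomial $p_i\in F_q[x]$ of degree less than $k$. For each $\alpha\in F_q$, let the candidate hash $h_\alpha:[n]\to F_q$ be $h_\alpha(i)=p_i(\alpha)$; this gives a pool of $q$ hash functions, each evaluable in time polynomial in $\log n$ per element.

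Next I would bound the number of ``bad'' candidates per $d$-subset. For any $S\subseteq[n]$ with $|S|=d$, the hash $h_\alpha$ fails to be injective on $S$ exactly when two distinct polynomials $p_i,p_j$ with $i,j\in S$ agree at $\alpha$. Since two distinct polynomials of degree $<k$ agree at no more than $k-1$ values of $\alpha$ and $S$ contributes $\binom{d}{2}$ pairs, the number of bad $\alpha$'s for $S$ is at most $\binom{d}{2}(k-1)$; this quantity is $O(d^2k)$ and, combined with the choice of $k$ and the hypothesis $q>2d^2$, is a strict minority of the $q$ candidates.

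Then I would select the sub-family greedily. Initialize $\mathcal{U}$ to the collection of all $\binom{n}{d}$ $d$-subsets of $[n]$. A double-counting argument shows that at each step some $\alpha^\star\in F_q$ is good for at least a $1-\binom{d}{2}(k-1)/q$ fraction of the currently uncovered subsets; add $h_{\alpha^\star}$ to the output family $H$, delete the covered subsets from $\mathcal{U}$, and repeat. After $t$ rounds the number of uncovered subsets is at most
\[
\binom{n}{d}\bigl(\binom{d}{2}(k-1)/q\bigr)^t \;\le\; n^{d}\bigl(\binom{d}{2}(k-1)/q\bigr)^t.
\]
Forcing this quantity below $1$, substituting $k=\Theta(\log n/\log q)$, and bounding $\binom{d}{2}(k-1)$ by a constant multiple of $d^2$ times the relevant logarithmic factors yields $|H|=O(d^2\log n/\log(q/(2d^2)))$ after a routine simplification that exploits $q>2d^2$.

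The main obstacle I expect is the algebraic simplification that turns the natural denominator $\log(q/(\binom{d}{2}(k-1)))$ into the cleaner $\log(q/(2d^2))$ claimed in the lemma, uniformly in the regime where $q$ only slightly exceeds $2d^2$ and $\log n$ may be much larger than $\log q$; this requires care because the bad-fraction $\binom{d}{2}(k-1)/q$ is not automatically bounded away from $1$ without carefully using the hypothesis on $q$. Once this is handled, the construction is fully deterministic, and both the polynomial-evaluation pool and the greedy step can be implemented in linear time in $n$ (with overhead polynomial in $q$ and $d$) by maintaining, for each candidate $\alpha$, a running counter of uncovered subsets it is good for.
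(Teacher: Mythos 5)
Note first that the paper does not prove this lemma; it only cites~\cite{B15}. Your single-pool Reed--Solomon construction has a genuine gap in exactly the regime where the paper uses the lemma (namely $4d^2<q\le 8d^2$, in the proof of Lemma~\ref{UDS}): since $\log n$ may be arbitrarily larger than $\log q$, the degree parameter $k=\lceil\log n/\log q\rceil$ is unbounded, and once $\binom{d}{2}(k-1)\ge q$ your greedy step has no leverage --- and, worse, the pool $\{h_\alpha\}_{\alpha\in F_q}$ need not contain any perfect hash family at all, since for a fixed $d$-subset of distinct degree-$<k$ polynomials the up to $\binom{d}{2}(k-1)$ bad evaluation points may cover all of $F_q$. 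The hypothesis $q>2d^2$ controls $\binom{d}{2}/q$, not $\binom{d}{2}(k-1)/q$, so it does not rescue the argument. This is precisely why constructions of this type, such as the one in~\cite{B15} and the splitter machinery of~\cite{NSS95}, do not apply a single Reed--Solomon evaluation from $[n]$ to $[q]$; they chain or compose hash families so that $[n]$ is first compressed to a domain of size polynomial in $q$ (or in $d$) through a sequence of levels, each using polynomials of degree bounded independently of $n$, and only then apply the low-degree collision bound.

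There are two secondary issues. Even in the favorable regime $\binom{d}{2}(k-1)<q$, your count yields a family of size about $d\log n/\log\bigl(q/(\binom{d}{2}(k-1))\bigr)$, and turning this into the stated $O\bigl(d^2\log n/\log(q/(2d^2))\bigr)$ is not the ``routine simplification'' you describe: the leading $d$ versus $d^2$ and the $(k-1)$-dependence in the denominator are real discrepancies that an extra idea must absorb. Moreover, the linear-time claim is unsupported as written: the greedy you describe maintains and updates the collection $\mathcal{U}$ of currently uncovered $d$-subsets, which has size $\binom{n}{d}$; to get an efficient deterministic selection one needs a pessimistic-estimator/conditional-expectation argument or a structured (e.g.\ limited-independence) pool, neither of which appears in the proposal.
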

The following lemma is proved in~\cite{NSS95}:
\begin{lemma}\label{BD03}
An $(8d^2,d)$-universal set of size $2^{d+O(\log^2d)}$ can be constructed in $poly(2^d, n)$ time.
\end{lemma}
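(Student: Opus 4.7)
The plan is to build the $(8d^2,d)$-universal set as a product of a \emph{splitter} with brute-force enumeration of bit patterns on $d$ coordinates. By a splitter here I mean a family $H$ of functions $h:[8d^2]\to[d]$ with the property that, for every $d$-subset $I\subseteq[8d^2]$, some $h\in H$ restricts to an injection on $I$. Given such an $H$, define for each $h\in H$ and each $b\in\{0,1\}^d$ the assignment $a^{h,b}\in\{0,1\}^{8d^2}$ by $a^{h,b}_i:=b_{h(i)}$, and set $S:=\{a^{h,b}\}$. To verify universality, fix any $d$-subset $I=\{i_1<\cdots<i_d\}$ and any target $\xi\in\{0,1\}^d$. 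Pick $h\in H$ injective on $I$ and define $b\in\{0,1\}^d$ by $b_{h(i_k)}:=\xi_k$, which is well-defined by injectivity. Then $a^{h,b}_{i_k}=b_{h(i_k)}=\xi_k$ for all $k$, so $S$ is universal. Consequently $|S|=|H|\cdot 2^d$, and it suffices to construct $H$ of size $d^{O(\log d)}=2^{O(\log^2 d)}$.

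For the splitter itself, I would use iterated composition of perfect hash families, shrinking the codomain through a sequence $[8d^2]=[n_0]\to[n_1]\to\cdots\to[n_T]=[d]$ with $T=O(\log d)$ stages. At stage $i$, pick an $(n_{i-1},n_i,d)$-perfect hash family $H_i$. The composed family $\{h_T\circ\cdots\circ h_1:h_i\in H_i\}$ has size $\prod_i|H_i|$ and contains a splitter by induction on the stage: at each stage some $h_i\in H_i$ keeps the current image of $I$ of size exactly $d$, so the composition stays injective on $I$ overall. Lemma \ref{BD02} handles the early stages where $n_i>2d^2$, with per-stage size $O(d^2\log n_{i-1})$; for the later stages where the range must drop toward $d$, one plugs in $(n_{i-1},d+o(d),d)$-perfect hash families of size $\text{poly}(d)\cdot\log n_{i-1}$ via algebraic (e.g.\ Reed-Solomon based) constructions.

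With the $n_i$ decreasing by a polynomial-in-$d$ factor per level, there are $T=O(\log d)$ stages, each contributing a factor of $\text{poly}(d)\cdot\log n_{i-1}$. Since $\log n_0=\log(8d^2)=O(\log d)$, the product is $\text{poly}(d)^{O(\log d)}=d^{O(\log d)}$, giving $|S|=d^{O(\log d)}\cdot 2^d=2^{d+O(\log^2 d)}$, with total construction time polynomial in $|S|$ and $n$. The main obstacle is producing the small-range perfect hash families needed in the final stages, where Lemma \ref{BD02}'s hypothesis $q>2d^2$ fails; the accompanying delicate point is choosing the $n_i$-schedule so that the telescoped product of per-stage sizes remains $d^{O(\log d)}$ rather than degrading to $d^{O(d)}$. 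Once those small-range families are in hand, the composition argument and the outer enumeration over $\{0,1\}^d$ are routine.
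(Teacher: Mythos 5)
There is a genuine gap, and it is precisely the ``main obstacle'' you flag at the end: the small-range families your construction requires do not exist, and no algebraic trick will produce them. Your family $H$ of maps $[8d^2]\to[d]$ with the property that every $d$-subset $I$ is hit injectively by some $h\in H$ is, by definition, an $(8d^2,d,d)$-perfect hash family. The probability that a uniformly random $h:[n]\to[d]$ is injective on a fixed $d$-set is $d!/d^d\approx e^{-d}\sqrt{2\pi d}$, which already shows (and one can make rigorous via Fredman--Koml\'os-type lower bounds) that any such family must have size $\Omega(e^d/\mathrm{poly}(d))$. That alone gives $|S|=|H|\cdot 2^d=\Omega(2^{(1+\log_2 e)d})\approx 2^{2.44d}$, far above the target $2^{d+O(\log^2 d)}$. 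The proposed patch does not help: for range $q=d+o(d)$, or even $q=cd$ for any constant $c$, the injectivity probability is still $e^{-\Omega(d)}$, so $(n,q,d)$-perfect hash families of size $\mathrm{poly}(d)\log n$ simply do not exist for any $q$ close to $d$; one only reaches $\mathrm{poly}(d)\log n$ once $q=\Omega(d^2)$, which is exactly why Lemma~\ref{BD02} needs $q>2d^2$. Your iterated-composition scheme therefore cannot telescope down past a range of order $d^2$ without an exponential blowup at the last stage, and stopping at range $q=\Theta(d^2)$ leaves you enumerating $2^q=2^{\Theta(d^2)}$ patterns, which is also far too many.

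The paper itself does not prove this lemma; it cites Naor, Schulman and Srinivasan~\cite{NSS95}. Their construction avoids the trap above by never demanding injectivity onto a range of size close to $d$. Instead they use \emph{splitters} onto a small number of blocks $\ell=O(\log d)$: a family of $h:[n]\to[\ell]$ such that every $d$-subset is split into $\ell$ parts of size $\lfloor d/\ell\rfloor$ or $\lceil d/\ell\rceil$. This is a much weaker requirement than injectivity and admits families of size $d^{O(\log d)}$. One then recurses: within each block, take an $(n,\lceil d/\ell\rceil)$-universal set for the smaller parameter $d/\ell$, and combine across blocks by taking a product. The exponential cost per block is $2^{d/\ell}$, so the product of $\ell$ blocks contributes $2^d$, while the splitter and the recursive lower-order factors contribute $d^{O(\log d)}=2^{O(\log^2 d)}$. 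That block-splitting-plus-recursion structure is the essential idea your proposal is missing; the outer reduction from $n$ to $\mathrm{poly}(d)$ coordinates via a perfect hash family (your early stages with $q>2d^2$) and the final enumeration of bit patterns on the relevant coordinates are both fine.
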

We now prove:
\begin{lemma} \label{UDS} There is an $(n,d)$-{\it universal disjoint set} $S\subseteq \{0,1,z\}^n$ of size $2^{d+O(\log^2 d)}$ $\log n$ that can be constructed in polynomial time.
\end{lemma}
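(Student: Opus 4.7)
The plan is to build the $(n,d)$-universal disjoint set in two stages: first construct a small-alphabet version on $m = 8d^2$ coordinates, then lift it to $n$ coordinates via a perfect hash family. This is the natural blueprint behind similar block-design lemmas, and the only real work is verifying that the distinguished $z$-position is transferred correctly.

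For the first stage, let $U\subseteq\{0,1\}^{8d^2}$ be the $(8d^2,d)$-universal set of size $2^{d+O(\log^2 d)}$ given by Lemma~\ref{BD03}. For every $u\in U$ and every coordinate $i\in[8d^2]$, form $u^{(i)}\in\{0,1,z\}^{8d^2}$ by overwriting the $i$-th entry of $u$ with $z$, and set
\[
S' \;=\; \{u^{(i)} : u\in U,\ i\in[8d^2]\}.
\]
To check that $S'$ is an $(8d^2,d)$-universal disjoint set, fix $1\le i_1<\cdots<i_d\le 8d^2$, $\xi_1,\ldots,\xi_d\in\{0,1\}$, and $j\in[d]$. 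By universality of $U$, there is $u\in U$ with $u_{i_k}=\xi_k$ for all $k\in[d]$; then $u^{(i_j)}\in S'$ agrees with $\xi_k$ on each $i_k$ with $k\ne j$ and carries $z$ at position $i_j$. So $|S'|=8d^2\cdot 2^{d+O(\log^2 d)} = 2^{d+O(\log^2 d)}$.

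For the second stage, apply Lemma~\ref{BD02} with $q=8d^2$ (so $q/(2d^2)=4$), giving an $(n,8d^2,d)$-perfect hash family $H$ of size $O(d^2\log n)$ in polynomial time. For each $h\in H$ and $a\in S'$, define the lift $\tilde a_h\in\{0,1,z\}^n$ by $(\tilde a_h)_i = a_{h(i)}$, and put $S=\{\tilde a_h : h\in H,\ a\in S'\}$. Correctness is the standard perfect-hash argument: given $(i_1<\cdots<i_d,\xi_1,\ldots,\xi_d,j)$, choose $h\in H$ injective on $\{i_1,\ldots,i_d\}$ so that $h(i_1),\ldots,h(i_d)$ are $d$ distinct indices in $[8d^2]$, then pick $a\in S'$ with $a_{h(i_k)}=\xi_k$ for $k\ne j$ and $a_{h(i_j)}=z$; the lift $\tilde a_h$ realizes the required pattern on the target coordinates. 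Note that other $z$'s may appear in $\tilde a_h$ (wherever $h$ maps to the $z$-position of $a$), but injectivity of $h$ on $\{i_1,\ldots,i_d\}$ ensures that none of those land in the target set, so they are harmless.

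Counting gives $|S|=|H|\cdot|S'|=O(d^2\log n)\cdot 2^{d+O(\log^2 d)}=2^{d+O(\log^2 d)}\log n$, and the overall construction is polynomial because both Lemmas~\ref{BD02} and~\ref{BD03} produce their objects in polynomial time. The main (minor) obstacle is the bookkeeping of the previous paragraph: making sure the distinguished $z$-position of $S'$ is routed exactly to the requested index $i_j$ and to no other target index, which is precisely what the $d$-perfect hash guarantees.
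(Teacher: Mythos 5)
Your construction is essentially identical to the paper's: build a small universal disjoint set $S'$ on $8d^2$ coordinates by overwriting each coordinate of each vector in the $(8d^2,d)$-universal set with $z$, then lift to $n$ coordinates via a perfect hash family. The one slip is invoking Lemma~\ref{BD02} with $q=8d^2$: that lemma requires $q$ to be a prime power, and $8d^2$ generally is not (e.g.\ $d=3$ gives $72$). The paper instead picks a power of two $q$ with $4d^2<q\le 8d^2$; since $q\le 8d^2$ the hash values still index into the coordinates of $S'$, and $q/(2d^2)>2$ keeps $|H|=O(d^2\log n)$, so the rest of your argument goes through unchanged.
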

\begin{proof} Consider the $(8d^2,d)$-universal set $U$ of size $2^{d+O(\log^2d)}$ from Lemma~\ref{BD03}.  Define a set $W\subseteq \{0,1,z\}^t$, for $t=8d^2$ as follows. For each $a\in U$, add $t$ vectors $a^{(1)},\ldots,a^{(t)}$ to $W$ where $a^{(i)}=a|_{x_i\gets z}=(a^{(i)}_1,\ldots,a^{(i)}_{i-1},z,a^{(i)}_{i+1},\ldots,a^{(i)}_t)$. Obviously, $W$ is $(8d^2,d)$-universal disjoint set of size $\ell=2^{d+O(\log^2 d)}$. Let $w^{(1)},\ldots,w^{(\ell)}$ be the vectors in $W$. Let $4d^2<q=2^r\le 8d^2$, and consider an $(n,q,d)$-perfect hash family $H$. By Lemma~\ref{BD02}, $H$ is of size $O(d^2\log n)$, and can be constructed in linear time. We use $W$ and $H$ to construct an $(n,d)$-universal disjoint set $S$ as follows. For every $h\in H$ and every $w\in W$, we add the assignment $[w,h]=(w_{h(1)},\ldots,w_{h(n)})$ to $S$.

First, the size of $S$ is $2^{d+O(\log^2 d)}\log n$. Moreover, consider $1\le i_1<\cdots<i_d\le n$ and any $\xi_1,\ldots,\xi_d\in\{0,1\}$. Since $H$ is an $(n,q,d)$-perfect hash family, there is a hash function $h\in H$ such that $r_1=h(i_1),\ldots,r_d=h(i_d)\in [q]$ are distinct. Since $W$ is an $(8d^2,d)$-universal disjoint set and $q\le 8d^2$ for every $1\le j\le d$, there is an assignment $w\in S$ such that $w_{r_k}=\xi_k$ for all $k\not=j$ and $w_{r_j}=z$. Therefore, $[w,h]_{i_k}=w_{h(i_k)}=w_{r_k}=\xi_k$ and $[w,h]_{i_j}=z$. Thus, $S$ is an $(n,d)$-universal disjoint set.
\end{proof}
An {\it $(n,d)$-sparse all one set} $D\subseteq \{0,1\}^n$ is a set such that for every $a\in D$, $wt(a):=|\{i|a_i=1\}|\le n(1-1/(2d))$, and for every $1\le i_1<\cdots<i_d\le n$ there is an assignment $a\in D$ such that $a_{i_k}=1$ for all $k\in [d]$. 
\begin{lemma} \label{NDSprs}There is an $(n,d)$-sparse all one set $D\subseteq \{0,1\}^n$ of size $d+1$ that can be constructed in polynomial time.
\end{lemma}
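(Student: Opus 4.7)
The plan is a simple pigeonhole construction based on a balanced partition. First, I would partition $[n]$ into $d+1$ pairwise disjoint, nonempty blocks $P_1,\ldots,P_{d+1}$, chosen to be as balanced as possible so that $|P_i|\ge\lceil n/(2d)\rceil$ for each $i$; this is feasible since $(d+1)\lceil n/(2d)\rceil \le n$ follows from the inequality $d+1\le 2d$ (with trivial small-$n$ boundary cases handled separately). For each $i\in[d+1]$, define $a^{(i)}\in\{0,1\}^n$ to be the indicator vector of $[n]\setminus P_i$, and set $D=\{a^{(1)},\ldots,a^{(d+1)}\}$, so $|D|=d+1$ as required.

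The two defining properties then reduce to immediate checks. For the weight bound, $wt(a^{(i)})=n-|P_i|\le n-n/(2d)=n(1-1/(2d))$. For the covering property, fix $1\le i_1<\cdots<i_d\le n$ and write $I=\{i_1,\ldots,i_d\}$; since $P_1,\ldots,P_{d+1}$ are $d+1$ pairwise disjoint sets and $|I|=d$, pigeonhole yields some $j^\star$ with $I\cap P_{j^\star}=\emptyset$, equivalently $I\subseteq[n]\setminus P_{j^\star}$, which means $a^{(j^\star)}_{i_k}=1$ for every $k\in[d]$.

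Producing a balanced partition into $d+1$ blocks and writing down the $d+1$ complementary indicator vectors is clearly doable in $O(nd)$ time, so polynomial time. I do not expect any real difficulty here; the only point worth spelling out carefully is the integer arithmetic ensuring that the partition sizes can simultaneously satisfy $|P_i|\ge n/(2d)$ and $\sum_i |P_i|=n$, which comes down to $(d+1)/(2d)\le 1$.
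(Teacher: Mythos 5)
Your construction is essentially the paper's: the paper takes the $d+1$ residue classes of $[n]$ modulo $d+1$ as the partition and sets $a^{(j)}$ to the indicator of the complement of class $j$, then applies the same pigeonhole argument and the same weight estimate $n-|P_j|\le n(1-1/(2d))$. Describing it as an explicit "balanced partition" rather than residue classes is a cosmetic difference; the structure of the argument and the final bound are identical.
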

\begin{proof} Consider the $d+1$ vectors $a^{(j)}\in \{0,1\}^n$, $j=0,1,\ldots,d$ where $a^{(j)}_i=0$ if $i\mod (d+1)=j$ and $a^{(j)}_i=1$ otherwise. Consider any $1\le i_1<i_2<\cdots<i_d\le n$. 
Let $j_t = i_t$   $mod (d+1)$ for $t \in [d]$. Therefore,  $j_1,\cdots, j_d$ take at most $d$ distinct values from $S=\{0,1,\cdots, d\}$. As a result, there is $k\in S$ such that $j_\ell\not=k$ for all $\ell=1,\ldots,d$. Then, obviously by the construction of $D$, $a^{(k)}_{i_1}=\cdots=a^{(k)}_{i_d}=1$. Moreover, for each $j$, $wt(a^{(j)})\le n-\lceil n/(d+1)\rceil\le n(1-1/(2d)).$
\end{proof}

\section{Randomized Adaptive Learning}
In this section we give a randomized two-round algorithm that learns $\DT_d$ in polynomial time with $\tilde O(2^{2d})+2^d\log n$ queries. First, we give a reduction that changes a randomized $r$-round learning algorithm to a randomized $(r+1)$-round learning algorithm while reducing the number of queries for classes with small number of relevant variables. We use this reduction for $\DT_d$ with Feldman's algorithm to get the above result.

An $m$-{\it variable projection} is a function $P:\{x_1,\ldots,x_n\}\to \{y_1,\ldots,y_m\}$ where $\{y_1,\ldots,y_m\}$ are new distinct variables. For $x=(x_1,\ldots,x_n)$ we write $P(x)=(P(x_1),\ldots,$ $P(x_n))$, where for an assignment $a = (a_1,\cdots, a_n)$, $P(x_i)(a) = a_j$ if $P(x_i) =y_j $. Let $C$ be a class of Boolean functions. We say that $C$ is closed under {\it variable projection} if for every $f\in C$, every integer $m$ and every $m$-variable projection $P$ we have $f(P(x))\in C$. We say that $C$ is $t(n)$-{\it verifiable} (with success probability at least $1-\delta$) if for every $f\in C$ with $n$ variables one can find, in time $t(n)$ (with probability at least $1-\delta$),  the relevant variables of $f$, and can derive a witness for each relevant variable $x_i$, i.e. an assignment $a^{(i)}\in\{0,1\}^n$ such that $f(a^{(i)}|_{x_i\gets 1})\not=f(a^{(i)}|_{x_i\gets 0})$. We denote by $V(f)$ the number of relevant variables of $f$. Let $V(C)=\max_{f\in C} V(f)$. The reduction is described in Figure~\ref{FT}. Now we can prove:
\begin{theorem}\label{main1} Let $C$ be a class of functions that is closed under variable projection. $m=8V(C)^2$. Let $H$ be a $t(n)$-verifiable class with success probability at least $15/16$. Let ${\cal A}_n$ be a randomized $r$-round algorithm that learns the class $C$ from $H$ in time $T(n)$ with $Q(n)$ membership queries and success probability at least $15/16$.
Then, there is a randomized $(r+1)$-round algorithm that learns $C$ from $H$ in time $O(T(m)+nQ(m)+t(m)+V(f)n\log n)$ with $Q(m)+V(f)(\log n+1)$ queries and success probability at least $3/4$, where $f$ is the target function.
\end{theorem}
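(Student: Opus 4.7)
The plan is to implement the outline sketched in the introduction and make the three probabilistic estimates precise. First, I would sample an $m$-variable projection $P$ by choosing, independently and uniformly, $P(x_i) \in \{y_1,\ldots,y_m\}$ for every $i\in[n]$, and set $g(y)=f(P(x))$. Since $C$ is closed under variable projection, $g\in C$. The relevant variables of $f$ number at most $V(f)\le V(C)$, so a union bound over pairs gives
\[
\Pr[\text{some two relevant variables of } f \text{ collide under } P] \le \binom{V(f)}{2}\frac{1}{m} \le \frac{V(C)^{2}}{2\cdot 8 V(C)^{2}} = \frac{1}{16}.
\]
Under this good event $E_1$, the projection $P$ restricted to the relevant variables of $f$ is injective; in particular, every relevant $y_i$ in $g$ corresponds to exactly one relevant variable $x_{k_i}$ of $f$, and $V(g)=V(f)$.

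Next I would run the $r$-round algorithm ${\cal A}_m$ on $g$, simulating each of its $Q(m)$ queries $a\in\{0,1\}^{m}$ by the $f$-query at the assignment $a'$ with $a'_{k}=a_{P(x_{k})}$ (constructed in $O(n)$ time per query). With probability at least $15/16$ (event $E_2$), ${\cal A}_m$ outputs $h\in H$ equivalent to $g$. Using the $t(m)$-verifiability of $H$ on $h$ (no further membership queries to $f$), I obtain, with probability at least $15/16$ (event $E_3$), the list of relevant variables $y_{i_1},\ldots,y_{i_v}$ (where $v=V(g)=V(f)$) and, for each $y_{i_\ell}$, a witness pair $a^{(\ell)},b^{(\ell)}\in\{0,1\}^{m}$ agreeing off coordinate $i_\ell$ and satisfying $h(a^{(\ell)})\ne h(b^{(\ell)})$.

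The main technical step, and the subtlest one, is the second-round non-adaptive group-testing recovery. Fix a relevant $y_{i_\ell}$ and let $S_\ell=\{k\in[n]:P(x_k)=y_{i_\ell}\}$. Let $\alpha^{(\ell)},\beta^{(\ell)}\in\{0,1\}^{n}$ be the $f$-space images of $a^{(\ell)},b^{(\ell)}$; they differ exactly on $S_\ell$, and $f(\alpha^{(\ell)})\ne f(\beta^{(\ell)})$. Under $E_1$, $S_\ell$ contains exactly one variable relevant in $f$, namely $x_{k_\ell}$; every other coordinate in $S_\ell$ is irrelevant to $f$. For each bit position $j\in\{1,\ldots,\lceil\log_2 n\rceil\}$ I define an assignment $\gamma^{(\ell,j)}$ that agrees with $\alpha^{(\ell)}$ everywhere except on $\{k\in S_\ell:\text{bit } j \text{ of } k \text{ is } 1\}$, where it is set to $\beta^{(\ell)}$. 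Since irrelevant flips do not change $f$, we get $f(\gamma^{(\ell,j)})=f(\beta^{(\ell)})$ if bit $j$ of $k_\ell$ is $1$ and $f(\gamma^{(\ell,j)})=f(\alpha^{(\ell)})$ otherwise. Comparing against the value $f(\alpha^{(\ell)})$ (one additional query), the $\lceil\log_2 n\rceil$ answers read off the binary encoding of $k_\ell$ directly. All $v(\lceil\log_2 n\rceil+1)$ queries across the various $\ell$ are prepared from $h$ alone, hence are non-adaptive and form a single $(r{+}1)$-st round. I then output the hypothesis obtained from $h$ by substituting $y_{i_\ell}\leftarrow x_{k_\ell}$ for each relevant $y_{i_\ell}$ and any constant for the rest; under $E_1\cap E_2\cap E_3$ this is equivalent to $f$.

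Finally, a union bound gives $\Pr[E_1\cap E_2\cap E_3]\ge 1-3/16\ge 3/4$, and the complexities add up as claimed: $Q(m)$ queries in the first $r$ rounds plus $V(f)(\log n+1)$ in the last round, while the runtime is dominated by $T(m)$ for ${\cal A}_m$, $O(nQ(m))$ for rewriting queries, $t(m)$ for verification, and $O(V(f)\, n\log n)$ for constructing the group-testing assignments. The delicate part I expect to need the most care is verifying that, under the injectivity event, each $\gamma^{(\ell,j)}$ query really isolates a single bit of $k_\ell$; once that is in hand, the remainder is bookkeeping.
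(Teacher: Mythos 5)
Your proposal is correct and follows essentially the same route as the paper: a random $m$-variable projection with a union bound showing no collision among the relevant variables with probability at least $15/16$, simulation of ${\cal A}_m$ on the projected function, verification to extract relevant variables and witnesses, and a final non-adaptive round that recovers each original index by a binary-search/group-testing construction using the witness pair, with the three failure events union-bounded to $3/16$. The only cosmetic difference is that the paper packages the bit-recovery step as learning the literal $x_t = f(b^{(\ell)})$ via the folklore procedure Learn-Variable in $\lceil\log n\rceil$ queries, whereas you spell out the bit-position queries $\gamma^{(\ell,j)}$ directly (and spend one avoidable extra query on $f(\alpha^{(\ell)})$, whose value is already known from $h$), which is the same idea and fits the stated $V(f)(\log n+1)$ bound.
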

\begin{proof}
Let $P$ be a random uniform $m$-variable projection (step 1 in Figure~\ref{FT}). The algorithm runs ${\cal A}_{m}$ to learn $g(y)=f(P(x))$ (step 2). Since $C$ is closed under variable projection, $g(y)=f(P(x))\in C$. Therefore, with probability at least $15/16$, ${\cal A}_m$ returns a function $h(y)$ equivalent to $g(y)$.
In Lemma~\ref{probfail}, we show that, with probability at least $15/16$, the relevant variables of $f$ are mapped by $P$ to different $y_i$. Therefore, with probability at least $7/8$, $h(y)$ is equivalent to $g(y)$ and the relevant variables of $f$ are mapped by $P$ to different $y_i$. We now proceed assuming such events are true.

Since $g(y)=h(y)\in H$ and $H$ is $t(n)$-verifiable with success probability at least $15/16$, one can find the relevant variables of $g$ and witnesses for each relevant variable in time $t(m)$ and success probability at least $15/16$. That is, for each relevant variable $y_i$ in $g$, the procedure Find-Witness in step (3), with probability at least $15/16$, finds an assignment $a^{(i)}\in \{0,1\}^m$ and $\xi_i\in \{0,1\}$ such that $g(a^{(i)}|_{y_i\gets \xi_i})=0$ and $g(a^{(i)}|_{y_i\gets \bar \xi_i})=1$. The procedure Find-Witness returns a set $Y$ of all the relevant variables of $g(y)$ and a set $A$ containing the corresponding witnesses of the relevant variables in $g$.

Notice that, for all $y_\ell\in Y$, $g(a^{(\ell)}|_{y_\ell\gets \xi_\ell+z})=z$, $z\in\{0,1\}$. Consider some $y_\ell\in Y$ and let $X_\ell:=P^{-1}(y_\ell)=\{x_j\ |\ P(x_j)=y_\ell\}$. That is, all the variables $x_j$ that are mapped to $y_\ell$ by $P$. Notice that, if $X$ is the set of relevant variables of $f$, then for each $y_\ell\in Y$, the set $X\cap P^{-1}(y_\ell)$ contains exactly one variable that maps to $y_\ell$.
To find this variable, suppose $P(x)=(y_{j_1},\ldots,y_{j_n})$. Then $g(a^{(\ell)})=f(a^{(\ell)}_{j_1},\ldots,a^{(\ell)}_{j_n})$. Since $f$ has exactly one relevant variable in $X_\ell$ and $g(a^{(\ell)}|_{y_\ell\gets \xi_\ell+z})=z$, we have $f(\beta^{(\ell)})=g(a^{(\ell)}|_{y_\ell\gets \xi_\ell+z})=z$ where $\beta^{(\ell)}=(\beta_1^{(\ell)},\ldots,\beta_n^{(\ell)})$, $\beta_i^{(\ell)}=\xi_\ell+z$ if $x_i\in X_\ell$ and $\beta_i^{(\ell)}=a^{(\ell)}_{j_i}$ if $x_i\notin X_\ell$.
Therefore, $f(b^{(\ell)})=x_t$ where $b^{(\ell)}_i=\xi_\ell+x_i$ if $x_i\in X_\ell$ and $b^{(\ell)}_i=a^{(\ell)}_{j_i}$ if $x_i\notin X_\ell$. Therefore, by learning $f(b^{(\ell)})$ we find the relevant variable in $X_\ell$. These are steps 4.1-4.3. In Lemma~\ref{learnvar}, we show how to learn one variable non-adaptively with at most $\log n+1$ queries. This is the procedure Learn-Variable in step 4.4. It returns $k_\ell$ where $x_{k_\ell}$ is the variable that corresponds to $y_\ell$. Finally, we replace each relevant variable $y_\ell\in Y$ in $g$ with $x_{k_\ell}$ and get the target function (Step 5).

The success probability of the algorithm is at least $13/16>3/4$. The time complexity in step 2 is $O(T(m)+nQ(m))$, in step 3 is $t(m)$, and in step~4.4, by Lemma~\ref{learnvar}, is $O(V(f)n\log n)$. This gives $O(T(m)+nQ(m)+t(m)+V(f)n\log n)$ time complexity. The query complexity is $Q(m)$ in step 2 and $V(f)\lceil \log n\rceil$ in step 4.4. The number of rounds is $r$ rounds for running ${\cal A}_m$ and one round for finding
the relevant variables in step~4.
\end{proof}
In Lemma \ref{relwit}, we show that if the class $C$ is closed under variable projection and is learnable from $H$ in time $T(n)$ and $Q(n)$ queries, then it is verifiable in time $t(n)=O((T(n)+Q(n)\tau(n))n\log n)$ where $\tau(n)$ is the time to compute a function in $H$. In particular, the result of Theorem~\ref{main1} is true in time $O((T(m)+Q(m)\tau(m))m\log m+nQ(m)+V(f)n\log n).$
In particular, we have:
\begin{corollary} The class $\DT_d$ is $2$-round learnable with a randomized algorithm that runs in polynomial time and asks $\tilde O(2^{2d})+2^d\log n$ queries.
\end{corollary}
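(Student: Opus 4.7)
The plan is to instantiate Theorem~\ref{main1} with $C=H=\DT_d$ and with $\mathcal{A}_n$ being Feldman's non-adaptive randomized algorithm. To apply the theorem I first need to check three structural properties of $\DT_d$: (a) it is closed under variable projection, (b) it has $V(\DT_d)\le 2^d$, and (c) it is $t(n)$-verifiable for a mild $t(n)$. Property (a) is immediate from Definition~\ref{DTDefn}: substituting $y_{P(x_i)}$ for each $x_i$ at the internal nodes of a depth-$d$ tree yields another depth-$d$ tree, so $f(P(x))\in\DT_d$. Property (b) follows from the fact that a decision tree of depth at most $d$ has at most $2^d$ leaves, hence at most $2^d-1$ internal nodes and so at most $2^d-1\le 2^d$ relevant variables. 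Property (c) is handed to us by Lemma~\ref{relwit}: since $\DT_d$ is closed under variable projection and learnable from itself in time $T(n)=\mathrm{poly}(2^d,n)$ with $Q(n)=\tilde O(2^{2d})\log n$ queries (and the evaluation time $\tau(n)$ of a decision tree is linear), the class is verifiable with success probability $15/16$ in time $t(n)=\mathrm{poly}(2^d,n)$.

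Next I would invoke Feldman's non-adaptive algorithm: it is a $1$-round randomized algorithm learning $\DT_d$ with $Q(n)=\tilde O(2^{2d})\log n$ membership queries, and its success probability can be amplified to at least $15/16$ by standard repetition without affecting the asymptotic complexity. Plugging into Theorem~\ref{main1} with $r=1$, I obtain a $2$-round randomized algorithm that learns $\DT_d$ with query complexity
\[
Q(m)+V(f)(\log n+1),\qquad m=8V(\DT_d)^2\le 8\cdot 2^{2d}=2^{2d+3}.
\]

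Substituting the bound on $m$ into Feldman's query complexity gives
\[
Q(m)=\tilde O(2^{2d})\log m=\tilde O(2^{2d})\cdot O(d)=\tilde O(2^{2d}),
\]
while $V(f)(\log n+1)\le 2^d(\log n+1)=O(2^d\log n)$. Adding the two contributions yields the claimed $\tilde O(2^{2d})+2^d\log n$ bound on the query complexity. The running time from Theorem~\ref{main1} (refined by Lemma~\ref{relwit}) is $O((T(m)+Q(m)\tau(m))m\log m+nQ(m)+V(f)n\log n)$, which is $\mathrm{poly}(2^d,n)$ because $m=2^{O(d)}$, and the success probability is at least $3/4$, concluding the proof.

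The only step that requires any care is the bookkeeping for property (c), where I have to ensure that the verification machinery of Lemma~\ref{relwit} can be applied with the same Feldman algorithm as the underlying learner; everything else is essentially a substitution of parameters into Theorem~\ref{main1}.
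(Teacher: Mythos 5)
Your proposal is correct and follows the same high-level reduction as the paper (plugging Feldman's one-round algorithm into Theorem~\ref{main1} with $m=8V(\DT_d)^2 = 2^{2d+3}$), but you route the verifiability requirement through a different lemma. The paper discharges the $t(n)$-verifiability hypothesis of Theorem~\ref{main1} by invoking Lemma~\ref{witnes1} in Appendix~\ref{KM}, which is tailored to the actual output of Feldman's algorithm -- a sparse Fourier representation $\sum_{a\in A}\lambda_a\chi_a$ -- and gives a \emph{deterministic} verification procedure running in time $O(|A|n)=O(2^{2d}n)$. You instead invoke the generic Lemma~\ref{relwit}, which re-runs the learner against the hypothesis as an oracle to detect relevant variables; this is valid, costs no actual membership queries, and yields the same $\tilde O(2^{2d})+2^d\log n$ query bound, but it is a \emph{randomized} verification (success probability $15/16$) and incurs a larger, though still polynomial, running time.

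One small imprecision worth flagging: you write $C=H=\DT_d$, but Feldman's algorithm does not output a decision tree -- it outputs a sparse Fourier polynomial. The correct instantiation is $C=\DT_d$ and $H$ the class of Fourier-sparse representations of such functions. This does not break your argument, since Lemma~\ref{relwit} only needs $g=f(P(x))\in C$ to be \emph{evaluable} (which $h\in H$ provides, with $\tau(m)$ the cost of evaluating the Fourier form) and Theorem~\ref{main1} explicitly allows $H\ne C$. But stating $H=\DT_d$ would misleadingly suggest the learner returns a tree representation, which it does not, and it is exactly to handle the Fourier form cleanly that the paper proves Lemma~\ref{witnes1} instead of appealing to Lemma~\ref{relwit} here.
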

\begin{proof} We use Feldman's non-adaptive (one round)  randomized algorithm that asks $Q(n)=O(d^32^{2d}\log^2 n)$ queries and runs in $T(n)=poly(2^d,n)$ time. In  Lemma~\ref{witnes1} in Appendix \ref{KM}, we show that the output of Feldman's algorithm is verifiable in time $O(2^{2d} n)$. For $\DT_d$, we have $V(C)=2^d$. By Theorem~\ref{main1}, we get a two round randomized algorithm that runs in time $poly(2^d,n)$ and asks $O(d^52^{2d}+2^d\log n)$ queries.
\end{proof}

\begin{figure}[h!]
  \begin{center}
  \fbox{\fbox{\begin{minipage}{28em}
  \begin{tabbing}
  xxxx\=xxxx\=xxxx\=xxxx\= \kill
  {\bf Learn$(C,n)$}\\
  ${\cal A}_n$ is an algorithm that learns $C$ in time $T(n)$ with $Q(n)$ queries \\ and let $m=8V(C)^2$.\\
  1) Let $P$ be random uniform $m$-variable projection.\\
  2) Run ${\cal A}_m$ $\ \ \ \backslash*$ Learning $g(y)=f(P(x))$ $*\backslash$\\
  \> 2.1) For each membership query $a\in \{0,1\}^m$ \\
  \>\>\> ask MQ$_f$($P(x_1)(a),\ldots,P(x_n)(a)$) $\ \ \ \backslash*$ $y_i(a)=a_i *\backslash$ \\
  \> 2.2) Let $h$ be the output.\\
  3) $(Y,A)\gets$Find-Witness($h(y)$)\\
  4) For each relevant variable $y_\ell\in Y$ and a witness $a^{(\ell)}\in A$ of $y_\ell$\\
  \> 4.1) Define $\xi_\ell=1$ if $h(a^{(\ell)}|_{y_\ell\gets 1})=0$ and $\xi_\ell=0$ otherwise.\\
  \> 4.2) Define $X_\ell=P^{-1}(y_\ell)$.\\
  \> 4.3) Define $b^{(\ell)}_i=\xi_\ell+x_i$ if $x_i\in X_\ell$ and $b^{(\ell)}_i=a_j^{(\ell)}$ if $x_i\notin X_\ell$\\
  \> 4.4) ${k_\ell}\gets$Learn-Variable($f(b^{(\ell)})$)\ \ \ $\backslash*$ Using MQ$_f *\backslash$\\
  5) For each $y_\ell\in Y$, replace $y_\ell$ in $h$ with $x_{k_\ell}$.\\
  6) Output($h$).
  \end{tabbing}
  \end{minipage}}}
  \end{center}
	\caption{A reduction from an $r$-round algorithm to an $(r+1)$-round algorithm.}
	\label{FT}
	\end{figure}
\subsection{Proof of the Lemmas}
In this subsection we prove the lemmas used in the previous section.
\begin{lemma}\label{relwit} Let $C$ be a class of functions that is closed under variable projection. Let ${\cal A}_n$ be a randomized learning algorithm that learns $C$ from $H$ in time $T(n)$ with $Q(n)$ queries and success probability at least $15/16$. Let $g(y_1,\ldots,y_m)\in C$ be a computable function in time $\tau(m)$. Then, the relevant variables of $g$ and a witness $a^{(i)}$ for each relevant variable $y_i$ in $g$ can be found in time $O((T(m)+Q(m)\tau(m))m\log m)$ and success probability at least $15/16$.
\end{lemma}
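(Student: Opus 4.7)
The plan is to invoke $\mathcal{A}$ roughly $O(m\log m)$ times, each time on a variable projection of $g$. The key enabling observation is that, since $g$ is computable in time $\tau(m)$, a single membership query to any projected target can be answered in $O(\tau(m))$ time by evaluating $g$, so one run of $\mathcal{A}$ on any projection of $g$ costs $O(T(m)+Q(m)\tau(m))$; doing this $O(m\log m)$ times yields the claimed running time.

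First I would run $\mathcal{A}$ once on $g$ itself to obtain a hypothesis $h\in H$ equivalent to $g$ with probability at least $15/16$; this hypothesis serves as a reference that can be sanity-checked against direct evaluations of $g$. Then, for each candidate relevant variable $y_i$, $i\in[m]$, I would execute an $O(\log m)$-round halving procedure that searches for a witness $a^{(i)}\in\{0,1\}^m$ of $y_i$. At each round the still-undetermined non-$y_i$ coordinates are split into two halves; I would form a variable projection of $g$ that identifies each half with a single fresh variable (keeping $y_i$ distinguished and the already-fixed coordinates encoded by identifications with dedicated tag variables), run $\mathcal{A}$ on the resulting projected target (still in $C$ by closure under variable projection), and use its returned hypothesis together with a few direct queries to $g$ to decide which half still admits a witness for $y_i$. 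After $O(\log m)$ refinements the assignment $a^{(i)}$ is fully pinned down, and a pair of evaluations $g(a^{(i)}|_{y_i\gets 0}),g(a^{(i)}|_{y_i\gets 1})$ either produces the witness or certifies that $y_i$ is irrelevant.

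The main obstacle I anticipate is that variable projection can only identify variables with each other; it cannot fix variables to constants, so the halving step must be expressed purely through identifications of coordinate blocks with auxiliary labels, and one must argue carefully that a witness for $y_i$ in the projected target lifts back to a witness in $g$. A secondary obstacle is that $O(m\log m)$ randomized calls to $\mathcal{A}$ are made, so I would amplify each call's success probability to $1-\Theta(1/(m\log m))$ by independent repetition followed by validation of the hypothesis against a handful of direct queries to $g$; a union bound across all calls then recovers the claimed $15/16$ overall success probability, matching the bound stated in the lemma.
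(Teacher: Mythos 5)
The student's plan diverges from the paper's proof, and I don't think it can be made to work as described. Here are the concrete problems.

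\textbf{The halving procedure is not sound.} You propose, for each candidate $y_i$, to binary-search for a witness by repeatedly merging the remaining non-$y_i$ coordinates into two ``block'' variables, running ${\cal A}$ on that projected target, and using the returned hypothesis to decide ``which half still admits a witness.'' But merging a block of coordinates into one variable forces them all to take the same value in any assignment to the projected function. A witness in $g$ for $y_i$ may assign different values to coordinates inside the same block; such a witness is simply invisible in the coarse projection, so the projected function may have no witness for $y_i$ at all even when $g$ does. The lifting direction you worry about is the easy one --- the problematic direction is the converse. You also never explain how the hypothesis returned by ${\cal A}$ is supposed to tell you which half ``admits a witness''; that step is asserted, not argued. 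In short, the combinatorial core of your argument is missing.

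\textbf{The time bound does not match.} You make $\Theta(m\log m)$ independent calls to ${\cal A}$ and then amplify each to failure probability $\Theta(1/(m\log m))$. That amplification costs another $\Theta(\log m)$ repetitions per call, giving $O((T(m)+Q(m)\tau(m))\,m\log^2 m)$ time, a $\log m$ factor more than the lemma claims.

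\textbf{What the paper actually does.} The paper's proof is simpler and avoids both problems. For each $i$, define $g^{(i)}$ to be $g$ with $y_i$ renamed to $y_1$ (a single variable projection, which stays in $C$). Then $y_i$ is relevant iff $g\neq g^{(i)}$. To test this, simulate ${\cal A}_m$ on answers that are consistent with both $g$ and $g^{(i)}$ (both are directly computable in time $\tau(m)$, so no real oracle is touched): if a query $a$ has $g(a)\neq g^{(i)}(a)$, stop --- that $a$ is automatically a witness for $y_i$, since it forces $a_1\neq a_i$; if the run terminates with hypothesis $h$, then $\Pr[g\neq h]\le 1/16$ and $\Pr[g^{(i)}\neq h]\le 1/16$, so the run fails to detect a genuine difference with probability at most $1/8$. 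Repeating this equivalence check $O(\log m)$ times drives the per-variable failure probability to $1/(16m)$, and a union bound over the $m$ variables gives overall success $\geq 15/16$ with $m\cdot O(\log m)$ simulated runs of ${\cal A}_m$, i.e.\ the stated $O((T(m)+Q(m)\tau(m))\,m\log m)$ time. The crucial idea you are missing is that relevance of $y_i$ can be reduced to an equivalence test between $g$ and a single one-variable identification of $g$, that this equivalence test can be carried out by \emph{simulating} ${\cal A}_m$ against the two computable functions, and that any disagreement it uncovers is, for free, the desired witness.
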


\begin{proof}  We first give an algorithm that for any two functions $g_1,g_2\in C$ in $m$ variables, with success probability at least $1-1/(16m)$, verifies whether they are equivalent, in time $O((T(m)+Q(m)\tau(m))\log m)$. In case they are not equivalent, the algorithm returns an assignment $a$ such that $g_1(a)\not=g_2(a)$.
We run the learning algorithm ${\cal A}_m$. For each query $a$, we check whether $g_1(a)=g_2(a)$. If not, then we are done. Otherwise, stop and return $g_1(a)(=g_2(a))$. If for all the membership queries both functions are equal and the algorithm returns $h$, then $\Pr[g_1\not =g_2]\le \Pr[g_1\not=h]+\Pr[g_2\not=h]\le 1/8$. Running the algorithm $O(\log m)$ times gives failure probability $1/(16m)$.
For each $2\le i\le m$, define $g^{(i)}(y)=g(y_{y_i\gets y_1})$ and $g^{(1)}(y)=g(y_{y_1\gets y_2})$. Since $C$ is closed under variable projection, we have $g^{(i)}\in C$. Notice that $y_i$ is a relevant variable if and only if $g(y)\not=g^{(i)}(y)$. The above algorithm can check whether $g=g^{(i)}$ for all $i$ with success probability at least $15/16$. In case, for some $i$, we get an assignment $a$ such that $g^{(i)}(a)\not=g(a)$ then (assume $i>1$),
$$g^{(i)}(a)=g(a_1,\cdots,a_{i-1},a_1,a_{i+1},\cdots,a_n)\not= g(a).$$ This can happen only when $a_1\not=a_i$ and then the assignment $a$ is a witness for $y_i$.
\end{proof}

\begin{lemma}\label{probfail} Let $1\le i_1\le \cdots\le i_k\le n$. For a random $ck^2$-variable projection $P$ with
probability at least $1-1/(2c)$, $P(x_{i_1}),\ldots,P(x_{i_k})$ are distinct.
\end{lemma}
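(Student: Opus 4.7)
The plan is a direct birthday-type union bound. Since $P$ is a uniformly random $m$-variable projection with $m=ck^2$, each $P(x_{i_j})$ is independently and uniformly distributed over $\{y_1,\ldots,y_m\}$. So for any fixed pair of indices $a<b$ in $[k]$, the collision probability is
$$\Pr[P(x_{i_a})=P(x_{i_b})]=\frac{1}{m}=\frac{1}{ck^2}.$$

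Next I would union-bound over the $\binom{k}{2}\le k^2/2$ unordered pairs $\{a,b\}\subseteq [k]$ to get
$$\Pr\bigl[\exists\,a<b:\ P(x_{i_a})=P(x_{i_b})\bigr]\le \binom{k}{2}\cdot \frac{1}{ck^2}\le \frac{k^2/2}{ck^2}=\frac{1}{2c}.$$
Taking the complement yields that $P(x_{i_1}),\ldots,P(x_{i_k})$ are pairwise distinct with probability at least $1-1/(2c)$.

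There is essentially no obstacle here; the only very minor subtlety is to note that the assumption $1\le i_1\le \cdots\le i_k\le n$ in the statement should be read as strict inequalities (or equivalently, one only cares about the $k$ \emph{distinct} indices among them), since otherwise $x_{i_a}=x_{i_b}$ forces $P(x_{i_a})=P(x_{i_b})$ with probability one and the statement would be vacuous. Under this standard reading the independence of the $P(x_{i_j})$ is immediate from the definition of a uniform $m$-variable projection, and the two-line calculation above completes the argument.
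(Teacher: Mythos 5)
Your proof is correct and takes essentially the same approach as the paper: a per-pair collision probability of $1/(ck^2)$ followed by a union bound over $\binom{k}{2}$ pairs. Your observation that the $\le$ in the index ordering should really be read as strict inequalities (else the statement is vacuous) is a fair and minor editorial remark but not a gap in either proof.
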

\begin{proof} The probability that $P(x_{i_j})$ is equal to $P(x_{i_k})$ for some $j\not=k$ is $1/(ck^2)$.
Therefore, by the union bound, the probability that $P(x_{i_1}),\ldots,P(x_{i_k})$ are distinct is at least $1-{k\choose 2}/(ck^2)\ge 1-1/(2c)$.
\end{proof}
In the following, we give a non-adaptive algorithm that learns one variable (the procedure Learn-Variable). This result is a folklore result in group testing.
\begin{lemma}\label{learnvar} There is a non-adaptive learning algorithm that asks $\lceil \log n\rceil$ queries and learns the class $C=\{x_1,\ldots,x_n\}$.
\end{lemma}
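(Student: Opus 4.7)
The plan is to use the standard binary encoding trick from group testing. We index the variables $1,\ldots,n$ and identify each index $i\in[n]$ with its binary representation $(b_1(i),\ldots,b_L(i))\in\{0,1\}^L$, where $L=\lceil\log n\rceil$. Since the map $i\mapsto(b_1(i),\ldots,b_L(i))$ is injective on $[n]$, the index $i$ is uniquely determined by its binary digits.

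For each $j\in[L]$ I would define a single query $a^{(j)}\in\{0,1\}^n$ by setting
\[
a^{(j)}_k \;=\; b_j(k) \qquad\text{for every } k\in[n].
\]
These $L$ queries are fixed in advance, so the algorithm is non-adaptive. If the target function is $f=x_i$ for some unknown $i\in[n]$, then the answer to query $a^{(j)}$ is exactly
\[
\MQ_f(a^{(j)}) \;=\; x_i(a^{(j)}) \;=\; a^{(j)}_i \;=\; b_j(i).
\]
Thus the $L$-tuple of answers is the binary representation of $i$, from which the learner reconstructs $i$ and outputs $x_i$.

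The whole argument is essentially one paragraph of bookkeeping; there is no real obstacle. The only thing to double-check is that $L=\lceil\log n\rceil$ bits really suffice to distinguish all $n$ indices (using, say, indices $0,\ldots,n-1$ instead of $1,\ldots,n$ if one prefers a cleaner encoding), which is immediate from $2^{\lceil\log n\rceil}\ge n$. Correctness and query count together give the claimed bound of $\lceil\log n\rceil$ non-adaptive membership queries.
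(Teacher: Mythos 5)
Your proof is correct and uses exactly the same binary-encoding construction as the paper: the $j$-th query records the $j$-th bit of each index, so the answer vector is the binary representation of the target index. The paper merely writes the queries explicitly as $a^{(i)}=(b_i(0),\ldots,b_i(n-1))$ and reads off $\ell-1$; your presentation is the same argument.
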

\begin{proof} Denote by $b_i(j)$ the $i$-th bit of the binary representation of $j$. Define the queries $a^{(i)}=(b_i(0),\ldots,b_i(n-1))$, $i=1,\ldots,\lceil\log n\rceil$. If the target function is $x_\ell$, then the vector of answers to the queries is the binary representation of $\ell-1$.
\end{proof}

\section{Deterministic Adaptive Learning}\label{ADP}
In the following, we give a deterministic algorithm for learning $\DT_d$. In subsection~\ref{FRV1}, we show that there is a polynomial time deterministic algorithm that finds all the relevant variables in $2^{2d+O(\log^2 d)}\log n$ queries. In subsection~\ref{LMP1}, we show that the class $\MP_{d,s}$, defined over $N$ variables, can be learned in a deterministic polynomial time with $2^{2.66d}s^2\log^3N$ queries. 
\begin{theorem} The class $\DT_d$ is deterministic polynomial time learnable from $\MP_{d,3^d}$ with $2^{5.83d} +2^{2d+O(\log^2d)}\log n$ queries.
\end{theorem}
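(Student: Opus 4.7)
The plan is to combine the two subroutines promised in Subsections~\ref{FRV1} and~\ref{LMP1} into a two-stage learner. In the first stage, I would invoke the deterministic algorithm of Subsection~\ref{FRV1} to identify the set $X\subseteq \{x_1,\ldots,x_n\}$ of all relevant variables of the target $f\in \DT_d$, at a cost of $2^{2d+O(\log^2 d)}\log n$ membership queries. Since $f$ has depth at most $d$, its decision tree has at most $2^d$ leaves, and so the number $N=|X|$ of relevant variables satisfies $N\le 2^d$. From that point on, only the coordinates in $X$ matter, and any queries the second stage wishes to make to the restriction $f|_X$ can be simulated by queries to $f$ with the remaining $n-N$ coordinates fixed to $0$, at no extra query cost.

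In the second stage, I would apply Lemma~\ref{DTMP} to conclude that $f|_X$, viewed as a function on the $N\le 2^d$ variables in $X$, is equivalent to some polynomial in $\MP_{d,3^d}$. I would then invoke the deterministic algorithm of Subsection~\ref{LMP1}, instantiated with $s=3^d$ and the variable count $N\le 2^d$, to learn this polynomial. The query bound promised there is
\[
2^{2.66d}\cdot s^{2}\cdot \log^{3}N \;\le\; 2^{2.66d}\cdot 3^{2d}\cdot d^{3}.
\]
Using $3^{2d}=2^{(2\log_2 3)d}=2^{3.1699\ldots d}$, the exponent becomes $2.66d+3.17d = 5.83d$ up to a $d^{3}$ polynomial factor that is absorbed into the $2^{5.83d}$ notation, and this handles the second summand of the claimed bound.

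Adding the two stages gives the total query complexity
\[
2^{2d+O(\log^2 d)}\log n \;+\; 2^{5.83d},
\]
which is the stated bound. Both subroutines run in polynomial time, and the transition between them consists only of fixing $n-N$ coordinates to $0$ when dispatching queries, so the overall algorithm is a deterministic polynomial-time learner from the hypothesis class $\MP_{d,3^d}$.

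The only substantive points to verify are bookkeeping: that the second-stage hypothesis class is genuinely $\MP_{d,3^d}$ (ensured by Lemma~\ref{DTMP}), that queries on the restricted variable set can be simulated by queries on the full input (trivial, since irrelevant coordinates do not affect $f$), and that arithmetic on the exponent truly yields $2.66+2\log_2 3<5.83$. No new machinery beyond the two subsection results and Lemma~\ref{DTMP} is needed, so there is no real obstacle beyond assembling these pieces correctly.
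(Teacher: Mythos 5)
Your proposal is correct and follows essentially the same two-stage plan as the paper's own proof: find the relevant variables via the Subsection~\ref{FRV1} algorithm (Lemma~\ref{qcfr}), then learn the restriction over the $N\le 2^d$ remaining variables as an $\MP_{d,3^d}$ polynomial using Lemma~\ref{DTMP} and Theorem~\ref{mpsd}, with the same exponent arithmetic $2.66+2\log_2 3\approx 5.83$. The extra remark about fixing irrelevant coordinates to $0$ when simulating queries on the restricted function is a sound (and implicitly assumed) detail.
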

\begin{proof} By Lemma \ref{qcfr}, we can find the relevant variables in $2^{2d+O(\log^2 d)}\log n$ queries. The number of relevant variables $N$ is bounded by $N\le 2^d$. Since, by Lemma~\ref{DTMP}, each decision tree of depth at most $d$ has a representation in $\MP_{d,3^d}$, we can learn it as a multivariate polynomial over the $N$ variables. By Theorem \ref{mpsd}, this takes $2^{2.66d}s^2\log^3N=d^32^{2.66d}3^{2d}\le 2^{5.83d}$ queries.
\end{proof}
\subsection{Finding the Relevant Variables}\label{FRV1}
In this section we give an algorithm that asks $2^{2d+O(\log^2d)}\log n$ queries and finds all the relevant variables of $f\in \DT_d$. We first give a folklore procedure Binary-Learn-Variable that takes two assignments $a,b\in \{0,1\}^n$ such that $f(a)\not=f(b)$ and finds a relevant variable $x_j$ of $f$ such that $a_j+b_j=1$ in $\lceil \log n\rceil$ queries. The procedure defines an assignment $c$ that results from flipping $\lfloor wt(a+b)/2\rfloor$ entries in $a$ that differ from $b$. Then, it finds $f(c)$ with a membership query. We either have $f(a)\not=f(c)$ or $f(b)\not=f(c)$ because, otherwise, $f(a)=f(b)$ and then we get a contradiction. In the first case, $a$ differs from $c$ in $\lfloor wt(a+b)/2\rfloor$ entries, while in the second case, $c$ differs from $b$ in $\lceil wt(a+b)/2\rceil$ entries. Then, we recursively do the above until, at the $t$th stage, both assignments differ in $s=\lceil wt(a+b)/2^t\rceil=1$ entries. Therefore, the variable that corresponds to this entry is relevant in $f$. The number of iterations until $s=1$ is at most $\lceil \log n\rceil$.

We remind the reader that for $a\in \{0,1,z\}^n$, $a[x]$ is the assignment $a$ where each entry $i$ that is equal to $z$ is replaced with $x_i$. For example, for $a=(1,0,z,1,z)$, $a[x]=(1,0,x_3,1,x_5)$.
\begin{lemma}\label{findX} Let $S\subseteq \{0,1,z\}^n$ be an $(n,2d+1)$-universal disjoint set. If $f\in \DT_d$ depends on $x_i$, then there is an assignment $a\in S$ such that $f(a[x])\in \{x_i,\bar x_i\}$.
\end{lemma}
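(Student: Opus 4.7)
The plan is to locate an ``essential'' $x_i$-labelled node inside some decision-tree model $T$ of $f$ of depth $\le d$, and then use $S$ to force the computation of $f$ on $T$ to reach that node while leaving $x_i$ free. First, pick any witness $\alpha\in\{0,1\}^n$ for $x_i$, so $f(\alpha|_{x_i\gets 0})\ne f(\alpha|_{x_i\gets 1})$. I would trace both $\alpha|_{x_i\gets 0}$ and $\alpha|_{x_i\gets 1}$ down $T$. Because these two assignments agree on every coordinate except $i$, their computation paths coincide up to the first node where they diverge, and this node must be labelled $x_i$; call it $v$.

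Next, I would extract three paths from this picture: the path $P$ from the root of $T$ to $v$ (fixing variables $x_{j_1},\ldots,x_{j_k}$ to values $\xi_1,\ldots,\xi_k$ read off from $\alpha$, with $k\le d-1$ since $v$ is not a leaf), and the two root-to-leaf paths $P_0\subseteq T_0$ and $P_1\subseteq T_1$ that $\alpha|_{x_i\gets 0}$ and $\alpha|_{x_i\gets 1}$ follow in the subtrees rooted at the two children of $v$. These reach leaves carrying labels $b_0\ne b_1$ (since $f$ values differ) and use at most $d-k-1$ variables each. Letting $F$ be the union of the indices of the variables appearing on $P,P_0,P_1$, excluding $i$ itself, the key counting step is
\[
|F|\le k+(d-k-1)+(d-k-1)\le 2d-k-2\le 2d.
\]

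Now pad $F$ arbitrarily with indices from $[n]\setminus(F\cup\{i\})$ to obtain a set of size exactly $2d$, adjoin $i$, and sort to get $i_1<\cdots<i_{2d+1}$ with $i=i_j$ for some $j$. For each $k\ne j$ assign $\xi'_k\in\{0,1\}$: use the value required by $P,P_0,P_1$ when $i_k\in F$, and an arbitrary value otherwise. The $(n,2d+1)$-universal disjoint property of $S$ then yields an $a\in S$ with $a_{i_k}=\xi'_k$ for all $k\ne j$ and $a_{i_j}=z$. Evaluating $f$ on $a[x]$ via $T$: all entries of $a[x]$ are constants except the $i$-th, which is $x_i$; path $P$ is fixed correctly and leads the computation to $v$; from $v$, setting $x_i=0$ (resp.\ $1$) forces the computation down $P_0$ (resp.\ $P_1$) and onto the leaf labelled $b_0$ (resp.\ $b_1$). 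Hence $f(a[x])$ depends only on $x_i$ and equals $x_i$ if $(b_0,b_1)=(0,1)$ and $\bar x_i$ otherwise.

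The main obstacle I expect is the counting estimate $|F|\le 2d$: it is crucial that $v$ is taken as the first point where the two traces diverge, so that the path-to-$v$ segment ($\le d-1$ variables) and the two continuations to leaves ($\le d-k-1$ variables each) together fit within the budget. Identifying $v$ this way — rather than as some arbitrary $x_i$-labelled node — is also what guarantees that $\alpha$ simultaneously provides consistent path-witnesses in both subtrees with opposite leaf labels, which is exactly what is needed for the final substitution to reduce $f(a[x])$ to a literal of $x_i$.
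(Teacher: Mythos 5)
Your proof is correct, and it takes a genuinely different route from the paper's. The paper forms the auxiliary function $f'(x)=f(x|_{x_i\gets 0})+f(x|_{x_i\gets 1})$, invokes Lemma~\ref{DToDT} to conclude $f'\in\DT_{2d}$, notes that $f'\not\equiv 0$ and that $x_i$ is irrelevant in $f'$, and then takes any accepting root-to-leaf path in a depth-$\le 2d$ tree for $f'$; the $\le 2d$ indices on that path together with $i$ are fed to the $(n,2d+1)$-universal disjoint set. You instead stay inside a depth-$\le d$ model of $f$ itself: you trace the two computations of a witness $\alpha$ (with $x_i$ flipped) to their first divergence node $v$, argue $v$ must be labelled $x_i$, and then count variables on the prefix path to $v$ plus the two post-$v$ continuations, getting $|F|\le 2d-k-2\le 2d$, which is even a bit tighter than the paper's bound before padding. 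Both arguments exploit exactly the same property of $S$ --- that it pins down a controlling path while leaving coordinate $i$ as the free symbol $z$. Your version is more elementary in that it avoids the $g(f_1,f_2)$ composition lemma and works with a single concrete tree; the paper's version is more modular and shorter once Lemma~\ref{DToDT} is available, and generalizes cleanly since it never has to speak of ``the first divergence node.'' One small wording issue worth fixing: after the substitution you write that ``all entries of $a[x]$ are constants except the $i$-th,'' but the universal disjoint set only constrains the $2d+1$ designated coordinates, so $a$ may carry $z$ at other positions as well. This is harmless --- those variables are never read along the fixed path $P,P_0,P_1$, so $f(a[x])$ still collapses to a function of $x_i$ alone --- but the sentence as written is not literally true.
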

\begin{proof} If $x_i$ is relevant in $f$, then $f(x|_{x_i\gets 0})\not=f(x|_{x_i\gets 1})$. Therefore, $f'(x)=f(x|_{x_i\gets 0})+f(x|_{x_i\gets 1})\not=0$. Since $f(x|_{x_i\gets 0}),f(x|_{x_i\gets 1})\in\DT_d$, by Lemma~\ref{DToDT} we have $f'(x)\in \DT_{2d}$. Let $x_{i_1}\stackrel{\xi_1}{\to}x_{i_2}\stackrel{\xi_2}{\to}\cdots {\to}x_{i_{d'}}\stackrel{\xi_{d'}}\to 1$, $d'\le 2d$ be a path in the decision tree representation of $f'$. Such path exists since $f'\in \DT_{2d}$ and $f'\not=0$. Obviously,  $i\not\in \{i_1,\ldots,i_{d'}\}$ since $x_i$ is not relevant in $f'$. By the definition of $S$, there is an assignment $a\in S$ such that $a_{i_{1}}=\xi_1,\ldots,a_{i_{d'}}=\xi_{d'}$ and $a_i=z$. Hence, $f'(a)=1$ and $f'(a[x])=1$.
Therefore, either $f(a[x]|_{x_i\gets 0})=1$ and $f(a[x]|_{x_i\gets 1})=0$, or $f(a[x]|_{x_i\gets 0})=0$ and $f(a[x]|_{x_i\gets 1})=1$. As a result, we get $f(a[x])=x_if(a[x]|_{x_i\gets 1})+\bar x_i f(a[x]|_{x_i\gets 0})\in \{x_i,\bar x_i\}$.
\end{proof}

\begin{figure}[h!]
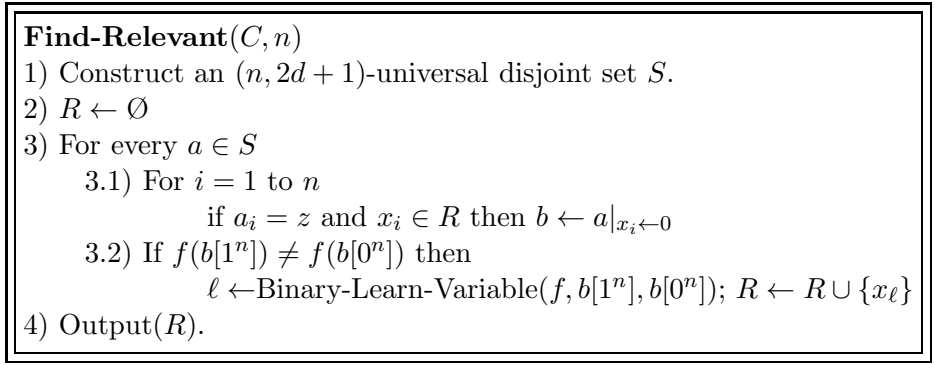

  \begin{center}
  \fbox{\fbox{\begin{minipage}{28em}
  \begin{tabbing}
  xxxx\=xxxx\=xxxx\=xxxx\= \kill
  {\bf Find-Relevant$(C,n)$}\\
  1) Construct an $(n,2d+1)$-universal disjoint set $S$.\\
  2) $R\gets \O$\\
  3) For every $a\in S$ \\
  \> 3.1) For $i=1$ to $n$ \\
  \>\>\> if $a_i=z$ and $x_i\in R$ then $b\gets a|_{x_i\gets 0}$\\
  \> 3.2) If $f(b[1^n])\not= f(b[0^n])$ then \\ \>\>\>$\ell\gets$Binary-Learn-Variable($f,b[1^n],b[0^n]$);
   $R\gets R\cup \{x_\ell\}$ \\
  4) Output($R$).
  \end{tabbing}
  \end{minipage}}}
  \end{center}
	\caption{An algorithm that returns all the relevant variables.}
	\label{NRV}
	\end{figure}
Figure~\ref{NRV} describes the algorithm. In step 1, the algorithm constructs an $(n,2d+1)$-universal disjoint set $S$.
In step 2, it defines an empty set $R$ that eventually, will contain all the relevant variables. Then, in step 3, for every $a\in S$, it defines a new assignment $b$ that is derived from $a$ by assigning to zero all $a_i=z$ in $a$ that indicate a known relevant variable $x_i\in R$. Thereafter, if $f(b[1^n])\not=f(b[0^n])$, the algorithm runs Binary-Learn-Variable$(f,b[1^n],b[0^n])$.  The procedure Binary-Learn-Variable satisfies the following:
\begin{lemma}\label{FR01}
Let $a\in\{0,1,z\}^n$. If $f(a[1^n])\not=f(a[0^n])$ then, the procedure call Binary-Learn-Variable $(f,a[1^n],a[0^n])$
finds, in at most $\lceil\log n\rceil$ queries, a relevant variable $x_j$ that
satisfies:\\
1) $a[1^n]_j+a[0^n]_j=1$, i.e., $a_j=z$.\\
2) $x_j$ is a relevant variable in $f(a[x])$ and therefore in $f$.
\end{lemma}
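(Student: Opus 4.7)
The plan is to analyze the procedure Binary-Learn-Variable applied to the two specific inputs $\alpha_0:=a[1^n]$ and $\beta_0:=a[0^n]$. Recall that these two assignments agree on every coordinate $i$ with $a_i\in\{0,1\}$ (where both equal $a_i$) and disagree on every coordinate $i$ with $a_i=z$ (where $\alpha_0$ has a $1$ and $\beta_0$ has a $0$). The strategy is to maintain the invariant that \emph{every} intermediate pair of assignments produced by the binary search can be written as $(a[w],a[w'])$ for some $w,w'\in\{0,1\}^n$, so that the set of positions where the pair differs is always a subset of $Z:=\{i\ |\ a_i=z\}$.

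First, I would establish the invariant by induction on the recursion depth. The base case is immediate since $(\alpha_0,\beta_0)=(a[1^n],a[0^n])$. For the inductive step, suppose the current pair is $(a[w],a[w'])$ with $f(a[w])\neq f(a[w'])$. The procedure defines a new assignment $c$ by flipping $\lfloor wt(a[w]+a[w'])/2\rfloor$ of the positions where $a[w]$ and $a[w']$ differ. Since all these positions lie in $Z$, the result is again of the form $a[w'']$ for some $w''\in\{0,1\}^n$. A single membership query to $f$ determines whether $f(a[w])\neq f(c)$ or $f(c)\neq f(a[w'])$ (at least one must hold, else $f(a[w])=f(a[w'])$, contradiction), and we recurse on whichever differing pair has fewer differing coordinates. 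This preserves the invariant and halves (up to rounding) the number of differing positions.

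Next, I would establish the two conclusions. For the query count, the number of differing positions is at most $wt(\alpha_0+\beta_0)\le n$ at the start and is replaced by $\lfloor\cdot/2\rfloor$ or $\lceil\cdot/2\rceil$ at each step, so after at most $\lceil\log n\rceil$ queries the two assignments differ in exactly one coordinate $j$. By the invariant, $j\in Z$, giving conclusion (1): $a[1^n]_j+a[0^n]_j=1$ and $a_j=z$. For conclusion (2), at termination we have vectors $w_1,w_2\in\{0,1\}^n$ with $f(a[w_1])\neq f(a[w_2])$ and $w_1,w_2$ agreeing everywhere except at coordinate $j$. Viewing $f(a[x])$ as a Boolean function of the variables $\{x_i:i\in Z\}$ (the variables at non-$z$ positions are substituted by constants), the fact that $f(a[w_1])\neq f(a[w_2])$ and $w_1,w_2$ differ only at $j$ means precisely that $x_j$ is a relevant variable of $f(a[x])$; in particular, $x_j$ is a relevant variable of $f$ itself.

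There is no real obstacle in the argument; the only thing worth being careful about is to make sure the binary-search update only flips positions where the current pair differs, so that the invariant ``the pair has the form $(a[w],a[w'])$'' is preserved throughout. Once this invariant is in place, both conclusions follow essentially for free from the structural description of $f(a[x])$ as a function of the $z$-coordinates of $a$.
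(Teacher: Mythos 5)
Your proof is correct and takes essentially the same approach as the paper, which leaves Lemma~\ref{FR01} without an explicit proof because it is immediate from the description of Binary-Learn-Variable given just before it (the procedure only ever flips positions where the current pair differs, and these start out as exactly the $z$-positions of $a$). Your write-up simply makes that implicit invariant explicit, and both conclusions then follow exactly as you argue.
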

The algorithm adds, in step 3.2, the learned relevant variable to the set $R$. We prove:
\begin{lemma}\label{NQ} The number of times that Binary-Learn-Variable is executed is at most $V(f)$, where $V(f)$ the is the number of the relevant variables of $f$.
\end{lemma}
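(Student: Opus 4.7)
The plan is to show that every execution of Binary-Learn-Variable contributes a \emph{new} relevant variable to the set $R$, so that the total number of executions is capped by the number of relevant variables $V(f)$.

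First, I would recall Lemma~\ref{FR01}: whenever Binary-Learn-Variable is invoked in step 3.2 on the pair $(b[1^n], b[0^n])$ satisfying $f(b[1^n])\neq f(b[0^n])$, the returned index $\ell$ is a relevant variable of $f$ and must satisfy $b[1^n]_\ell+b[0^n]_\ell=1$, which forces $b_\ell=z$. So every successful call both locates a genuine relevant variable and that variable is pinned down by a $z$-entry of the working assignment $b$.

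Next, I would analyze what positions of $b$ can possibly equal $z$. Step 3.1 constructs $b$ from $a\in S$ by replacing every entry $a_i=z$ for which $x_i\in R$ with $0$. Consequently, $b_i=z$ holds if and only if $a_i=z$ and $x_i\notin R$. Combining this with the previous paragraph, the variable $x_\ell$ returned by Binary-Learn-Variable satisfies $b_\ell=z$, hence $x_\ell\notin R$ at the moment of the call. Thus the statement ``$R\gets R\cup\{x_\ell\}$'' in step 3.2 strictly increases $|R|$ by exactly one whenever Binary-Learn-Variable is executed.

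Finally, because $R\subseteq\{x_1,\ldots,x_n\}$ consists entirely of relevant variables of $f$ (again by Lemma~\ref{FR01}), we have $|R|\le V(f)$ throughout the algorithm. Since each Binary-Learn-Variable call strictly grows $|R|$, the total number of calls is bounded above by $V(f)$, which is exactly the claim. I do not expect any real obstacle here; the only thing that needs care is observing that the $z$-masking in step 3.1 precisely guarantees the returned variable is not already in $R$, so that the bound on calls follows from the trivial pigeonhole $|R|\le V(f)$.
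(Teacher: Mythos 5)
Your proof is correct and matches the paper's argument: you invoke Lemma~\ref{FR01} to get that each call to Binary-Learn-Variable returns a relevant variable $x_\ell$ with $b_\ell=z$, observe that the masking in step~3.1 guarantees $b_i\neq z$ for every $x_i\in R$, so the returned variable is never already in $R$, and conclude by pigeonhole on $|R|\le V(f)$. (If anything your wording is slightly cleaner than the paper's, which loosely says $a_i=0$ where it means $b_i\neq z$.)
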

\begin{proof} Notice that after step 3.1, for every $x_i\in R$ we have $a_i=0$ and hence $b_i=0$ too.
Therefore, for every $x_i\in R$, we have $b[1^n]_i=b[0^n]_i$. By Lemma~\ref{FR01},
the relevant variable that Binary-Learn-Variable finds is not in $R$. Also, by Lemma~\ref{FR01}, all the variables in $R$ are relevant in $f$. Thus, Binary-Learn-Variable cannot be executed more than the number of relevant variables of $f$ i.e. $V(f)$.
\end{proof} We now show:
\begin{lemma} The procedure Find-Relevant returns all the relevant variables in $f$.
\end{lemma}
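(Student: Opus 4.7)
The plan is to argue by contradiction: suppose some relevant variable $x_i$ of $f$ is never added to $R$ by the algorithm. I would then find a specific iteration in which the algorithm was forced to add $x_i$ to $R$, yielding the contradiction. The natural candidate iteration is the one given by Lemma~\ref{findX}: since $x_i$ is relevant in $f \in \DT_d$, there is some $a \in S$ with $f(a[x]) \in \{x_i, \bar{x}_i\}$. In particular, $a_i = z$ (since otherwise $f(a[x])$ would not have $x_i$ as a free variable), and $x_i$ is the \emph{unique} relevant variable of the function $f(a[x])$ in the variables indexed by $\{j : a_j = z\}$.

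The key observation to verify is that zeroing out the positions corresponding to already-found relevant variables (step~3.1) does not destroy the property from Lemma~\ref{findX}. Concretely, in the iteration for this $a$, the algorithm forms $b$ by setting $b_j = 0$ for each $j$ with $a_j = z$ and $x_j \in R$. Since we are assuming $x_i \notin R$, we still have $b_i = z$. All positions $j$ we zero out satisfy $x_j \in R$, hence $x_j \ne x_i$, hence $x_j$ is not a relevant variable of $f(a[x])$ (because $f(a[x])$ depends only on $x_i$). Substituting $0$ for variables on which a function does not depend leaves the function unchanged, so $f(b[x]) = f(a[x]) \in \{x_i, \bar{x}_i\}$.

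From here the proof closes quickly: $f(b[1^n])$ and $f(b[0^n])$ correspond to setting $x_i$ to $1$ and $0$ respectively in either $x_i$ or $\bar x_i$, so $f(b[1^n]) \neq f(b[0^n])$, and the test in step~3.2 triggers Binary-Learn-Variable$(f,b[1^n],b[0^n])$. By Lemma~\ref{FR01} (applied with $b$ in the role of $a$), the procedure returns an index $\ell$ such that $x_\ell$ is a relevant variable of $f(b[x])$. But $f(b[x]) \in \{x_i, \bar x_i\}$ has $x_i$ as its unique relevant variable, so $\ell = i$ and $x_i$ is added to $R$, contradicting the assumption.

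The main obstacle, and the only nontrivial step, is verifying that the ``masking'' of known relevant variables in step~3.1 is safe, i.e., that $f(b[x]) = f(a[x])$. This reduces to the clean observation that $f(a[x]) \in \{x_i, \bar x_i\}$ forces all other free variables of $a[x]$ (in particular, the ones in $R$) to be irrelevant for this restricted function, so fixing them to $0$ is harmless. Everything else is a direct appeal to Lemmas~\ref{findX} and~\ref{FR01}.
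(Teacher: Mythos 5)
Your proof is correct and follows essentially the same line as the paper's: both appeal to Lemma~\ref{findX} to get an $a\in S$ with $f(a[x])\in\{x_i,\bar x_i\}$, then observe that the masking in step~3.1 fixes only variables $x_k\in R$ with $k\ne i$, which are irrelevant in $f(a[x])$, so $f(b[x])\in\{x_i,\bar x_i\}$ persists and Binary-Learn-Variable must return $i$. The only cosmetic difference is that you frame it as a contradiction while the paper argues directly variable-by-variable.
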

\begin{proof} Let $x_j$ be a relevant variable in $f$. By Lemma~\ref{findX},
there is an assignment $a\in S$ such that $f(a[x])\in \{\bar{x_j},x_j\}$.
If $x_j \in R$ when the algorithm reaches $a$, then we are done. Otherwise, if $x_j \notin R$, then, the algorithm defines a new assignment $b$ from $a$ such that for each $x_i\in R$ and $a_i=z$ we have $b_i=0$,  otherwise, $b_i=a_i$. In particular, $b_j=z$. Therefore, $f(b[x])\in \{\bar{x_j},x_j\}$. This is true because, if $c\in \{0,1,z\}^n$ and $f(c[x])=x_j$ (or $\bar x_j$), then for any $k\not=j$ where $c_k=z$, $f(c|_{x_k\gets 0}[x])=f(c[x]|_{x_k\gets 0})=x_j$  (or $\bar x_j$). Therefore, $f(b[1^n])\not=f(b[0^n])$ and the algorithm runs Binary-Learn-Variable$(f,b[1^n],b[0^n])$. By Lemma~\ref{FR01}, the procedure Binary-Learn-Variable finds a relevant variable in $f(b[x])\in \{\bar{x_j},x_j\}$ and therefore finds $x_j$.
\end{proof}
\begin{lemma}
\label{qcfr}
The query complexity of Find-Relevant is $2^{2d+O(\log^2 d)}\log n$.
\end{lemma}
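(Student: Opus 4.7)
The plan is to add the query cost of Step 3 to the query cost of the Binary-Learn-Variable invocations and observe that the universal-disjoint-set size dominates.

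First, in Step 1 the algorithm fixes an $(n,2d{+}1)$-universal disjoint set $S$ via Lemma \ref{UDS}. Substituting $d' := 2d+1$ into that lemma gives
\[
|S| \;=\; 2^{(2d+1)+O(\log^2(2d+1))}\log n \;=\; 2^{2d + O(\log^2 d)}\log n.
\]
Step~3 of Find-Relevant iterates once per $a\in S$, and in each iteration the only membership queries that are asked \emph{outside} a call to Binary-Learn-Variable are the two queries $f(b[1^n])$ and $f(b[0^n])$ used to decide whether the procedure is invoked. Note that $b$ depends on $a$ (it is obtained from $a$ by zeroing out previously discovered relevant coordinates), so these two queries cannot in general be amortized across different $a$'s; however, since only two queries are spent per element of $S$, this contributes at most $2|S|=2^{2d+O(\log^2 d)}\log n$ queries in total.

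Second, I would bound the queries consumed by the Binary-Learn-Variable invocations. By Lemma~\ref{FR01}, each invocation costs $\lceil\log n\rceil$ queries, and by Lemma~\ref{NQ} the number of invocations is at most $V(f)$, the number of relevant variables of $f$. Since $f\in \DT_d$, every relevant variable must appear on some root-to-leaf path in any depth-$d$ decision tree for $f$, so $V(f)\le 2^d$. Thus the Binary-Learn-Variable invocations contribute at most $2^d\lceil\log n\rceil$ queries.

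Finally, adding the two contributions,
\[
2|S| + V(f)\lceil\log n\rceil \;\le\; 2\cdot 2^{2d+O(\log^2 d)}\log n + 2^d\lceil\log n\rceil \;=\; 2^{2d+O(\log^2 d)}\log n,
\]
since the first term dominates the second (the factor $2^d$ is absorbed into $2^{2d+O(\log^2 d)}$). There is no real obstacle here: once Lemmas \ref{UDS}, \ref{FR01}, and \ref{NQ} are in hand, the bound is an immediate arithmetic combination, and the only care needed is in noting that the two ``test'' queries per $a\in S$ cannot be saved across iterations, so they must be counted $2|S|$ times rather than $|S|$ times.
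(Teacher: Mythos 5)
Your proof is correct and follows the same decomposition as the paper's: you charge $2|S|$ queries for the if-condition checks in Step 3, use Lemma~\ref{NQ} and Lemma~\ref{FR01} to bound the Binary-Learn-Variable contribution by $V(f)\lceil\log n\rceil \le 2^d\lceil\log n\rceil$, and observe the first term dominates. The extra remark about non-amortizability of the two test queries per $a\in S$ is a fine clarification but does not change the argument.
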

\begin{proof} 
Steps 3.1-3.2 are executed $|S|$ times and therefore, by Lemma~\ref{UDS}, the number of queries asked in the If condition is $2|S|=2^{2d+O(\log^2 d)}\log n$. Moreover, by Lemma~\ref{NQ}, Binary-Learn-Variable is executed $V(f)$ times and, by Lemma~\ref{FR01}, in each execution it asks $\lceil\log n\rceil$ queries. Therefore, the number of queries asked in Binary-Learn-Variable is $V(f)\lceil\log n\rceil=O(2^d\log n)$.
\end{proof}
\subsection{Learning MP}\label{LMP1}
In this section we give an algorithm that learns $\MP_{d,s}$ with $2^{2.66d}s^2\log^3 N$ queries, where $N$ is the number of variables. We first show how to learn one monomial.
\begin{figure}[h!]
  \begin{center}
  \fbox{\fbox{\begin{minipage}{28em}
  \begin{tabbing}
  xxxx\=xxxx\=xxxx\=xxxx\= \kill
  {\bf Learn-Monomial$(f,N)$}\\
  1) If Zero-Test$(f)=${\it True} then\\
  \>\> Output(``The function is identically zero'') and Halt\\
  2) $b\gets 1^N$; \\
  3) While {\it True} do \\
  \> 3.1) Construct a $(wt(b),\min(wt(b)-1,d))$-sparse all one set $D$.\\
  \> 3.2) For all $a\in D$ \\
  \>\> \> If Zero-Test$(f((a\Delta b)*x))=${\it false} then Goto 3.4\\
  \> 3.3) Goto 4\\
  \> 3.4) $b\gets a\Delta b$\\
  4) Output$\left(\bigwedge_{b_i=1}x_i\right)$.
  \end{tabbing}
  \end{minipage}}}
  \end{center}
	\caption{An algorithm that learns one monomial.}
	\label{LOM}
	\end{figure}
The algorithm is in Figure~\ref{LOM}. In the algorithm, the procedure Zero-Test answers ``True'' if $f$ is identically zero and ``False'' otherwise. It uses Lemma~\ref{ZT1}. Recall that, $1^N$ is the all one assignment of dimension $N$, $wt(b)$ is the Hamming weight of $b$ and $a*x=(a_1x_1,\ldots,a_Nx_N)$. For two assignments $b\in \{0,1\}^N$ and $a\in \{0,1\}^{wt(b)}$,  the assignment $a\Delta b$ is defined
as follows. Let $1\le i_1<i_2<\cdots<i_{wt(b)}$ be all the indices $j$ such that $b_j=1$. Then $(a\Delta b)_{i_\ell}=a_{\ell}$ for all $\ell=1,2,\ldots,wt(b)$ and $(a\Delta b)_j=0$ for all $j\not\in\{i_1,\ldots,i_{wt(b)}\}$. For example, if $b=(10011101)$ and $a=(10110)$, then $a\Delta b=(a_100a_2a_3a_40a_5)=(10001100)$. 
\begin{lemma}\label{YYY} If $f(b*x)\not=0$ then,
\begin{enumerate}
\item If $f((a\Delta b)*x)\not=0$ for some $a\in D$, then $wt(a\Delta b)  \le wt(b)(1-1/(2 d))$.
\item
If $f((a\Delta b)*x)=0$ for all $a\in D$, then $M=\wedge_{b_i=1}x_i$ is a monomial in $f$.
\end{enumerate}
\end{lemma}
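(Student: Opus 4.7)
The plan is to track monomials under the substitution $x \gets c*x$ for $c\in\{0,1\}^N$. Writing $f=\sum_{M'\in T}M'$, where $T$ is the (unique) set of monomials of $f$, observe that $M'(c*x)$ equals $M'(x)$ when $\mathrm{supp}(M')\subseteq \mathrm{supp}(c)$ and is $0$ otherwise. Hence
\[ f(c*x) \;=\; \sum_{M'\in T,\ \mathrm{supp}(M')\subseteq \mathrm{supp}(c)} M'(x), \]
and since this is a sum of distinct monomials, $f(c*x)\neq 0$ iff at least one $M'\in T$ has $\mathrm{supp}(M')\subseteq \mathrm{supp}(c)$. I will apply this with $c=a\Delta b$, noting that $\mathrm{supp}(a\Delta b)\subseteq B:=\mathrm{supp}(b)$ and $wt(a\Delta b)=wt(a)$ by construction of $a\Delta b$.

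Part 1 is a direct computation. Set $d'=\min(wt(b)-1,d)$. By Lemma~\ref{NDSprs}, every $a$ in the $(wt(b),d')$-sparse all one set $D$ satisfies $wt(a)\leq wt(b)(1-1/(2d'))$. Since $d'\leq d$ gives $1-1/(2d')\leq 1-1/(2d)$, the required bound $wt(a\Delta b)\leq wt(b)(1-1/(2d))$ follows.

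For Part 2 I would argue the contrapositive: assuming $M=\bigwedge_{i\in B}x_i$ is not a monomial of $f$, I produce some $a\in D$ with $f((a\Delta b)*x)\neq 0$. Since $f(b*x)\neq 0$, the key observation supplies a monomial $M_0=\prod_{j\in J_0}x_j\in T$ with $J_0\subseteq B$. The hypothesis $M\notin T$ forces $J_0\neq B$, so $|J_0|\leq wt(b)-1$; combined with $|J_0|\leq d$ (from $f\in \MP_{d,s}$), this yields $|J_0|\leq d'$. Letting $k_1<\cdots<k_{|J_0|}$ in $\{1,\ldots,wt(b)\}$ denote the positions of $J_0$ inside $B$, the sparse all one property of $D$ produces $a\in D$ with $a_{k_\ell}=1$ for all $\ell$. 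Then $J_0\subseteq \mathrm{supp}(a\Delta b)$, so $M_0$ appears in $f((a\Delta b)*x)$, which is therefore nonzero.

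The main obstacle is really the degree bookkeeping ensuring $|J_0|\leq d'$: one needs both the strictness $J_0\subsetneq B$ (to get $|J_0|\leq wt(b)-1$) and the $\MP_{d,s}$ degree bound (to get $|J_0|\leq d$) in order to legally invoke the sparse all one property with parameter $d'=\min(wt(b)-1,d)$. A minor edge case is $wt(b)=1$: then $d'=0$ and the sparse all one set is essentially trivial, but $f(b*x)\neq 0$ directly forces $f$ to contain the single variable indexed by $b$ as a monomial, which is exactly $M$, so Part 2 holds with the inner For loop vacuous.
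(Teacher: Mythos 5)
Your main argument is correct and mirrors the paper's proof: you make explicit the same key observation (that $f(c*x)$ is exactly the sum of those monomials of $f$ whose support lies inside $\mathrm{supp}(c)$, so $f(c*x)\neq 0$ iff some monomial survives), and your Part~2 is the contrapositive of the paper's proof-by-contradiction from a minimal monomial of $f(b*x)$; the combinatorial content — using the all-one property of $D$ to keep a too-small surviving monomial alive — is identical.

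Two corrections, though. First, the bound $wt(a)\le wt(b)(1-1/(2d'))$ is part of the \emph{definition} of an $(n,d')$-sparse all one set, not a consequence of Lemma~\ref{NDSprs}, which only constructs such a set; this is a wording issue only. Second, and more substantively, your $wt(b)=1$ edge-case remark is false: $f(b*x)\neq 0$ does \emph{not} force the single variable $x_i$ with $b_i=1$ to be a monomial of $f$, because $f(b*x)$ can be the constant polynomial $1$. For instance, take $f=1+x_j$ with $j\neq i$ and $b$ the indicator of $\{i\}$; then $f(b*x)=1\neq 0$, yet $x_i$ is not a monomial of $f$. In that regime $\min(wt(b)-1,d)=0$, and a $(wt(b),0)$-sparse all one set is not well-defined (the required weight bound $wt(a)\le n(1-1/(2\cdot 0))$ is meaningless), so the lemma as stated simply does not apply there — which is also how the paper treats it, by not addressing it. You should just note that $D$, and hence the lemma, requires $\min(wt(b)-1,d)\ge 1$, rather than assert a recovery that does not hold.
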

\begin{proof} Since $D$ is $(wt(b),\min(wt(b)-1,d))$-sparse all one set and $a\in D$, $wt(a\Delta b) = wt(a) \le wt(b)(1-1/(2d))$.
This implies {\it 1.}
To prove {\it 2.}, suppose $f((a\Delta b)*x)=0$ for all $a\in D$.
Let $M=x_{i_1}\cdots x_{i_{d'}}$, $d'\le d$ be any minimal size monomial in $f(b*x)$. In particular, $b_{i_1}=\cdots=b_{i_{d'}}=1$. From the definition of $\Delta$, we have $(a\Delta b)_{i_\ell}=a_{j_\ell}$, $\ell=1,\ldots,d'$, for some $1\le j_1<j_2<\cdots < j_{d'}\le wt(b)$. We now prove that $wt(b)=d'$. Suppose, for the sake of contradiction, that $wt(b)\ge d'+1$. Then, $d''=\min(wt(b)-1,d)\ge d'$. Since $D$ is $(wt(b),d'')$-sparse all one set, there is an assignment $a\in D$ such that $a_{j_\ell}=1$ for all $\ell=1,2,\ldots,d'$. Thus, $(a\Delta b)_{i_\ell}=1$ for all $\ell=1,2,\ldots,d'$. As a result, $f((a\Delta b)*x)$ will contain the monomial $M$ and therefore is not zero. This is a contradiction. Therefore, $wt(b)=d'$. Since $wt(b)=d'$ and $b_{i_1}=\cdots=b_{i_{d'}}=1$, we have $\wedge_{b_i=1}x_i=x_{i_1}\cdots x_{i_{d'}}=M$.
\end{proof}
\begin{lemma}\label{kuku} The number of iterations in step 3 is at most $O(d\log N)$.
\end{lemma}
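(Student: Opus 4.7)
The plan is to track the quantity $wt(b)$ as $b$ evolves over the iterations of step 3. First I would establish the invariant that at the start of every iteration, $f(b*x) \not\equiv 0$. This holds initially because step 1 rules out the zero function, so $f(1^N * x) = f(x) \not\equiv 0$; and it is preserved at step 3.4, since $b$ is updated to $a\Delta b$ only because Zero-Test returned False on $f((a\Delta b)*x)$. In particular, $f(b*x) \not\equiv 0$ implies that some monomial of $f(b*x)$ is nonempty, so there exists an index $i$ with $b_i = 1$, giving $wt(b) \ge 1$ throughout.

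Next, whenever the algorithm proceeds from step 3.1 through 3.4 (rather than terminating at 3.3), Lemma~\ref{YYY} part~1 applies to the chosen $a \in D$ and yields
\[
wt(a \Delta b) \le wt(b)\left(1 - \frac{1}{2\min(wt(b)-1,d)}\right) \le wt(b)\left(1 - \frac{1}{2d}\right).
\]
So each iteration that does not terminate multiplies $wt(b)$ by a factor of at most $1 - 1/(2d)$. Starting from $wt(b) = N$, after $t$ such iterations we have
\[
wt(b) \le N\left(1 - \frac{1}{2d}\right)^t.
\]

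Combining with the invariant $wt(b) \ge 1$, any $t$ for which the loop has not yet terminated must satisfy $N(1 - 1/(2d))^t \ge 1$. Applying the standard estimate $-\ln(1-x) \ge x$ for $x \in (0,1)$ with $x = 1/(2d)$ gives
\[
t \le \frac{\ln N}{-\ln(1 - 1/(2d))} \le 2d \ln N,
\]
which is $O(d \log N)$, as claimed. I do not foresee any real obstacle beyond handling the two bookkeeping points above (maintaining the invariant $f(b*x) \not\equiv 0$ and justifying the use of $\min(wt(b)-1,d) \le d$ in the weight bound from Lemma~\ref{NDSprs}); both are immediate from the algorithm's structure and from Lemma~\ref{YYY}.
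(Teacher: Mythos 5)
Your proof is correct and follows essentially the same approach as the paper: track $wt(b)$, use Lemma~\ref{YYY} to show each non-terminating iteration shrinks the weight by a factor $1-1/(2d)$, and conclude at most $2d\ln N$ iterations. The paper states this in one line; you merely spell out the invariant $f(b*x)\not\equiv 0$ (hence $wt(b)\ge 1$) and the $\min(wt(b)-1,d)\le d$ bookkeeping, both of which are correct and implicit in the paper's argument.
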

\begin{proof} At the beginning $b=1^N$ and, by Lemma~\ref{YYY}, at each iteration the weight of $b$ is reduced by a factor of $1-1/(2d)\le e^{-1/(2d)}$. This implies the result.
\end{proof}
Lemma~\ref{kuku} implies that:
\begin{lemma}\label{lll} Algorithm Learn-Monomial deterministically learns one monomial in $f$ with $\tilde O(2^{2.66d})s\log^3N$ queries.
\end{lemma}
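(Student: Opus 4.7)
The plan is to verify correctness and then bound the queries by combining Lemma~\ref{YYY}, Lemma~\ref{kuku}, Lemma~\ref{NDSprs}, and Lemma~\ref{ZT1}.

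First I would establish the invariant $f(b*x)\not\equiv 0$ throughout step~3. Initially $b=1^N$, so $f(b*x)=f(x)$, which is nonzero because otherwise step~1 would have halted (Zero-Test correctly detects the zero polynomial by Lemma~\ref{ZT1}). The invariant is preserved by step~3.4 because the reassignment $b\gets a\Delta b$ is only reached when Zero-Test reports $f((a\Delta b)*x)\not\equiv 0$. Consequently, when control falls through to step~3.3 (and hence step~4), Zero-Test has returned True for every $a\in D$, so $f((a\Delta b)*x)\equiv 0$ for all $a\in D$; Lemma~\ref{YYY}(2) then yields that $\bigwedge_{b_i=1}x_i$ is a monomial of $f$, which is exactly what the algorithm outputs.

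For the query complexity, Lemma~\ref{kuku} bounds the number of outer iterations by $O(d\log N)$ (each iteration shrinks $wt(b)$ by a factor of at most $1-1/(2d)$ by Lemma~\ref{YYY}(1), starting from $wt(1^N)=N$). Within an iteration, Lemma~\ref{NDSprs} supplies a sparse all one set $D$ of size at most $d+1$, so the inner for-loop invokes Zero-Test at most $d+1$ times. Each Zero-Test call is applied to a function in $\MP_{d,s}$---substituting a $\{0,1\}$ assignment into a polynomial in $\MP_{d,s}$ yields another polynomial in $\MP_{d,s}$, so the hypothesis of Lemma~\ref{ZT1} is met---and costs $2^{2.66d}s\log^2 N$ queries. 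Multiplying gives
\[
O(d\log N)\cdot(d+1)\cdot 2^{2.66d}s\log^2 N \;=\; O\!\left(d^{2}\,2^{2.66d}\,s\,\log^{3}N\right) \;=\; \tilde O(2^{2.66d})\,s\,\log^{3}N,
\]
the $\mathrm{poly}(d)$ prefactor being absorbed into $\tilde O$.

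The main obstacle is essentially bookkeeping rather than novel insight, since the preceding lemmas do the heavy lifting. The one subtlety worth double-checking is that the sparse all one set is constructed in each iteration with parameter $\min(wt(b)-1,d)$ rather than $d$: once $wt(b)$ has shrunk below $d+1$, the ``minimal monomial size'' argument inside the proof of Lemma~\ref{YYY}(2) still applies because $d''=\min(wt(b)-1,d)\ge d'$ whenever $wt(b)\ge d'+1$, and the edge case $wt(b)=d'$ is exactly when the desired identity $\bigwedge_{b_i=1}x_i=M$ holds trivially. Verifying this compatibility is the only point where one must re-read Lemma~\ref{YYY} with care.
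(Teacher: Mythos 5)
Your proposal is correct and follows the same route as the paper's (much terser) proof: Lemma~\ref{ZT1} gives the per-call cost $2^{2.66d}s\log^2 N$ of Zero-Test, Lemma~\ref{kuku} bounds the number of outer iterations by $O(d\log N)$, and the $|D|\le d+1$ inner-loop factor from Lemma~\ref{NDSprs} is absorbed into the $\tilde O$. Your additional checks --- the invariant $f(b*x)\not\equiv 0$, that substitution preserves membership in $\MP_{d,s}$, and the compatibility of the $\min(wt(b)-1,d)$ parameter with Lemma~\ref{YYY}(2) --- are all consistent with the paper's argument and simply make explicit what the paper leaves implicit.
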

\begin{proof} By Lemma~\ref{ZT1}, each zero test takes $2^{2.66d}s\log^2N$ queries. By Lemma~\ref{kuku}, the number of iterations in step 3 is at most $O(d\log N)$.
\end{proof}
We now prove:
\begin{theorem} \label{mpsd}There is a deterministic learning algorithm that learns the class $\MP_{s,d}$ in $\tilde O(2^{2.66d})s^2 \log^3N$ queries.
\end{theorem}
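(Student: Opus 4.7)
The plan is to iterate the \textsf{Learn-Monomial} procedure of Lemma~\ref{lll} at most $s$ times, each time peeling off a monomial. Specifically, I would maintain a list $M_1,\ldots,M_t$ of monomials of $f$ discovered so far, and at each stage define the residual function
\[
f_t := f + M_1 + M_2 + \cdots + M_t,
\]
where arithmetic is in $F_2$. Because $M+M=0$ over $F_2$, the unique multivariate polynomial representation of $f_t$ is precisely $f$ with the monomials $M_1,\ldots,M_t$ removed, so $f_t\in \MP_{d,s}$ as well. A membership query to $f_t$ is simulated by one membership query to $f$ together with local (query-free) evaluation of the known monomials $M_1,\ldots,M_t$ at the query point.

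At each stage I would first invoke the zero test of Lemma~\ref{ZT1} on $f_t$; if it reports \emph{True} then $f_t\equiv 0$, the list is complete, and I output $h:=M_1+\cdots+M_t$. Otherwise, I run \textsf{Learn-Monomial}$(f_t,N)$ to obtain a monomial $M_{t+1}$ of $f_t$ (which is necessarily a monomial of $f$ not already in the list), append it to the list, and increment $t$. Since $f\in\MP_{d,s}$ has at most $s$ distinct monomials and each iteration removes one, the loop terminates within $s$ iterations with $h=f$.

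For the query count, by Lemma~\ref{lll} each call to \textsf{Learn-Monomial} costs $\tilde O(2^{2.66d})\,s\log^3 N$ queries (the ambient polynomial class for $f_t$ still has at most $s$ monomials), and the extra zero-test invocation before each call costs only $2^{2.66d}s\log^2 N$, which is absorbed. Multiplying by the at most $s$ iterations yields the claimed bound $\tilde O(2^{2.66d})\,s^2\log^3 N$. The algorithm runs in deterministic polynomial time because Lemma~\ref{ZT1} and Lemma~\ref{NDSprs} supply their combinatorial objects in $\mathrm{poly}(2^d,N)$ time and the simulation of $f_t$ adds only $O(ts)\le O(s^2)$ overhead per query.

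The only subtle point, and the one I would write most carefully, is the legitimacy of applying \textsf{Learn-Monomial} (whose analysis in Lemma~\ref{YYY} assumes the input lies in $\MP_{d,s}$) to the residual $f_t$: this just needs the observation that subtracting a subset of monomials from $f$ cannot increase either the monomial count or the maximum monomial size, so $f_t\in \MP_{d,s}$ holds throughout. With that in hand the theorem follows by a clean induction on $t$.
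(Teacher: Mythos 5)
Your proposal is correct and follows the same route as the paper: peel off discovered monomials over $F_2$ to form the residual $f_t$, apply Learn-Monomial (Lemma~\ref{lll}) to $f_t$, and iterate at most $s$ times, giving the $s^2$ factor. You spell out the justification (that $f_t$ remains in $\MP_{d,s}$ and that queries to $f_t$ are simulated locally) somewhat more explicitly than the paper's terse proof, but the argument is the same.
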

\begin{proof} Assume we have found $t$ monomials $M_1,\ldots,M_t$ of $f$ we can learn another new monomial by learning a monomial of $f'=f+M_1+M_2+\cdots+M_t$. The function $f'$ is just the function $f$ without the monomials in $M_1,\ldots,M_t$. This is because $M+M=0$ in the binary field. Using Lemma~\ref{lll}, the result follows.
\end{proof}


\vskip 0.2in
\bibliography{Ref}
\bibliographystyle{ieeetr}

\newpage
\appendix

\section{KM-algorithm}\label{KM}

Along the discussion, we will frequently need to estimate the expected value of random variables using Chernoff bound. Chernoff bound allows us to compute the sample size needed to ensure that the estimation is close to its mean with high probability. To make the analysis easy to follow for the reader, we bring Chernoff bound in the following Lemma.
\begin{lemma}(Chernoff).
Let $X_1, \cdots, X_m$ be independent random variables all with mean $\mu$ such that for all $i$, $X_i \in [-1,1]$. Then, for any $\lambda > 0$,
\begin{equation}
 Pr \left[ \left| \frac{1}{m}\mathop{\sum}_{i=1}^m X_i - \mu \right| \geq \lambda  \right] \leq 2e^{-\lambda^2m/2}.
\label{CHRNF}
\end{equation}
\end{lemma}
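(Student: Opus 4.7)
The plan is to use the Chernoff exponential-moments method. For the upper tail, I would write $\Pr[\tfrac{1}{m}\sum_i X_i - \mu \geq \lambda] = \Pr\bigl[e^{t\sum_i(X_i-\mu)} \geq e^{tm\lambda}\bigr]$ for an arbitrary parameter $t > 0$, then apply Markov's inequality to upper-bound this by $e^{-tm\lambda}\, E\bigl[e^{t\sum_i(X_i-\mu)}\bigr]$. Independence of the $X_i$'s lets me factor the expectation as $\prod_{i=1}^m E[e^{t(X_i-\mu)}]$, reducing the problem to bounding the moment generating function of a single centered variable $Y_i := X_i - \mu$.

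The crucial step is to bound $E[e^{tY_i}]$ using the fact that $Y_i$ is zero-mean and lies in an interval of length at most $2$ (since $X_i \in [-1,1]$ forces $\mu \in [-1,1]$). I would invoke Hoeffding's lemma: for any zero-mean random variable $Y$ taking values in $[a,b]$, $E[e^{sY}] \leq e^{s^2(b-a)^2/8}$. Applied with $a = -1-\mu$ and $b = 1-\mu$, so that $b-a = 2$, this yields $E[e^{tY_i}] \leq e^{t^2/2}$. Plugging back, the upper-tail probability is at most $e^{-tm\lambda + mt^2/2}$. Optimizing over $t > 0$ (the minimizer is $t = \lambda$) gives the clean bound $e^{-m\lambda^2/2}$.

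For the lower tail $\Pr[\tfrac{1}{m}\sum_i X_i - \mu \leq -\lambda]$, the same argument applied to $-X_i$ (which also lies in $[-1,1]$, with mean $-\mu$) yields the identical bound $e^{-m\lambda^2/2}$. A union bound over the two tails then produces the claimed $2e^{-\lambda^2 m/2}$.

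The only real obstacle is Hoeffding's lemma itself. I would prove it by using convexity of $y \mapsto e^{sy}$ on $[a,b]$ to dominate $e^{sY}$ pointwise by the chord $\tfrac{b-Y}{b-a}e^{sa} + \tfrac{Y-a}{b-a}e^{sb}$, take expectation to eliminate the linear term (since $E[Y]=0$), reparametrize via $p = -a/(b-a)$, and then examine the resulting $\phi(s) = -ps(b-a) + \log\!\bigl(1-p+p e^{s(b-a)}\bigr)$. Showing $\phi''(s) \leq (b-a)^2/4$ uniformly (by treating $\phi''$ as the variance of a Bernoulli-type random variable bounded above by $1/4$) and integrating twice from $s=0$ gives $\phi(s) \leq s^2(b-a)^2/8$. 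That second-derivative estimate is the only place the proof requires any real computation; everything else is bookkeeping.
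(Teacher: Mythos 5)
Your proof is correct; it is the standard Chernoff--Hoeffding derivation (exponentiate, Markov, factor by independence, bound each factor via Hoeffding's lemma, optimize $t$, then union-bound the two tails). The paper states this lemma without proof, treating it as a known Chernoff bound, so there is no in-paper argument to compare against; your outline, including the convexity-plus-second-derivative proof of Hoeffding's lemma, supplies exactly the details the paper omits.
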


\subsection{The Discrete Fouriere Transform - DFT}
One of the powerful techniques used in the litrature for learning $\DT_d$ is the \emph{Discrete Fourier Transform - $DFT$}\cite{Bsh17}. 
To simplify the analysis, it is more convenient to change the range of the Boolean functions from $\{0,1\}$ to $\{-1, +1\}$, i.e. $g:\{0,1\}^n \rightarrow \{-1, +1\}$ by redefining the function $g$ from the standard representation of $f$ according to $g = 1-2f$. In this case, the zero-labeled leaves of $T(f)$ will be relabeled by $-1$ while the others remain untouched. Moreover, Definition~\ref{DTDefn} still holds for this case too with a minor change that $f\equiv -1$ is a decision tree instead of $f \equiv 0$. Similarily, the ``$+$'' sign still indicates arethmetic summation. Therefore, we can relate to $f$ as a real valued function $f:\{0,1\}^n \rightarrow \R$.
The following is an example of a decision tree of depth $3$

\begin{eqnarray}
f & = &  x_2(x_3\cdot (-1) + \bar{x_3}\cdot 1) + \bar{x_2} (x_1 \cdot 1 + \bar{x_1}(x_3\cdot 1 + \bar{x_3}\cdot (-1))) \nonumber \\
&=& x_2\bar{x_3} - x_2\bar{x_3}+ x_1\bar{x_2} + \bar{x_1}\bar{x_2}x_3 - \bar{x_1}\bar{x_2}\bar{x_3}. \label{fEx}
\end{eqnarray}

Denote by $\E[f]$ the expected value of $f(x)$ with respect to the uniform distribution on $x$.
Along the discussion, in case the expectation is taken over uniform distribution, we will omit the distibution notation.

 Let $\mathcal{F}$ denote the set of  real  functions on $\{0,1\}^n$. $\mathcal{F}$ is a $2^n$-dimensional real vector space with inner product defined as:
\begin{equation*}
(f,g) :=\frac{1}{2^n}\sum_{{x}\in \{0,1\}^n}f(x)g(x) = \E[f\cdot g].
\end{equation*}
The \emph{norm} of a function $f$ is defined by
$$||f|| := \sqrt{(f,f)}= \sqrt{\E[{f^2}]}.$$
For each $a\in\{0,1\}^n$ define the {\it parity function} $\chi_a(x)$:
\begin{equation*}
\chi_{{a}}(x) = (-1)^{a^Tx} = (-1)^{\sum_{i=1}^{n}a_ix_i}.
\label{eq:prty}
\end{equation*}
The functions $\{\chi_{a} \}_{{a}\in\{0,1\}^n}$ are an orthonormal basis for $\mathcal{F}$. Hence, for all $f\in \mathcal{F}$ we can write
 \begin{equation*}
f({x}) = \sum_{{a}\in \{0,1\}^n}\hat{f}(a)\cdot \chi_{{a}}({x})
\label{eq:FRSUM}
\end{equation*}
for $\hat{f}(a) \in \R$. The coefficients $\hat{f}(a)$ are called \emph{Fourier coeffcients} and,
\begin{equation*}
\hat{f}(a) = \E\left[f(x)\chi_{a}({x})\right].
\label{eq:FRCOEF}
\end{equation*}
By Parsaval's Theorem, we get that $\E[f^2] = \sum_{a\in \{0,1\}^n}\hat{f}^2(a)$. For Boolean functions defined over the range  $\{-1,+1\}$ we get that,
\begin{equation}
\E[f^2] = \sum_{{a}\in \{0,1\}^n}\hat{f}^2(a) = 1.
\label{eq:prsvl}
\end{equation}

We now show that:
\begin{lemma}\label{witnes1} Let $f=\sum_{a\in A}\lambda_a\chi_a(x)$ be any function where $\lambda_a\not=0$ for every $a\in A$. Then, the
relevant variables of $f$ can be found in time $O(|A|n)$. Moreover, for each relevant variable $x_i$, an assignment $b$ such that $f(w|_{x_i\gets 0})\not= f(w|_{x_i\gets 1})$, i.e., a witness for $x_i$, can be found in time $O(|A|n)$.
\end{lemma}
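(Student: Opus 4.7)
The plan is to reduce both claims to elementary properties of the Fourier expansion. For the first claim, I observe that $x_i$ is relevant in $f$ if and only if $A_i:=\{a\in A:a_i=1\}$ is nonempty. The ``only if'' direction is immediate: when every $a\in A$ has $a_i=0$, each $\chi_a$ (and hence $f$) is independent of $x_i$. For the converse, letting $\sigma_j=(-1)^{x_j}$, a direct calculation gives
\[
\frac{f(x|_{x_i\gets 0})-f(x|_{x_i\gets 1})}{2}=\sum_{a\in A_i}\lambda_a\prod_{j\not=i,\,a_j=1}\sigma_j,
\]
which is a nonzero multilinear polynomial in the $\sigma_j$'s: distinct elements of $A_i$ all carry $a_i=1$ and so must differ on some coordinate $j\not=i$, making the induced monomials pairwise distinct, and every coefficient $\lambda_a$ is nonzero by hypothesis. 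Hence the relevant variables are precisely the coordinates touched by at least one $a\in A$, and a single pass through the $|A|$ supports enumerates them in $O(|A|n)$ time.

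For the witness, fix a relevant $x_i$ and let $g_i(\sigma):=\sum_{a\in A_i}\lambda_a\prod_{j\not=i,\,a_j=1}\sigma_j$, a nonzero multilinear polynomial in at most $|A|$ monomials over the variables $\{\sigma_j\}_{j\not=i}$. Any $\sigma^*\in\{\pm 1\}^{[n]\setminus\{i\}}$ with $g_i(\sigma^*)\not=0$ yields a valid witness $w$ via $w_j=(1-\sigma_j^*)/2$ (with $w_i$ set arbitrarily, since $f(w|_{x_i\gets\xi})$ does not depend on $w_i$). I construct $\sigma^*$ greedily, one coordinate at a time. Maintain a nonzero multilinear polynomial $p$ over the yet-unfixed variables, initially $p=g_i$. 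At each step, pick a fresh variable $\sigma_j$, split $p=p_0+\sigma_j p_1$ (with $p_0,p_1$ free of $\sigma_j$), and choose $\sigma_j^*\in\{\pm 1\}$ so that $p_0+\sigma_j^*p_1$ is still nonzero. Such a $\sigma_j^*$ always exists: if both $\sigma_j^*=+1$ and $\sigma_j^*=-1$ killed $p$, then $p_0+p_1=p_0-p_1=0$, forcing $p_0=p_1=0$, i.e.\ $p=0$, a contradiction.

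To implement this efficiently, store $p$ as an explicit list of (monomial-support, coefficient) pairs. The substitution step merges, for each surviving monomial $M$ over the remaining variables, the coefficient from $p_0$ at $M$ with $\sigma_j^*$ times the coefficient from $p_1$ at $M$, so the list length never grows beyond $|A|$; each substitution plus a nonzero check (scanning the coefficient list) takes $O(|A|)$ time. Fewer than $n$ greedy steps suffice, yielding total time $O(|A|n)$ per witness. The only real obstacle is this bookkeeping — namely verifying that substitution never creates new monomials, only merges or annihilates existing ones — so the representation size stays bounded by $|A|$ throughout, and both the deterministic ``always succeeds'' argument and the $O(|A|n)$ bound go through.
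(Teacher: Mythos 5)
Your proof is correct and follows essentially the same route as the paper's: identify the relevant variables as those $x_i$ with $a_i=1$ for some $a\in A$, then find a witness for $x_i$ by fixing the remaining coordinates one at a time while keeping the restricted (difference) function nonzero, which is always possible because a nonzero Fourier-sparse function cannot vanish under both choices of a coordinate. The paper phrases this directly in terms of Fourier restrictions over $\{0,1\}$ while you work with the $\pm 1$ multilinear form, and you spell out the bookkeeping that the monomial list never grows, but the argument and the $O(|A|n)$ bound are the same.
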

\begin{proof} First, since $\chi_a$ are orthonormal basis, $f=0$ if and only if $A=\emptyset$. Second, to find an assignment $w$ such that $f(w)\not=0$, we find $\xi_1\in\{0,1\}$ such that $f(x|_{x_1\gets \xi_1})\not=0$ and then recursively do that for $x_2,\ldots,x_n$. This takes $O(|A|n)$ steps.

Since $f=f_1+(-1)^{x_i}f_2$ where $f_1=\sum_{a\in A,a_i=0}\lambda_a \chi_a(x)$ and $f_2=\sum_{a\in A,a_i=1}$ $\lambda_a \chi_{a}(x|_{x_i\gets 0})$, we have $f(x|_{x_i\gets 0}) = f(x|_{x_i\gets 1})$ if and only if $f_2=0$. That is, $\{a\in A|a_i=1\}=\emptyset$. Therefore, $x_i$ is relevant in $f$ if and only if $a_i=1$ for some $a\in A$. Thus, the set of relevant variables of $f$ is $X=\cup_{a\in A}\cup_{a_i=1}\{x_i\}$.

To find a witness for $x_i$ we need to find an assignment $b$ such that $f(b|_{x_i\gets 1})\not= f(b|_{x_i\gets 1})$. That is $f(b|_{x_i\gets 1})- f(b|_{x_i\gets 1})\not=0$. Consider the function $g(x)=f(x|_{x_i\gets 1})- f(x|_{x_i\gets 1})$. Then we need to find an assignment $b$ such that $g(b)\not=0$. This can be done as above in $O(|A|n)$ steps.
\end{proof}


\subsection{From Decision Tree to DFT Representation}
There is a standard way to transform a Boolean function represented as a decision tree to DFT fomulation. Definition \ref{DTDefn} implies that every decision tree of size $s$ and depth $d$ can be written as a sum (in $\R$) of $s$ terms each of size at most $d$. Formally,
\begin{equation*}
f(x_1,\cdots, x_n) = \sum _{i=1}^{s}\alpha_iT_i,
\label{TERMDT}
\end{equation*}
such that for all $ 1 \leq i \leq s$,
\begin{equation*}
T_i = \ell_{i_1}\ell_{i_2}\cdots \ell_{i_{d'}},
\end{equation*}
for  $d' \leq d$, and $\ell_{i_j} = x_{i_j}$ or $\ell_{i_j} = \overline{x}_{i_j}$. Moreover, the parameter $\alpha_i$ is the value of the leaf in the path defined by the term $T_i$ in $T(f)$.

To formulate $f$ in the DFT representation, it is enough to rewrite each term $T_i$ and then sum the whole transformed terms into one formula. Over the real numbers $\R$, any positive literal $x_{i_j}$ in $T_i$ can be  expressed as $(1-(-1)^{x_{i_j}})/2$. In a similar fashion, any negative literal $\overline{x}_{i_j}$ translates to $(1+(-1)^{x_{i_j}})/2$. We apply this translation to all the terms $T_i$. The result is a DFT representation of $f$.
We demonstrate the process with an example. Let $f\in \DT_3$ be the decision tree described in (\ref{fEx}).  The function $f$ can be written as,
$$f(x_1,  x_2, x_3)= x_2x_3+ \overline{x}_2x_1+\overline{x}_2\overline{x}_1x_3-x_2\overline{x}_3 - \overline{x}_1\overline{x}_2\overline{x}_3$$
where the first three terms comply with the positive pathes in the decision tree and the other two comply with the negative ones.  We start with $x_2x_3$:
$$
\begin{array}{ll}
x_2x_3= & \left (\frac{1-(-1)^{x_2}}{2}\right ) \left(\frac{1-(-1)^{x_3}}{2}\right)\\
& \\
 &= \frac{1}{4} - \frac{(-1)^{x_2}}{4} -\frac{(-1)^{x_3}}{4}+ \frac{(-1)^{x_2+x_3}}{4}\\
& \\
 &= \frac{1}{4}\chi_{0000}-\frac{1}{4}\chi_{0100}-\frac{1}{4}\chi_{0010}+\frac{1}{4}\chi_{0110}.
\end{array}
$$
Similarily, we can rewrite $\overline{x}_2x_1$,
$$
\begin{array}{ll}
\overline{x}_2x_1= & \left (\frac{1+(-1)^{x_2}}{2}\right ) \left(\frac{1-(-1)^{x_1}}{2}\right)\\
& \\
&= \frac{1}{4} - \frac{(-1)^{x_1}}{4} +\frac{(-1)^{x_2}}{4}- \frac{(-1)^{x_2+x_1}}{4}\\
&\\
 &= \frac{1}{4}\chi_{0000}-\frac{1}{4}\chi_{1000}+\frac{1}{4}\chi_{0100}-\frac{1}{4}\chi_{1100}.
\end{array}
$$
To get the full DFT expression of $f$, we continue in the same fashion  for the other three terms. The construction described above imply the following results:
\begin{lemma}\label{Term}
Let $T$ be a term of size $d'$ then,
\begin{enumerate}
\item $T$ has a DFT representation that contains $2^{d'}$ non-zero coefficients $\hat{f}(a)$.
\item Each coefficient $\hat{f}(a)$ has the value $\pm\frac{1}{2^{d'}}$.
\item Each nonzero coefficient $\hat{f}(a)$ of $T$ satisfies $wt(a) \leq d'$.
\end{enumerate}
\label{lm:frco1}
\end{lemma}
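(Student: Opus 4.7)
The plan is to prove all three items simultaneously by expanding the term $T$ as a product of $d'$ factors, each corresponding to one literal, and then reading off the Fourier expansion directly.

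First I would rewrite each literal using the identities given in the text: for a positive literal, $x_{i_j} = (1 - (-1)^{x_{i_j}})/2$, and for a negative literal, $\overline{x}_{i_j} = (1 + (-1)^{x_{i_j}})/2$. Both can be expressed uniformly as $(1 + s_j (-1)^{x_{i_j}})/2$ with $s_j \in \{-1, +1\}$ depending on whether the literal is negated. Consequently,
\[
T \;=\; \prod_{j=1}^{d'} \frac{1 + s_j (-1)^{x_{i_j}}}{2} \;=\; \frac{1}{2^{d'}} \prod_{j=1}^{d'} \bigl(1 + s_j (-1)^{x_{i_j}}\bigr).
\]

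Next I would fully expand the product. Each of the $2^{d'}$ terms in the expansion corresponds to a subset $S \subseteq \{1,\dots,d'\}$, where for $j \in S$ we pick the summand $s_j (-1)^{x_{i_j}}$ and for $j \notin S$ we pick the summand $1$. This produces
\[
T \;=\; \frac{1}{2^{d'}} \sum_{S \subseteq \{1,\dots,d'\}} \Bigl(\prod_{j \in S} s_j\Bigr)\, (-1)^{\sum_{j \in S} x_{i_j}}
\;=\; \sum_{S} \frac{\prod_{j\in S}s_j}{2^{d'}}\, \chi_{a_S}(x),
\]
where $a_S \in \{0,1\}^n$ is the indicator vector with $1$'s exactly at the coordinates $\{i_j : j \in S\}$. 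As $S$ ranges over the $2^{d'}$ subsets of $\{1,\dots,d'\}$, the vectors $a_S$ are distinct (since the indices $i_1,\dots,i_{d'}$ are distinct), so the $\chi_{a_S}$ are distinct orthonormal basis functions and no coefficients collapse. This gives exactly $2^{d'}$ nonzero Fourier coefficients, proving item 1.

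For item 2, each coefficient is $\pm 1/2^{d'}$, since $\prod_{j \in S} s_j \in \{-1,+1\}$. For item 3, $\mathrm{wt}(a_S) = |S| \le d'$ by construction. The main ``obstacle'' is really just to verify that distinct subsets $S$ produce distinct parity indices $a_S$ so that no cancellation occurs; this follows immediately from the distinctness of the variable indices $i_1,\dots,i_{d'}$ appearing in the term.
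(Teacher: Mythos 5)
Your proof is correct and takes essentially the same approach as the paper, which states that the lemma "follows directly from the construction of DFT from $T$," referring to the same literal-by-literal expansion via $(1 \pm (-1)^{x_{i_j}})/2$ that you carry out explicitly. You have simply written out the details that the paper leaves implicit.
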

\begin{proof} The proof follows directly from the construction of DFT from $T$.
\end{proof}

\begin{lemma} Let $f\in DT_{d}$, then for all ${a}\in \{0,1\}^n$,
\begin{equation*}
\hat{f}(a)=\frac{k}{2^d}
\end{equation*}
 for some  $ -2^d\leq k\leq 2^d.$
\label{lm:frco2}
\end{lemma}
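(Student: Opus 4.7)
The plan is to reduce directly to Lemma~\ref{Term} via linearity of the Discrete Fourier Transform. By the standard term decomposition described in the excerpt, any $f\in\DT_d$ can be written as
$$f(x_1,\ldots,x_n)=\sum_{i=1}^s \alpha_i T_i,$$
where each $T_i$ is a term of size $d_i'\le d$ corresponding to a root-to-leaf path in $T(f)$, and $\alpha_i\in\{-1,+1\}$ is the label of the corresponding leaf.

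First I would apply Lemma~\ref{Term} to each $T_i$ individually: every Fourier coefficient of $T_i$ is of the form $\pm 1/2^{d_i'}$, and every such coefficient can be rewritten as $\pm 2^{d-d_i'}/2^d$, which is an integer multiple of $1/2^d$ since $d_i'\le d$. Next, by linearity of the DFT we have $\hat f(a)=\sum_{i=1}^s \alpha_i\widehat{T_i}(a)$, so $\hat f(a)$ is a sum of (at most) $s$ quantities each of which is either zero or $\pm 2^{d-d_i'}/2^d$. Pulling out the common denominator $2^d$ shows $\hat f(a)=k/2^d$ for some integer $k$.

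It remains to bound $|k|$. Since $f$ takes values in $\{-1,+1\}$,
$$|\hat f(a)|=\bigl|\E[f(x)\chi_a(x)]\bigr|\le \E\bigl[|f(x)|\bigr]=1,$$
which forces $|k|/2^d\le 1$, i.e.\ $-2^d\le k\le 2^d$, completing the proof.

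There is no real obstacle here; the only subtlety is noticing that the different terms $T_i$ may have different sizes $d_i'$, so the ``common denominator'' argument requires observing that each $1/2^{d_i'}$ is already an integer multiple of $1/2^d$ (this is where we crucially use $d_i'\le d$, which is where the depth bound on the decision tree enters). Everything else is linearity and the trivial magnitude bound $|\hat f(a)|\le 1$.
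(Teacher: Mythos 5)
Your proof is correct and follows essentially the same route as the paper: decompose $f$ into at most $2^d$ terms via the root-to-leaf paths, invoke Lemma~\ref{Term} to see that each term's Fourier coefficients are integer multiples of $1/2^d$, and combine by linearity; the only cosmetic difference is that you bound $|\hat f(a)|\le 1$ directly from $|\E[f\chi_a]|\le\E[|f|]=1$ rather than citing Parseval's identity as the paper does, and both are immediate.
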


\begin{proof} First notice that by Lemma~\ref{Term}, the Fourier coefficients of each term of size at most $d$ is of the form $k/2^{d}$. Therefore, the Fourier coefficients of the sum of terms of size at most $d$ is of the same form. The fact that $ -2^d\leq k\leq 2^d$ follows from the fact that $|\hat{f}(a)|\le 1$ which follows from Parsaval (\ref{eq:prsvl}).
\end{proof}
Using the fact that any $f\in \DT_d$ has at most $2^d$ terms each of size at most $d$ we can conclude:
\begin{corollary}\label{hhh}
Let $f \in \DT_d$, then $f$ has a DFT representation that contains at most $2^{2d}$ non-zero coefficients $\hat{f}(a)$ each having a value in $\{\pm k/2^d | k \in [2^d]\}$ and Hamming weight $wt(a) \leq d$.
\label{col:frco3}
\end{corollary}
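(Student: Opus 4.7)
The plan is to combine the three preceding structural facts about terms of a decision tree with the sum decomposition of $f \in \DT_d$ into at most $2^d$ terms of size at most $d$. I would first recall that any $f \in \DT_d$ is represented, in the $\{-1,+1\}$-valued formulation, as $f = \sum_{i=1}^s \alpha_i T_i$ where $s \le 2^d$ (one term per leaf) and each $T_i$ has size $d_i \le d$ (length of the root-to-leaf path). The real arithmetic is inherited from Definition~\ref{DTDefn}.

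Next, I would apply Lemma~\ref{Term} term-by-term. Part (1) says the DFT of a single $T_i$ has exactly $2^{d_i} \le 2^d$ non-zero coefficients, so by the triangle-inequality on supports, the DFT of $f$ has support of size at most $\sum_{i=1}^{s} 2^{d_i} \le 2^d \cdot 2^d = 2^{2d}$. Part (3) of Lemma~\ref{Term} tells us that every non-zero coefficient of $T_i$ sits on an index $a$ with $wt(a) \le d_i \le d$; since only indices from the union of these supports can be non-zero in $\hat f$, every non-zero $\hat f(a)$ satisfies $wt(a) \le d$.

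For the value constraint, I would invoke Lemma~\ref{lm:frco2} directly: it already gives $\hat f(a) = k/2^d$ for some integer $-2^d \le k \le 2^d$, which is exactly the claimed form $\pm k/2^d$ with $k \in [2^d]$ (after excluding $k=0$, which corresponds to the zero coefficients that we are not counting).

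Since every piece is already established, there is no real obstacle; the only thing to double-check is that the bound on the number of terms ($s \le 2^d$) is correctly derived from the fact that a depth-$d$ binary tree has at most $2^d$ leaves, and that summing DFT supports is at worst additive. Putting these three observations together yields the corollary immediately.
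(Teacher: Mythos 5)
Your argument is correct and follows the paper's intended route: decompose $f$ into at most $2^d$ terms of size at most $d$, bound the support size by summing term supports via Lemma~\ref{Term}(1), inherit the weight bound from Lemma~\ref{Term}(3), and read off the value constraint from Lemma~\ref{lm:frco2}. The paper simply asserts the corollary as following from that decomposition, so you have filled in exactly the same reasoning it left implicit.
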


\label{APX_A}
\subsection{The KM-algorithm}
\label{APX_B}
Kushilevitz and Mansour \cite{KM93} and Jackson ~\cite{Jck97} gave an adaptive algorithm that finds the non-zero coefficients in $poly(2^d, n)$ time and membership queries. Kushilevitz and Mansour algorithm (KM-algorithm) is a recursive algorithm that is given as an input an access to a membership query oracle $MQ_f$, confidence $\delta$ and a string $\alpha \in \{0,1\}^r$. Initially, KM-algorithm runs on the empty string $\alpha = \varepsilon$. The fundamental idea of the algorithm is based on the fact that for any $\alpha \in \{0,1\}^r$ it can be shown that \cite{Jck97},
\begin{equation}
F_{\alpha} \triangleq \mathop{\sum}_{x\in \{0,1\}^{n-r}} \hat{f}(\alpha x)^2 =\mathop{\mathbf{\E}}_{y,z \in \{0,1\}^r, x \in \{0,1\}^{n-r}}[f(yx)f(zx)\chi_{\alpha}(y)\chi_{\alpha}(z)],
\label{JKSN}
\end{equation}
where for $y=(y_1,\cdots, y_s)$ and $x=(x_1, \cdots, x_{t})$,  $yx =(y_1,\cdots, y_s,x_1, \cdots, x_{t})$. For Boolean functions $f:\{0,1\}^n \rightarrow \{-1,+1\}$, (\ref{eq:prsvl}) and (\ref{JKSN}) imply that $F_\alpha \leq 1$ for all $\alpha$ where  equality holds for $\alpha = \varepsilon$. Furthermore, for $f \in \DT_d$ we  prove,
\begin{lemma} Let $f\in \DT_d$. Then, for all $\alpha \in \{0,1\}^r$, $r \leq n$ there is $\ell \in [2^{2d}]\cup \{0\}$ such that,
$$ F_{\alpha} = \frac{\ell}{2^{2d}}$$
for $\ell \in [2^{2d}]\cup \{0\}.$
\label{LMAFALPHA}
\end{lemma}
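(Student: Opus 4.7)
The plan is to combine two ingredients already established in the excerpt: the structural restriction on the Fourier coefficients of decision trees from Corollary~\ref{hhh}, and Parseval's identity \eqref{eq:prsvl}. The target equality $F_\alpha = \ell/2^{2d}$ is purely an arithmetic statement about a sum of squares of rational numbers with a fixed denominator, together with an upper bound on that sum.

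First I would unpack $F_\alpha$. By definition,
\begin{equation*}
F_\alpha = \sum_{x \in \{0,1\}^{n-r}} \hat{f}(\alpha x)^2.
\end{equation*}
By Corollary~\ref{hhh} (equivalently Lemma~\ref{lm:frco2}), since $f \in \DT_d$, every Fourier coefficient $\hat{f}(\alpha x)$ has the form $k_x/2^d$ for some integer $k_x$ with $-2^d \le k_x \le 2^d$. Squaring, each term satisfies $\hat{f}(\alpha x)^2 = k_x^2/2^{2d}$ where $k_x^2$ is a non-negative integer. Summing over $x \in \{0,1\}^{n-r}$,
\begin{equation*}
F_\alpha = \frac{1}{2^{2d}} \sum_{x \in \{0,1\}^{n-r}} k_x^2 = \frac{\ell}{2^{2d}},
\end{equation*}
where $\ell := \sum_{x} k_x^2$ is a non-negative integer.

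It remains to check the upper bound $\ell \le 2^{2d}$, i.e. $F_\alpha \le 1$. This follows immediately from Parseval's identity \eqref{eq:prsvl}: because $f$ takes values in $\{-1,+1\}$, we have $\sum_{a \in \{0,1\}^n} \hat{f}(a)^2 = 1$, and $F_\alpha$ is the partial sum of this series over those $a$ whose first $r$ bits are $\alpha$. Thus $0 \le F_\alpha \le 1$, so $0 \le \ell \le 2^{2d}$, giving $\ell \in [2^{2d}] \cup \{0\}$ as required.

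There is no serious obstacle; the only thing to be careful about is that the bound in Corollary~\ref{hhh} is stated with $|k| \le 2^d$ rather than $k \in [2^d]$, but squaring absorbs the sign, so $k_x^2 \le 2^{2d}$ is all we need per term, and it is Parseval (not a term-by-term bound) that yields the global bound on $\ell$. This is what keeps $\ell$ from blowing up with $n-r$ despite the sum having up to $2^{n-r}$ terms.
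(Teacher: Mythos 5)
Your proof is correct and matches the paper's approach exactly: the paper's one-line proof invokes Corollary~\ref{hhh} and Parseval, and you have simply supplied the elementary arithmetic that those two facts imply (each $\hat{f}(\alpha x)^2 = k_x^2/2^{2d}$ with $k_x$ integral, so the sum has the stated form, and Parseval bounds the numerator by $2^{2d}$). Nothing is missing and nothing is done differently.
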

\begin{proof}
Follows from Corollary~\ref{hhh} and Parsaval (\ref{eq:prsvl}).
\end{proof}
KM-algorithm uses divide and conquer technique with the identity in (\ref{JKSN}) to find the non-zero coefficients. Firstly, the algorithm computes $F_0$ and $F_1$ and initiates $T_1 = \{\xi \in \{0,1\} | F_\xi \neq 0\}$. At some stage, the algorithm holds a set $T_r = \{\alpha \in \{0,1\}^r | F_\alpha \neq 0\}$ . For each $\alpha\in T_r$, the algorithm computes $F_{\alpha  0}$ and $F_{\alpha 1}$. Since $F_{\alpha} = F_{\alpha 0} + F_{\alpha 1} \neq 0$, at least one of the terms is not zero. Therefore, the algorithm defines $T_{r+1}=\{\alpha\xi | \alpha \in T_r, \xi \in \{0,1\} ,  F_{\alpha \xi}\neq 0\} \neq \O$.  Using (\ref{JKSN}), when  $r=n$ we can conclude that  $F_{\alpha} = \hat{f}(\alpha)^2.$ That is, on the algorithm's termination,  $T_n = \{\alpha \in \{0,1\}^n | \hat{f}(\alpha)^2 \neq 0\}$ includes all the non-zero Fourier coefficients of $f$.
Observe that for $f\in \DT_d$,  Corollary \ref{col:frco3} implies that $f$ has at most $2^{2d}$ non-zero coefficients. Hence, the number of elements in $T_r$ is at most $2^{2d}$ for all $r$. The following lemma summarizes this observation.

\begin{lemma}\cite{KM93}.
Let $f\in\DT_d$. Then, for any $1\leq r \leq n$, $F_{\alpha} \neq 0$ for at most $2^{2d}$ strings $\alpha \in \{0,1\}^r$.
\label{falphaNum}
\end{lemma}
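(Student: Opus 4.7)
The plan is to connect the quantity $F_\alpha$ directly to the nonzero Fourier spectrum of $f$ and then invoke the known bound on its size. Observe from the definition in equation~(\ref{JKSN}) that
\[
F_\alpha = \sum_{x\in \{0,1\}^{n-r}} \hat{f}(\alpha x)^2,
\]
which is a sum of squared real numbers. Hence $F_\alpha \neq 0$ if and only if at least one term $\hat{f}(\alpha x)^2$ is nonzero, i.e., if and only if there exists $x \in \{0,1\}^{n-r}$ such that $\hat{f}(\alpha x) \neq 0$. In other words, $F_\alpha \neq 0$ exactly when $\alpha$ arises as the length-$r$ prefix of some $\beta \in \{0,1\}^n$ with $\hat{f}(\beta) \neq 0$.

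Next, I would bound the number of such prefixes. Let $S = \{\beta \in \{0,1\}^n : \hat{f}(\beta) \neq 0\}$ denote the Fourier support of $f$. By Corollary~\ref{col:frco3}, a decision tree $f \in \DT_d$ has at most $2^{2d}$ nonzero Fourier coefficients, so $|S| \leq 2^{2d}$. The prefix map $\pi_r : \{0,1\}^n \to \{0,1\}^r$ sending $\beta$ to its first $r$ bits satisfies $|\pi_r(S)| \leq |S|$, and by the equivalence above the set of $\alpha \in \{0,1\}^r$ with $F_\alpha \neq 0$ is precisely $\pi_r(S)$. Therefore this set has cardinality at most $2^{2d}$, which is the claim.

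There is no real obstacle here: the argument is essentially a two-line pigeonhole reduction to Corollary~\ref{col:frco3}. The only thing to be a bit careful about is that each nonzero summand in the definition of $F_\alpha$ is a \emph{square}, so cancellations cannot occur and ``$F_\alpha \neq 0$'' is genuinely equivalent to ``some Fourier coefficient with prefix $\alpha$ is nonzero''—this is what makes the prefix-counting argument tight and what ultimately justifies the $2^{2d}$ upper bound on the size of the set $T_r$ maintained by the KM-algorithm at every recursion depth $r$.
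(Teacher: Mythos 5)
Your argument is correct and matches the paper's reasoning exactly: the paper also notes, just before stating the lemma, that Corollary~\ref{col:frco3} bounds the number of nonzero Fourier coefficients of $f$ by $2^{2d}$, and that $F_\alpha$ is the sum of squares of the coefficients with prefix $\alpha$, so the set of $\alpha$'s with $F_\alpha \neq 0$ is the set of prefixes of the Fourier support and has cardinality at most $2^{2d}$. Your explicit remark that the sum-of-squares form rules out cancellation is a nice (though implicit in the paper) clarification but does not change the approach.
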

The immediate result of Lemma \ref{falphaNum} is that  KM-algorithm computes $F_{\alpha}$ at most $n \cdot 2^{2d}$ times. Furthermore, from Chernoff bound it follows that:
\begin{lemma}
Let $f\in\DT_d$. Let $\alpha \in \{0,1\}^r$ and $1 \leq r \leq n$. There is an algorithm $\cal{A}$ that with probability at least $1-\delta$ exactly computes the value of $F_{\alpha}$. Moreover, the algorithm asks $O\left(2^{4d}\cdot \log(\frac{1}{\delta})\right)$ membership queries and runs in time $O\left(2^{4d}\cdot n \cdot \log(\frac{1}{\delta})\right)$.
\label{falphaqueries}
\end{lemma}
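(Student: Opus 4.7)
The plan is to use the identity (\ref{JKSN}) which expresses $F_\alpha$ as an expectation of a bounded random variable, estimate that expectation by empirical sampling, and then round to the nearest admissible value using the discreteness guaranteed by Lemma~\ref{LMAFALPHA}.

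First, I would define the random variable $X = f(yx)f(zx)\chi_\alpha(y)\chi_\alpha(z)$, where $y,z\in\{0,1\}^r$ and $x\in\{0,1\}^{n-r}$ are drawn uniformly and independently. Since $f$ takes values in $\{-1,+1\}$ and each parity function $\chi_\alpha$ also takes values in $\{-1,+1\}$, we have $X\in\{-1,+1\}\subseteq[-1,1]$, and by (\ref{JKSN}) its mean is exactly $F_\alpha$. The algorithm $\mathcal{A}$ samples $m$ independent triples $(y^{(i)},z^{(i)},x^{(i)})$, queries the oracle to obtain $f(y^{(i)}x^{(i)})$ and $f(z^{(i)}x^{(i)})$, and forms the empirical average $\hat F_\alpha = \frac{1}{m}\sum_{i=1}^{m}X_i$.

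Second, I would choose $m$ so that Chernoff (with $\lambda = 1/2^{2d+1}$) ensures $|\hat F_\alpha - F_\alpha| < 1/2^{2d+1}$ with probability at least $1-\delta$. Applying (\ref{CHRNF}) gives the requirement $2e^{-\lambda^2 m/2}\le \delta$, i.e. $m = O(2^{4d}\log(1/\delta))$. The key observation that lets us turn an approximation into an exact computation is Lemma~\ref{LMAFALPHA}: the true value $F_\alpha$ is of the form $\ell/2^{2d}$ for some integer $\ell\in\{0,1,\ldots,2^{2d}\}$. Since any two distinct admissible values differ by at least $1/2^{2d}$, and our estimate is within $1/2^{2d+1}$ of $F_\alpha$, rounding $\hat F_\alpha$ to the nearest multiple of $1/2^{2d}$ recovers $F_\alpha$ exactly.

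Third, the resource bounds follow immediately. Each sample uses two membership queries to $f$, so the total query count is $2m = O(2^{4d}\log(1/\delta))$. Each query requires writing an $n$-bit assignment and each $\chi_\alpha$ evaluation takes $O(r)=O(n)$ time, so the total running time is $O(2^{4d}\cdot n\cdot\log(1/\delta))$.

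I do not expect a real obstacle here: the only subtlety is tying the sampling accuracy to the quantization step of Lemma~\ref{LMAFALPHA}. If anything, I would be careful about the constant in the Chernoff bound — specifically, choosing $\lambda$ strictly smaller than half the grid spacing $1/2^{2d}$ so that the rounding argument is unambiguous — but this only affects hidden constants and not the stated asymptotic bounds.
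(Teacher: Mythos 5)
Your proof is correct and follows essentially the same route as the paper: sample $m = O(2^{4d}\log(1/\delta))$ independent evaluations of the $\{-1,+1\}$-valued random variable $f(yx)f(zx)\chi_\alpha(y)\chi_\alpha(z)$, apply Chernoff with $\lambda = 1/2^{2d+1}$, and exploit the $1/2^{2d}$-quantization from Lemma~\ref{LMAFALPHA} to round the estimate to the exact value of $F_\alpha$. The only cosmetic difference is in how the running time is accounted (you charge for writing assignments and evaluating $\chi_\alpha$, the paper charges for generating $O(n)$ random bits per query), but both yield the same bound.
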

\begin{proof}
Let $\alpha \in \{0,1\}^{r}$. Chernoff bound can be used to calculate the expectation from (\ref{JKSN}). Let $W_1, \cdots, W_m$ be $m$ random variables such that $W_i = f(y^{(i)}x^{(i)})f(z^{(i)}x^{(i)})\chi_{\alpha}(y^{(i)})\chi_{\alpha}(z^{(i)})\in\{-1,1\}$ for random uniform $y^{(i)}, z^{(i)} \in \{0,1\}^r$ and $x^{(i)} \in \{0,1\}^{n-r}$. Using Chernoff bound with  $\lambda =\frac{1}{2^{2d+1}}$ in (\ref{CHRNF}) gives,
$$\Pr\left[ \left |\frac{1}{m}\sum_{i=1}^{m}W_i - F_{\alpha}\right |\geq \frac{1}{2^{2d+1}}\right ] \leq 2e^{-m/{2^{4d+1}}} < \delta.$$
By choosing $m = O(2^{4d}\cdot \log({1}/{\delta}))$ it follows that with probability at least $1-\delta$,
$$\left |\frac{1}{m}\sum_{i=1}^{m}W_i - F_{\alpha}\right |< \frac{1}{2^{2d+1}}. $$
Furthermore, Lemma \ref{LMAFALPHA} states that the possible values of $F_{\alpha}$ vary in steps of $1/2^{2d}$. Since the error is smaller than $\frac{1}{2}\cdot \frac{1}{2^{2d}}$,  the exact value of $F_{\alpha}$ can be found by rounding to the nearest number of the form $\ell/2^{2d}$ where $\ell \in [2^{2d}]\cup \{0\}$.

Each membership query requires generating $O(n)$ random bits, therefore the time complexity is $O(2^{4d}\cdot n \cdot \log({1}/{\delta}))$.
\end{proof}
From Lemma \ref{falphaqueries}, it can be concluded that,
\begin{theorem}\cite{KM93}.
There is an adaptive Monte Carlo learning algorithm $\cal{B}$, that with probability at least $1-\delta$ exactly learns $\DT_d$. Moreover, the algorithm runs in time $O(2^{6d}\cdot n^2 \cdot(\log ({n}/{\delta}) +  d))$ and asks  $O(2^{6d}\cdot n\cdot (\log({n}/{\delta}) + d))$ membership queries.
\end{theorem}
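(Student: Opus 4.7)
The plan is to implement the divide-and-conquer skeleton already sketched after Lemma \ref{falphaNum}, instantiated with the precise-computation subroutine of Lemma \ref{falphaqueries}, and to amortize the failure probability by a union bound over all recursive calls. Concretely, I would maintain at each stage $r$ the set $T_r=\{\alpha\in\{0,1\}^r : F_\alpha\neq 0\}$. Starting from $T_0=\{\varepsilon\}$ (since $F_\varepsilon=\mathbf E[f^2]=1$), I extend $T_r$ to $T_{r+1}$ by running, for every $\alpha\in T_r$, the algorithm $\cal A$ of Lemma \ref{falphaqueries} on both $\alpha0$ and $\alpha1$, keeping those $\alpha\xi$ for which the returned exact value of $F_{\alpha\xi}$ is nonzero. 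The identity $F_\alpha=F_{\alpha0}+F_{\alpha1}$ guarantees $T_{r+1}\neq\emptyset$ whenever $T_r\neq\emptyset$, so the recursion never dies prematurely, and at level $r=n$ it yields $T_n=\{\alpha:\hat f(\alpha)^2\neq 0\}$.

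By Lemma \ref{falphaNum}, $|T_r|\le 2^{2d}$ at every level, so across all $n$ levels the total number of invocations of $\cal A$ is at most $N:=2n\cdot 2^{2d}=O(n\, 2^{2d})$. To ensure that all of these calls succeed simultaneously with probability at least $1-\delta$, I set the per-call failure probability to $\delta':=\delta/N$. Lemma \ref{falphaqueries} then charges $O(2^{4d}\log(1/\delta'))=O(2^{4d}(\log(n/\delta)+d))$ queries and $O(n\cdot 2^{4d}(\log(n/\delta)+d))$ time per call. Summing over the $O(n\, 2^{2d})$ calls gives the claimed bounds of $O(2^{6d}\cdot n\cdot(\log(n/\delta)+d))$ membership queries and $O(2^{6d}\cdot n^2\cdot(\log(n/\delta)+d))$ total time.

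One bookkeeping point is that the output of the recursion gives only $|\hat f(\alpha)|=\sqrt{F_\alpha}$, not the sign, so I also need to recover $\mathrm{sign}(\hat f(\alpha))$ for each $\alpha\in T_n$. This is cheap: by Lemma \ref{lm:frco2} each $\hat f(\alpha)$ is a multiple of $1/2^d$, so a Chernoff estimate of $\hat f(\alpha)=\mathbf E[f(x)\chi_\alpha(x)]$ to additive accuracy $1/2^{d+1}$ (with failure probability $\delta/(2|T_n|)$) suffices to recover the sign, and by Corollary \ref{col:frco3} we have $|T_n|\le 2^{2d}$; this adds only $O(2^{2d}\cdot 2^{2d}(\log(1/\delta)+d))=O(2^{4d}(\log(1/\delta)+d))$ queries, which is swallowed by the dominant term.

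The main obstacle, as far as there is one, is not a conceptual issue but getting the union bound and the rounding step right together: one must choose $\delta'$ small enough that all $O(n\,2^{2d})$ estimates are simultaneously within $1/2^{2d+1}$ of their true value, so that the rounding guaranteed by Lemma \ref{LMAFALPHA} recovers each $F_\alpha$ exactly and the recursion builds the correct $T_r$ at every level. Once that calibration is made, the query and time accounting multiply out to the stated bounds and the correctness of the output function (reconstructed as $\sum_{\alpha\in T_n}\hat f(\alpha)\chi_\alpha$) follows from Parseval and the exactness of each $F_\alpha$ computation.
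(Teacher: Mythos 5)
Your proof is correct and follows essentially the same route as the paper: the KM divide-and-conquer recursion driven by the exact $F_\alpha$ computation of Lemma~\ref{falphaqueries}, with the failure probability amortized by a union bound over the $O(n\,2^{2d})$ invocations, yielding the stated query and time bounds. You additionally make explicit the sign-recovery step for $\hat f(\alpha)$ at the leaves, which the paper's proof leaves implicit but which is indeed needed, and your Chernoff estimate at accuracy $1/2^{d+1}$ (using Lemma~\ref{lm:frco2}) handles it within the dominant term.
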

\begin{proof} As mentioned before, the algorithm computes $F_{\alpha}$ at most $n\cdot 2^{2d}$ times. By the union bound, to get a failure probability of at most $\delta$, we need to evaluate each $F_{\alpha}$ with a failure probability ${\delta}/({n\cdot 2^{2d}})$ using  algorithm $\cal{A}$ from Lemma \ref{falphaqueries}. This implies that each evaluation of $F_{\alpha}$ requires $O(2^{4d} \cdot (\log({n}/{\delta}) + d))$ membership queries and runs in time $O(2^{4d} \cdot n \cdot (\log({n}/{\delta}) + d))$ . Since algorithm $\cal{B}$ invokes  $\cal{A}$ at most $n \cdot 2^{2d}$ times, the theorem follows.

\end{proof}

\subsection{Derandomization}
In this subsection, we describe KM-algorithm derandomization in details. We start by giving some preliminaries needed for the analysis. 
\subsubsection {Preliminaries}
Kushilevitz and Mansour use the method of $\lambda$\emph{-bias distributions} to derandomize their algorithm. The following definitions are useful to review this derandomization method. 

For any set $R$ we will use the notation $\mathbf{\E}_{R}[\cdot]$ to denote $\mathbf{\E}_{x \in R}[\cdot]$ with the uniform distribution over the elements of $R$.
Let $S = \{0,1\}^n$ and let $S'\subseteq S$,
\begin{defn}We say that $S'$ is a \emph{$d$-wise independent set} if for all uniformly random $x\in S'$ and all $1 \leq i_1 < \cdots <i_d\leq n$ and all $\xi_1, \cdots, \xi_d$ we have
$$\Pr\Big[ x_{i_1}=\xi_1, \cdots, x_{i_d}=\xi_d\Big] = \frac{1}{2^d}.$$
\end{defn}
Alon et al.\cite{ABI86} showed:
\begin{lemma}\label{ABI86} Let $w=\lceil \log n\rceil$, $2s+1\ge d$ and $m=ws+1$. Then, there is an $m\times n$ matrix $M$ over the field $F_2$ that can be constructed in $poly(n)$ time such that $S'=\{xM|x\in \{0,1\}^{m}\}$ is $d$-wise independent set.
\end{lemma}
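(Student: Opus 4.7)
The plan is to follow the classical BCH-code construction of Alon--Babai--Itai. First, choose $n$ distinct non-zero elements $\alpha_1,\ldots,\alpha_n$ of the field $F_{2^w}$; this is possible since $2^w\ge n$ (and in the borderline case $n=2^w$ one can either include $\alpha=0$ with a small adjustment to the argument below, or enlarge $w$ by one for cleanliness). Define $M$ to be the $m\times n$ matrix over $F_2$ whose $i$-th column is the concatenation of a single bit $1$ on top, followed by the $w$-bit binary encodings of $\alpha_i,\alpha_i^3,\alpha_i^5,\ldots,\alpha_i^{2s-1}$. The column length is $1+sw=m$ as required, and $M$ is constructible in $\mathrm{poly}(n)$ time since each field operation in $F_{2^w}$ takes $\mathrm{poly}(w)=\mathrm{poly}(\log n)$ time.

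Next I would reduce $d$-wise independence of $S'$ to a statement about columns of $M$. A uniform sample $x\in S'$ has the same distribution as $yM$ for $y$ uniform in $\{0,1\}^m$, because the $F_2$-linear map $y\mapsto yM$ is a surjective group homomorphism and pushes the uniform measure forward to the uniform measure on $S'$. The tuple $((yM)_{i_1},\ldots,(yM)_{i_d})$ is therefore the image of $y$ under the linear map $y\mapsto (y\cdot c_{i_1},\ldots,y\cdot c_{i_d})$, where $c_i$ denotes the $i$-th column of $M$. This map is surjective onto $\{0,1\}^d$, so each outcome has probability exactly $2^{-d}$, if and only if the columns $c_{i_1},\ldots,c_{i_d}$ are $F_2$-linearly independent. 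It thus suffices to show that any $d\le 2s+1$ columns of $M$ are linearly independent over $F_2$.

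For the linear-independence argument, suppose toward contradiction that a non-trivial $\{0,1\}$-combination of $d$ columns vanishes, and let $T$ be the set of column indices whose coefficient equals $1$. The top row forces $|T|$ to be even, so $|T|\le d-1\le 2s$. The remaining $s$ blocks, read back over $F_{2^w}$, yield $\sum_{i\in T}\alpha_i^{2k-1}=0$ for $k=1,\ldots,s$. Now I apply the Frobenius automorphism $y\mapsto y^2$: in characteristic $2$ it passes through sums, and since the coefficients are $0/1$ it simply doubles each exponent. Iterating, $\sum_{i\in T}\alpha_i^e=0$ for every positive $e$ whose odd part lies in $\{1,3,\ldots,2s-1\}$. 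Every integer $e\in\{1,\ldots,2s\}$ has odd part at most $2s-1$ and is of that form, so we conclude $\sum_{i\in T}\alpha_i^e=0$ for all $e=1,2,\ldots,2s$.

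Finally, setting $t=|T|$ and listing $T=\{\gamma_1,\ldots,\gamma_t\}$ with distinct non-zero $\gamma_\ell$, the first $t\le 2s$ of these relations give the matrix equation $V\,(1,\ldots,1)^{\top}=0$ where $V_{e,\ell}=\gamma_\ell^e$. Factoring $\gamma_\ell$ from the $\ell$-th column turns $V$ into a standard Vandermonde on the distinct values $\gamma_1,\ldots,\gamma_t$, so $\det V=\gamma_1\cdots\gamma_t\cdot\prod_{\ell<\ell'}(\gamma_{\ell'}-\gamma_\ell)\neq 0$. Hence $(1,\ldots,1)^{\top}=0$, which is absurd since $t\ge 2$. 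This contradiction establishes the required column independence. The main obstacle is the Frobenius compression: naively using the $2s$ consecutive exponents $1,2,\ldots,2s$ would require $2sw+1$ rows, overshooting $m=ws+1$; exploiting characteristic $2$ to retain only the $s$ odd exponents $\{1,3,\ldots,2s-1\}$ is exactly what makes the stated size attainable.
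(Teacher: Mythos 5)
The paper does not prove this lemma; it is quoted from Alon, Babai and Itai \cite{ABI86}, so there is no in-paper argument to compare against. Your proof is a correct reproduction of the classical BCH-code construction from that reference, including the top all-ones row to enforce even support, the Frobenius trick to recover all power sums up to exponent $2s$ from the odd ones, and the Vandermonde determinant to force a contradiction; the remark about the borderline case $n=2^{w}$ (where $F_{2^{w}}$ has only $n-1$ nonzero elements, so one must admit $\alpha=0$ and redo the case analysis, which still goes through) is the right fix.
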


\begin{defn} We say that $S'$ is \emph{$\lambda$-biased with respect to linear tests} if for all $\alpha\in \{0,1\}^n\backslash \{0^n\}$ it holds that,
$$\left | \E_{S'}[\chi_\alpha(x) \Big ] \right| \leq \lambda.$$
\end{defn}

Moreover, Ta-Shma\cite{T17} proves that:
\begin{lemma}\label{T17} There is a $\lambda$-biased with respect to linear tests $S'$ of size
$$\frac{n}{\lambda^{2+o(1)}}$$ that can be constructed in polynomial time $poly(n,1/\lambda)$.
\end{lemma}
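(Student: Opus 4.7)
The plan is to construct the $\lambda$-biased sample space via expander walks, following the general strategy that underlies Ta-Shma's near-optimal construction. First, I would obtain a basic $\epsilon_0$-biased set $S_0 \subseteq \{0,1\}^n$ of size $O(n/\epsilon_0^2)$ for a moderately small parameter $\epsilon_0$. Such a set is given by standard explicit algebraic constructions (e.g., the Naor--Naor construction based on Reed--Solomon evaluations concatenated with Hadamard codes, or the Alon--Goldreich--H{\aa}stad--Peralta construction). Both are constructible in polynomial time.

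Next, I would amplify the bias using a random walk on an explicit constant-degree expander graph $G$ whose vertex set is $S_0$. For a walk $v_1, v_2, \ldots, v_t$ on $G$, one outputs $v_1 \oplus v_2 \oplus \cdots \oplus v_t \in \{0,1\}^n$. The key estimate, which I would invoke as a black box, is the expander mixing/hitting analysis for XORs: for every nonzero linear test $\chi_\alpha$, the bias of the walk-XOR distribution is at most $(\lambda_2(G) + \epsilon_0)^{t-1}$ where $\lambda_2(G)$ is the second-largest eigenvalue of $G$'s random-walk matrix. Thus, by choosing $\lambda_2(G)$ and $\epsilon_0$ sufficiently small (say $\epsilon_0 = \lambda^{o(1)}$) and taking $t = \Theta(\log(1/\lambda)/\log(1/(\lambda_2+\epsilon_0)))$, the final bias can be driven down to $\lambda$. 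The resulting support has size $|S_0| \cdot \deg(G)^{t-1}$, and a direct bookkeeping shows that with the right parameters this is $n/\lambda^{2+o(1)}$.

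The hard part is pinning down the exponent as exactly $2 + o(1)$ rather than $2 + \Omega(1)$, since a naive expander-walk analysis loses a constant factor in the exponent from the degree of $G$. Overcoming this is precisely Ta-Shma's main technical contribution: instead of a single expander, one uses a carefully designed sequence of expanders, combined via the zig-zag product (or the closely related $s$-wide replacement product), so that both $\lambda_2(G)$ and the effective per-step multiplicative cost tend to $1$ in the right coordinated way. This is the obstacle that makes the proof nontrivial; the first two steps above are classical, but extracting the optimal exponent requires the delicate combinatorial analysis in \cite{T17}, which I would cite rather than reproduce. Polynomial-time constructibility then follows from the explicit constructions of the base biased set, the expanders, and the zig-zag iterations.
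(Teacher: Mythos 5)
The paper does not actually prove this lemma; it is stated as an external result, directly attributed to Ta-Shma with the citation \cite{T17} and no argument of its own. Your proposal is therefore not in conflict with the paper---you converge to the same endpoint (deferring the hard part to \cite{T17})---but you additionally sketch the construction, which the paper omits. Your sketch is broadly accurate: a moderate-bias base set (Alon--Goldreich--H{\aa}stad--Peralta or Naor--Naor style), bias amplification by XORing along walks on an expander, and the observation that a naive expander walk only yields exponent $2+\Omega(1)$ because the per-step degree cost does not match the per-step bias gain. You correctly identify that closing this gap to $2+o(1)$ is the main technical contribution of \cite{T17}, achieved via the $s$-wide replacement product (zig-zag machinery), and you appropriately invoke it as a black box. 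One small imprecision: the intermediate bound you quote, a bias of $(\lambda_2(G)+\epsilon_0)^{t-1}$, is a qualitative stand-in rather than the exact form appearing in the amplification analysis (the actual bounds involve mixed terms depending on how the walk is coupled to the base set), but since you are citing the final estimate from \cite{T17} rather than deriving it, this does not affect the soundness of your argument. In short: correct, same ultimate approach as the paper, with extra but accurate expository detail.
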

\begin{defn} We say that $S'$ is \emph{$\lambda$-biased with respect to linear tests of size at most $d$} if for all $\alpha\in \{0,1\}^n\backslash \{0^n\}$ such that $wt(\alpha)\le d$ it holds that,
$$\left | \E_{S'}[\chi_\alpha(x) \Big ] \right| \leq \lambda.$$
\end{defn}

Naor and Nir\cite{NN90} prove that:
\begin{lemma}\label{NN90} Given a matrix $M$ as in Lemma~\ref{ABI86} of size $m\times n$ and a $\lambda$-biased with respect to linear tests set $\hat S\subseteq \{0,1\}^m$, one can, in polynomial time, construct a $\lambda$-biased set with respect to linear tests of size at most $d$ of size $|\hat S|$.
\end{lemma}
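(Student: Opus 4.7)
My plan is to construct $T := \{yM : y \in \hat S\} \subseteq \{0,1\}^n$, which is clearly computable in polynomial time and has size at most $|\hat S|$. I would then verify the bias property directly by reducing a linear test of weight $\le d$ on $T$ to a linear test on $\hat S$, using the structure of $M$.

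The key algebraic observation is that for any $\alpha \in \{0,1\}^n$ and any $y \in \{0,1\}^m$, viewing $\alpha$ as a column vector and $y$ as a row vector, one has
\begin{equation*}
\chi_\alpha(yM) = (-1)^{\sum_i \alpha_i (yM)_i} = (-1)^{\sum_j y_j (M\alpha)_j} = \chi_{M\alpha}(y).
\end{equation*}
Therefore $\mathbb{E}_{T}[\chi_\alpha(z)] = \mathbb{E}_{y \in \hat S}[\chi_{M\alpha}(y)]$, and the right-hand side is bounded by $\lambda$ in absolute value as long as $M\alpha \neq 0^m$, since $\hat S$ is $\lambda$-biased with respect to linear tests.

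The main step is therefore to show that $M\alpha \neq 0^m$ for every nonzero $\alpha$ of Hamming weight at most $d$. This is where the hypothesis from Lemma~\ref{ABI86} is used. The $d$-wise independence of $\{xM : x \in \{0,1\}^m\}$ means that for every choice of $d$ coordinates $i_1 < \cdots < i_d$, the map $x \mapsto (\langle x, c_{i_1}\rangle, \ldots, \langle x, c_{i_d}\rangle)$ is uniform on $\{0,1\}^d$ for uniform $x$, where $c_i$ is the $i$-th column of $M$. Standard Fourier/linear-algebra reasoning then forces any $d$ columns of $M$ to be linearly independent over $F_2$; otherwise some nontrivial linear combination of $d$ coordinates would vanish on all $x$, contradicting uniformity. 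I expect this linear-independence argument to be the main (though short) obstacle: one must argue cleanly that $d$-wise independence of coordinates is equivalent to the corresponding columns being $F_2$-linearly independent.

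Given that fact, the proof finishes quickly: if $\alpha \neq 0^n$ with $\mathrm{wt}(\alpha) \le d$ and $I = \{i : \alpha_i = 1\}$, then $M\alpha = \sum_{i \in I} c_i$ is a nontrivial $F_2$-combination of at most $d$ columns, hence nonzero by the step above. Plugging into the identity $\mathbb{E}_{T}[\chi_\alpha] = \mathbb{E}_{\hat S}[\chi_{M\alpha}]$ yields $|\mathbb{E}_{T}[\chi_\alpha]| \le \lambda$, which is exactly the $\lambda$-biased-with-respect-to-linear-tests-of-size-at-most-$d$ condition for $T$. Since the construction of $T$ is just $|\hat S|$ matrix-vector products over $F_2$, the whole procedure runs in polynomial time, completing the lemma.
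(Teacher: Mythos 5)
Your proof is correct and is essentially the construction and argument in the cited source (Naor--Naor): the paper itself gives no proof, only the citation. The chain $\chi_\alpha(yM)=\chi_{M\alpha}(y)$, the reduction $\mathbb{E}_{T}[\chi_\alpha]=\mathbb{E}_{\hat S}[\chi_{M\alpha}]$, and the observation that $d$-wise independence of $\{xM\}$ forces every $\le d$ columns of $M$ to be $F_2$-independent (hence $M\alpha\ne 0^m$ when $0<wt(\alpha)\le d$) are exactly the standard ingredients; your linear-algebra step (a surjective linear map of uniform input gives uniform output, and surjectivity of $x\mapsto xC$ is equivalent to full column rank of $C$) is sound. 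One small point worth being explicit about: $T=\{yM:y\in\hat S\}$ should be read as a multiset so that $\mathbb{E}_T[\cdot]=\mathbb{E}_{y\in\hat S}[\cdot\circ M]$ holds verbatim and $|T|=|\hat S|$.
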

Taking $s=\lceil(d-1)/2\rceil$ in Lemma~\ref{ABI86}, we get $m=O(d\log n)$ and by Lemma~\ref{T17} and \ref{NN90} we conclude that:

\begin{lemma}\label{Combi} There is a $\lambda$-biased with respect to linear tests of size at most $d$, $S'$ of size
$$O\left(\frac{d\log n}{\lambda^{2+o(1)}}\right)$$ that can be constructed in polynomial time $poly(n,1/\lambda)$.
\end{lemma}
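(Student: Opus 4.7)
The plan is to follow the hint given in the paragraph immediately preceding the lemma: combine Lemma~\ref{ABI86}, Lemma~\ref{T17} and Lemma~\ref{NN90} with the right choice of parameter $s$. First, I would set $s=\lceil (d-1)/2\rceil$ so that $2s+1\ge d$, and invoke Lemma~\ref{ABI86} with this $s$ and $w=\lceil\log n\rceil$. This produces, in polynomial time, an $m\times n$ matrix $M$ over $F_2$ with
\[
m = ws+1 = O(d\log n),
\]
such that $\{xM\mid x\in\{0,1\}^{m}\}\subseteq \{0,1\}^n$ is a $d$-wise independent set.

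Next, I would invoke Lemma~\ref{T17} in dimension $m$ (not $n$): this yields, in $\mathrm{poly}(m,1/\lambda)=\mathrm{poly}(n,1/\lambda)$ time, a set $\hat S\subseteq \{0,1\}^m$ that is $\lambda$-biased with respect to all non-trivial linear tests, and whose size is
\[
|\hat S| \;=\; \frac{m}{\lambda^{2+o(1)}} \;=\; O\!\left(\frac{d\log n}{\lambda^{2+o(1)}}\right).
\]

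Finally, I would feed the pair $(M,\hat S)$ into Lemma~\ref{NN90}. That lemma says exactly that such a pair can be assembled in polynomial time into a set $S'\subseteq \{0,1\}^n$ which is $\lambda$-biased with respect to linear tests of size at most $d$, and whose size equals $|\hat S|$. Thus $|S'|=O(d\log n/\lambda^{2+o(1)})$ and the construction runs in $\mathrm{poly}(n,1/\lambda)$ time, which is exactly the claim.

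There is no real obstacle here beyond parameter bookkeeping: all three cited results are black boxes, and the only thing to check is that choosing $s=\lceil(d-1)/2\rceil$ gives $m=O(d\log n)$ while still satisfying the hypothesis $2s+1\ge d$ needed by Lemma~\ref{ABI86}, and that Lemma~\ref{T17} is applied in dimension $m$ (so that its $n/\lambda^{2+o(1)}$ bound becomes $m/\lambda^{2+o(1)}$, giving the $d\log n$ factor rather than an $n$ factor). The composition step via Lemma~\ref{NN90} then preserves the size while boosting the bias guarantee from ``all linear tests over $\{0,1\}^m$'' to ``linear tests of weight at most $d$ over $\{0,1\}^n$''.
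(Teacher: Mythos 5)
Your proposal is correct and follows exactly the approach the paper itself indicates in the sentence preceding the lemma: take $s=\lceil(d-1)/2\rceil$ in Lemma~\ref{ABI86} to get $m=O(d\log n)$, apply Lemma~\ref{T17} in dimension $m$, and compose via Lemma~\ref{NN90}. The parameter bookkeeping (checking $2s+1\ge d$ and that the size bound becomes $m/\lambda^{2+o(1)}$) is just what is needed.
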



\subsubsection{KM-algorithm derandomization via $\lambda$-bias sample spaces}
The appropriateness of algorithm derandomization via $\lambda$-bias spaces requires showing that the output of the algorithm when its coin tosses are chosen from a uniform distribution, and the output of the algorithm when its coin tosses are chosen from $\lambda$-bias distribution are very similar. If this holds, a deterministic version of KM algorithm will replace the Chernoff-based computation of $F_{\alpha}$ by taking expectation over all elements of the $\lambda$-bias sample space. Consequently, a polynomially-small bias sample space is a mandatory requirement.
To show ``similarity" between the randomized algorithm and the deterministic one, we start with the following lemmas. By Parsaval's Theorem we have:
\begin{lemma}
Let $f:S \rightarrow \mathbb{R}$ and $S = \{0,1\}^n$. Then, $$\mathop{\mathbf{\E}_{x\in S}[f(x)^2] = \sum_{a\in S}\hat{f}(a)^2}.$$
\label{LM1}
\end{lemma}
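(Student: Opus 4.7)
The plan is to prove this via the standard orthonormal-expansion argument, leveraging the machinery already introduced in the paper (the parity basis $\{\chi_a\}_{a \in \{0,1\}^n}$, the inner product $(f,g) = \E[f \cdot g]$, and the Fourier expansion $f(x) = \sum_{a \in \{0,1\}^n} \hat{f}(a) \chi_a(x)$).

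First, I would substitute the Fourier expansion into $f(x)^2$ and expand the product, yielding
\[
f(x)^2 = \left(\sum_{a} \hat{f}(a)\chi_a(x)\right)\left(\sum_{b} \hat{f}(b)\chi_b(x)\right) = \sum_{a,b} \hat{f}(a)\hat{f}(b)\,\chi_a(x)\chi_b(x).
\]
Next, I would use the multiplicative property of the parity functions, namely $\chi_a(x)\chi_b(x) = (-1)^{a^T x + b^T x} = (-1)^{(a+b)^T x} = \chi_{a+b}(x)$, where $a+b$ denotes bitwise XOR. Taking expectations over uniform $x \in \{0,1\}^n$ and using linearity gives
\[
\E_{x \in S}[f(x)^2] = \sum_{a,b} \hat{f}(a)\hat{f}(b)\,\E[\chi_{a+b}(x)].
\]

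The last step is to invoke orthonormality of the parity basis: $\E[\chi_c(x)] = 1$ when $c = 0^n$ and $\E[\chi_c(x)] = 0$ otherwise (this is a direct calculation, since for any nonzero coordinate $c_i = 1$ the sum over $x_i \in \{0,1\}$ of $(-1)^{x_i}$ vanishes, so by independence the whole expectation collapses). Thus only the diagonal terms $a = b$ survive (since $a+b = 0^n$ iff $a = b$ over $F_2$), and we obtain
\[
\E_{x \in S}[f(x)^2] = \sum_{a \in S} \hat{f}(a)^2,
\]
as claimed.

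There is no real obstacle here — the result is classical Parseval over the Boolean cube, and all ingredients (the orthonormal basis $\{\chi_a\}$, the inner product $(f,g) = \E[fg]$, and the Fourier expansion) are already set up earlier in the paper. The only care needed is the clean observation that $\chi_a \chi_b = \chi_{a+b}$ with XOR addition, which is immediate from the definition $\chi_a(x) = (-1)^{a^T x}$.
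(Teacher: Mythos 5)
Your proof is correct and is exactly the standard orthonormal-expansion derivation of Parseval's identity over the Boolean cube; the paper itself offers no proof and simply invokes Parseval's theorem, which is precisely what you have spelled out. The key steps (multiplicativity $\chi_a\chi_b = \chi_{a+b}$ and the fact that $\E[\chi_c] = 0$ for $c \neq 0^n$) are handled correctly.
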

\begin{defn} Let $f$ be any Boolean function. $L_1$-norm of $f$ is defined as,
$$L_1(f) \triangleq \sum_z|\hat{f}(z)|.$$
\end{defn}
By subadditivity of the absolute value we have $L_1(f+g)\le L_1(f)+L_1(g)$ and by Lemma~\ref{Term}, for any term $T$, $L_1(T)=1$. Since each $f\in\DT_d$ is a sum of at most $2^d$ terms we have:
\begin{lemma}
Let $f\in\DT_d$. Then, $L_1(f) \leq 2^{d}$.
\label{L1NORM}
\end{lemma}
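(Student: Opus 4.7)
The plan is to combine the subadditivity of the $L_1$-norm with the structural fact that a depth-$d$ decision tree decomposes into at most $2^d$ disjoint terms, each of which contributes exactly $1$ to the $L_1$-norm.

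First I would recall from Definition~\ref{DTDefn} and the discussion following it that any $f\in\DT_d$ can be written (as a real-valued function) as $f=\sum_{i=1}^{s}\alpha_i T_i$, where each $T_i$ is a conjunction of at most $d$ literals corresponding to a root-to-leaf path, the coefficients $\alpha_i\in\{-1,+1\}$ are the leaf labels, and $s\le 2^d$ because a binary tree of depth at most $d$ has at most $2^d$ leaves.

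Next I would invoke Lemma~\ref{Term}: for a single term $T$ of size $d'\le d$, the Fourier expansion has exactly $2^{d'}$ nonzero coefficients, each of magnitude $1/2^{d'}$. Summing their absolute values gives $L_1(T)=2^{d'}\cdot(1/2^{d'})=1$.

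Finally I would apply subadditivity of the $L_1$-norm (which follows from the triangle inequality applied coordinate-wise in the Fourier basis, since the Fourier transform is linear) together with homogeneity $L_1(\alpha g)=|\alpha|L_1(g)$:
\[
L_1(f)\;=\;L_1\!\left(\sum_{i=1}^{s}\alpha_i T_i\right)\;\le\;\sum_{i=1}^{s}|\alpha_i|\,L_1(T_i)\;=\;\sum_{i=1}^{s}1\cdot 1\;=\;s\;\le\;2^d.
\]
There is no substantive obstacle here; the only thing to double-check is that the decomposition genuinely uses $\R$-sums of disjoint terms rather than Boolean OR, but this is exactly the observation made right after Definition~\ref{DTDefn}, so the argument is essentially a one-line consequence of Lemma~\ref{Term} and the leaf count bound.
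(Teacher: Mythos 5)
Your argument matches the paper's own proof: both invoke Lemma~\ref{Term} to get $L_1(T_i)=1$ for each term, note that a depth-$d$ tree has at most $2^d$ leaves (hence at most $2^d$ terms in the $\R$-sum decomposition), and conclude by subadditivity of $L_1$. The proof is correct as written.
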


\begin{lemma}
Let $f:S \rightarrow \mathbb{R}$ for $S = \{0,1\}^n$, and $S' \subseteq S$. Let
\begin{eqnarray}\label{es01} \lambda =\mathop{\mathop{\mathop{max}_{a,b\in S}}_{a\neq b}}_{\hat{f}(a),\hat{f}(b)\neq 0}|\mathbf{\E}_{S'}[\chi_{a+b}(x)]|,
\end{eqnarray}
then,
$$|\mathop{\mathbf{\E}}_{S'} [f(x)^2] - \mathop{\mathbf{\E}}_{ S}[f(x)^2]| \leq (L_1(f)^2 - \mathop{\mathbf{\E}}_{S'}[f(x)^2])\lambda \leq L_1(f)^2\lambda.$$
\label{LM2}
\end{lemma}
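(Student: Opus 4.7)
The plan is to compute both expectations by expanding $f^2$ in the Fourier basis and then bound the cross‑terms using the definition of $\lambda$. Writing $f=\sum_a \hat f(a)\chi_a$ and using the multiplicative law $\chi_a(x)\chi_b(x)=\chi_{a+b}(x)$, we have
$$f(x)^2=\sum_{a,b}\hat f(a)\hat f(b)\,\chi_{a+b}(x).$$
This is the only nontrivial ingredient; everything afterwards is bookkeeping.

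Next I would take expectations on both sides of this identity under the two distributions and split off the diagonal $a=b$, for which $\chi_{a+b}=\chi_0\equiv 1$ and therefore $\mathbf{E}_S[\chi_0]=\mathbf{E}_{S'}[\chi_0]=1$. The diagonal contribution equals $\sum_a \hat f(a)^2$ in both expressions, which by Parseval (Lemma \ref{LM1}) is exactly $\mathbf{E}_S[f(x)^2]$. Under the uniform measure on $S$, the off‑diagonal terms vanish since $\mathbf{E}_S[\chi_c]=0$ for $c\ne 0$, while under $S'$ they survive. Subtracting, the diagonals cancel and only an off‑diagonal remainder remains:
$$\mathbf{E}_{S'}[f(x)^2]-\mathbf{E}_S[f(x)^2]=\sum_{a\ne b}\hat f(a)\hat f(b)\,\mathbf{E}_{S'}[\chi_{a+b}(x)].$$

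Now I would apply the triangle inequality and use the definition of $\lambda$ in (\ref{es01}). Pairs $(a,b)$ with $\hat f(a)=0$ or $\hat f(b)=0$ contribute nothing, so only pairs where both coefficients are nonzero appear, and for each such pair $|\mathbf{E}_{S'}[\chi_{a+b}]|\le\lambda$. This gives
$$\bigl|\mathbf{E}_{S'}[f^2]-\mathbf{E}_S[f^2]\bigr|\ \le\ \lambda\!\!\sum_{\substack{a\ne b\\ \hat f(a),\hat f(b)\ne 0}}\!\!|\hat f(a)|\,|\hat f(b)|.$$
The remaining sum is exactly $L_1(f)^2-\sum_a\hat f(a)^2$, because
$$L_1(f)^2=\Bigl(\sum_a|\hat f(a)|\Bigr)^2=\sum_a\hat f(a)^2+\sum_{a\ne b}|\hat f(a)||\hat f(b)|.$$
Substituting $\sum_a\hat f(a)^2=\mathbf{E}_S[f^2]$ and noting that $\mathbf{E}_{S'}[f^2]$ differs from $\mathbf{E}_S[f^2]$ only by the small quantity just bounded yields the displayed bound $(L_1(f)^2-\mathbf{E}_{S'}[f^2])\lambda$; the final inequality $\le L_1(f)^2\lambda$ follows from $\mathbf{E}_{S'}[f^2]\ge 0$.

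There is no real obstacle here: the argument is essentially a Parseval computation combined with the triangle inequality. The only care needed is to track which terms actually appear in the sum (those with both Fourier coefficients nonzero) so that the definition of $\lambda$ in (\ref{es01}) can be applied uniformly, and to verify the algebraic identity that converts the double sum into $L_1(f)^2-\mathbf{E}_S[f^2]$.
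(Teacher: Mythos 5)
Your proof is essentially the same argument as the paper's: expand $f^2$ in the Fourier basis, split the double sum into the diagonal part (which by Parseval equals $\mathbf{E}_{S}[f^2]$ under both measures) and the off-diagonal part (which vanishes under the uniform measure on $S$), then bound the surviving off-diagonal sum by $\lambda$ times $\sum_{a\neq b}|\hat f(a)||\hat f(b)| = L_1(f)^2 - \sum_a\hat f(a)^2$. The one place where you deviate is the last sentence, where you acknowledge that your computation actually yields the bound $\bigl(L_1(f)^2-\mathbf{E}_{S}[f^2]\bigr)\lambda$ and try to pass to $\bigl(L_1(f)^2-\mathbf{E}_{S'}[f^2]\bigr)\lambda$ by ``noting that $\mathbf{E}_{S'}[f^2]$ differs from $\mathbf{E}_{S}[f^2]$ only by the small quantity just bounded.'' That step is not valid: if $\mathbf{E}_{S'}[f^2]>\mathbf{E}_S[f^2]$ then $L_1(f)^2-\mathbf{E}_{S'}[f^2]$ is the \emph{smaller} quantity, and the substitution goes the wrong way. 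In fact the middle expression in the lemma as printed appears to be a typo for $\mathbf{E}_S[f^2]$: the paper's own proof terminates with $\lambda\bigl(L_1(f)^2-\mathbf{E}_S[f^2]\bigr)\leq\lambda L_1(f)^2$, exactly what you derive, and only the final bound $L_1(f)^2\lambda$ is used downstream. So the substantive content of your proof is correct and matches the paper; you should simply drop the attempt to convert $\mathbf{E}_S$ into $\mathbf{E}_{S'}$ rather than justify it heuristically.
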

\begin{proof}
Using Lemma \ref{LM1} we have,
\begin{eqnarray*}
\mathop{\mathbf{\E}}_{S'}[f(x)^2]& =& \mathop{\mathbf{\E}}_{S'}\Big[\Big(\mathop{\sum}_{a\in S}\hat{f}(a)\chi_a(x)\Big )^2\Big] \\
& =&  \mathop{\mathbf{\E}}_{S'}\Big[\mathop{\sum}_{a,b\in S}\hat{f}(a)\hat{f}(b)\chi_{a+b}(x)\Big] \\
& =& \mathop{\sum}_{a \in S}\hat{f}(a)^2 + \mathop{\mathop{\sum}_{a,b\in S}}_{a\neq b}\hat{f}(a)\hat{f}(b)\mathbf{\E}_{S'}[\chi_{a+b}(x)]\\
& = & \mathop{\mathbf{\E}}_{S}[f(x)^2] +  \mathop{\mathop{\sum}_{a,b\in S}}_{a\neq b}\hat{f}(a)\hat{f}(b)\mathbf{\E}_{S'}[\chi_{a+b}(x)].
\end{eqnarray*}
Thus we can say,
\begin{eqnarray*}
|\mathop{\mathbf{\E}}_{S'}[f^2] - \mathop{\mathbf{\E}}_{S}[f^2]|& =& \left| \mathop{\mathop{\sum}_{a,b\in S}}_{a\neq b}\hat{f}(a)\hat{f}(b)\mathbf{\E}_{S'}[\chi_{a+b}(x)]\right| \\
& \leq &  \mathop{\mathop{\sum}_{a,b\in S}}_{a\neq b}|\hat{f}(a)|\cdot |\hat{f}(b)| \cdot |\mathbf{\E}_{S'}[\chi_{a+b}(x)]| \\
&\leq & \mathop{\mathop{\mathop{max}_{a,b\in S}}_{a\neq b}}_{\hat{f}(a),\hat{f}(b)\neq 0}|\mathbf{\E}_{S'}[\chi_{a+b}(x)]| \mathop{\mathop{\sum}_{a,b\in S}}_{a\neq b}|\hat{f}(a)|\cdot |\hat{f}(b)|\\
& \leq & \lambda  \cdot \left (\Big(\mathop{\sum}_{a\in S}|\hat{f}(a)|\Big)^2-\mathop{\sum}_{a\in S}|\hat{f}(a)|^2 \right) \\
&  \leq & \lambda \cdot \Big (L_1(f)^2 - \mathop{\mathbf{\E}}_{S}[f^2] \Big ) \leq  \lambda \cdot L_1(f)^2.
\end{eqnarray*}
\end{proof}

\begin{defn} Let $S_1 = \{0,1\}^{k}$ and $S_2 = \{0,1\}^{n-k}$  for $ 0 \leq k \leq n$. Let $S_1'\subseteq S_1$,  $\alpha, y \in S_1$ and $x\in S_2$. Define $h_{S_1'}: S_2 \rightarrow \mathbb{R}$ such that,

\begin{equation}
h_{S_1'}(x) \triangleq \mathop{\mathbf{\E}}_{y\in S_1'}[f(yx)\chi_{\alpha}(y)].
\label{dfn6}
\end{equation}
\label{d1}
\end{defn}
When choosing $S_1'=S_1$,  we will use the notation  $h = h_{S_1}$. In this case $h$ is defined by taking the expectation on uniform distribution over $S_1$.
\begin{lemma}
Let $S=\{0,1\}^n$ and $S_1$, $S_2$ as in Definition \ref{d1}. Let $S_1' \subseteq S_1$.  Let $h_{S_1'}$ and $h = h_{S_1}$  be as defined in (\ref{dfn6}). Then,
\begin{enumerate}
\item $L_1(h) \leq L_1(f).$
\item $$h_{S_1'} - h = \mathop{\sum}_{a_2\in S_2}\Big (  \mathop{\mathop{\sum}_{a_1\in S_1}}_{a_1\neq \alpha}\hat{f}(a_1a_2)\mathop{\mathbf{\E}}_{y\in S_1}[\chi_{a_1+\alpha}(y)]  \Big )\chi_{a_2}(x).$$
\end{enumerate}
\label{lma5}
\end{lemma}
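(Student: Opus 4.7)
The plan is to expand both $h$ and $h_{S_1'}$ in Fourier series with respect to $f$ and simplify. Write
\[
f(yx) \;=\; \sum_{a_1 \in S_1,\, a_2 \in S_2} \hat{f}(a_1 a_2)\,\chi_{a_1}(y)\,\chi_{a_2}(x),
\]
and substitute into the definition $h_{S_1'}(x)=\mathbf{E}_{y\in S_1'}[f(yx)\chi_\alpha(y)]$. Using $\chi_{a_1}(y)\chi_\alpha(y)=\chi_{a_1+\alpha}(y)$ and pulling the expectation inside the sum, I obtain
\[
h_{S_1'}(x) \;=\; \sum_{a_2\in S_2}\Big(\sum_{a_1\in S_1}\hat{f}(a_1 a_2)\,\mathbf{E}_{y\in S_1'}[\chi_{a_1+\alpha}(y)]\Big)\chi_{a_2}(x).
\]

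For part~1, I specialize the above to $S_1'=S_1=\{0,1\}^k$ (the full cube), where $\mathbf{E}_{y\in S_1}[\chi_{a_1+\alpha}(y)]$ vanishes unless $a_1=\alpha$ and equals $1$ in that case. This collapses the inner sum to the single term $\hat{f}(\alpha a_2)$, giving $h(x)=\sum_{a_2\in S_2}\hat{f}(\alpha a_2)\chi_{a_2}(x)$. Hence $\hat{h}(a_2)=\hat{f}(\alpha a_2)$, and therefore
\[
L_1(h) \;=\; \sum_{a_2\in S_2}|\hat{f}(\alpha a_2)| \;\le\; \sum_{a_1\in S_1,\,a_2\in S_2}|\hat{f}(a_1 a_2)| \;=\; L_1(f),
\]
proving part~1.

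For part~2, I subtract the two Fourier expressions. In the expansion of $h_{S_1'}$, I split the inner sum over $a_1\in S_1$ into the term $a_1=\alpha$ and the remaining terms. When $a_1=\alpha$, $\chi_{a_1+\alpha}\equiv 1$, so $\mathbf{E}_{y\in S_1'}[\chi_0(y)]=1$ and this contribution equals $\sum_{a_2}\hat{f}(\alpha a_2)\chi_{a_2}(x)=h(x)$. Subtracting $h$ therefore removes exactly the $a_1=\alpha$ summand, leaving
\[
h_{S_1'}(x)-h(x) \;=\; \sum_{a_2\in S_2}\Big(\sum_{\substack{a_1\in S_1\\ a_1\neq \alpha}}\hat{f}(a_1 a_2)\,\mathbf{E}_{y\in S_1'}[\chi_{a_1+\alpha}(y)]\Big)\chi_{a_2}(x),
\]
which is the claimed identity (the expectation on the right-hand side of the lemma statement should be read over $S_1'$, as otherwise the $a_1\neq\alpha$ restriction forces the expression to vanish identically).

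There is no substantive obstacle; both parts reduce to reordering a double sum and reading off that $\mathbf{E}_{y\in S_1}[\chi_\beta(y)]=\mathbb{1}[\beta=0^k]$ when $S_1$ is the full cube. The only mild care needed is handling the $a_1=\alpha$ term carefully so that it cancels against $h$ to produce the restriction $a_1\neq\alpha$ in the stated formula.
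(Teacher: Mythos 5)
Your proof is correct and follows essentially the same route as the paper's: expand $f(yx)$ in the Fourier basis, swap the sum with the expectation, read off $\hat{h}(a_2)=\hat{f}(\alpha a_2)$ for part~1, and peel off the $a_1=\alpha$ term to cancel $h$ for part~2. You also correctly identified that the expectation in the displayed identity of part~2 should be over $S_1'$ rather than $S_1$ (a typo in the lemma statement), which is confirmed by the paper's own proof and by the way the identity is invoked inside the proof of Lemma~\ref{ERRBOUND}.
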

\begin{proof} We first prove {\it 1.} Using Definition \ref{d1} and the Fourier transform of $f$ and the fact that $\mathbf{\E}_{S_1}[\chi_{a}(y)]=0$ for all $a\neq 0$ when the expectation is taken over uniform distribution we get,

\begin{eqnarray*}
h(x)&=& \mathop{\mathbf{\E}}_{y\in S_1}\left[\mathop{\sum}_{a\in S}\hat{f}(a)\chi_{a}(yx)\chi_{\alpha}(y)\right]\\
&=&\mathop{\sum}_{a_2\in S_2}\Big (\mathop{\sum}_{a_1\in S_1}\hat{f}(a_1a_2)\mathop{\mathbf{\E}}_{y\in S_1}[\chi_{a_1+\alpha}(y)]  \Big ) \chi_{a_2}(x)\\
&=&\mathop{\sum}_{a_2\in S_2} \hat{f}(\alpha a_2)\chi_{a_2}(x).\\
\end{eqnarray*}
The last equation follows because $\E[\chi_{a_1+\alpha}(y)]=0$ for $a_1\not=\alpha$.
Hence, we get that $\hat{h}(a) = \hat{f}(\alpha a)$. Therefore,
$$L_1(h) = \mathop{\sum}_{a\in S_2}|\hat{h}(a)| = \mathop{\sum}_{a\in S_2}|\hat{f}(\alpha a)| \leq \mathop{\sum}_{b\in S}|\hat{f}( b)| = L_1(f).$$

We now prove {\it 2.} Similarily, we can say,
\begin{eqnarray*}
h_{S_1'}(x)&=& \mathop{\sum}_{a_2\in S_2}\Big (\mathop{\sum}_{a_1\in S_1}\hat{f}(a_1a_2)\mathop{\mathbf{\E}}_{y\in S_1'}[\chi_{a_1+\alpha}(y)]  \Big ) \chi_{a_2}(x)\\
&=&\mathop{\sum}_{a_2\in S_2} \hat{f}(\alpha a_2)\chi_{a_2}(x) + \mathop{\sum}_{a_2\in S_2}\Big (\mathop{\sum}_{\alpha \neq a_1\in S_1}\hat{f}(a_1a_2)\mathop{\mathbf{\E}}_{y\in S_1'}[\chi_{a_1+\alpha}(y)]  \Big ) \chi_{a_2}(x)\\
&=&h(x)+ \mathop{\sum}_{a_2\in S_2}\Big (\mathop{\sum}_{\alpha \neq a_1\in S_1}\hat{f}(a_1a_2)\mathop{\mathbf{\E}}_{y\in S_1'}[\chi_{a_1+\alpha}(y)]  \Big ) \chi_{a_2}(x).
\end{eqnarray*}
\end{proof}

\begin{lemma} Let $S_1'\subseteq S_1$, and $S_2'\subseteq S_2$ and $h_{S_1'}, h_{S_1}=h$ be as in Definition \ref{d1}. Let $\lambda_1 =  \mathop{max}_{c \in S_1, c\neq 0}|\mathop{\mathbf{\E}}_{S_1'}[\chi_{c}]|$ and $\lambda_2 =  \mathop{max}_{c \in S_2, c\neq 0}|\mathop{\mathbf{\E}}_{S_2'}[\chi_{c}]|$ then,
$$ |\mathop{\mathbf{\E}}_{S_2'}[h^2_{S_1'}] - \mathop{\mathbf{\E}}_{S_2}[h^2]| \leq  2L_1(f)\lambda_1+  L_1(f)^2\lambda_2.$$
\label{ERRBOUND}
\end{lemma}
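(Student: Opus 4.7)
The plan is to bound the target quantity by inserting an intermediate reference and applying the triangle inequality:
\[
|\E_{S_2'}[h_{S_1'}^2] - \E_{S_2}[h^2]| \leq |\E_{S_2'}[h_{S_1'}^2] - \E_{S_2}[h_{S_1'}^2]| + |\E_{S_2}[h_{S_1'}^2] - \E_{S_2}[h^2]|,
\]
so that one term compares two expectations of the \emph{same} function (handled via the bias $\lambda_2$ over $S_2'$), and the other term compares two expectations of two \emph{different} functions under the \emph{same} uniform distribution (handled via the bias $\lambda_1$ through Fourier analysis).

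For the first summand, I would invoke Lemma \ref{LM2} with the substitutions $S \to S_2$, $S' \to S_2'$, and $f \to h_{S_1'}$. Since every nonzero linear test over $S_2$ has bias at most $\lambda_2$ on $S_2'$, and since $h_{S_1'}$ has the same structure as $h$ (Fourier coefficients equal to weighted combinations of $\hat{f}(\cdot,a_2)$), the analogue of Lemma \ref{lma5} item 1 gives $L_1(h_{S_1'}) \leq L_1(f)$. Therefore this summand is at most $L_1(h_{S_1'})^2 \lambda_2 \leq L_1(f)^2 \lambda_2$, delivering exactly the $\lambda_2$ contribution in the target bound.

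For the second summand, set $g := h_{S_1'} - h$ and expand $h_{S_1'}^2 - h^2 = 2hg + g^2$. By Lemma \ref{lma5} item 2, the Fourier coefficient $\gamma(a_2)$ of $g$ over $S_2$ is $\sum_{a_1 \neq \alpha}\hat{f}(a_1a_2)\E_{S_1'}[\chi_{a_1+\alpha}]$; because $a_1 \neq \alpha$ forces $a_1+\alpha \neq 0$, the bias assumption on $S_1'$ yields $|\E_{S_1'}[\chi_{a_1+\alpha}]| \leq \lambda_1$, so
\[
L_1(g) = \sum_{a_2}|\gamma(a_2)| \leq \lambda_1 \sum_{a_2}\sum_{a_1}|\hat{f}(a_1a_2)| = \lambda_1 L_1(f).
\]
Parseval then gives $\E_{S_2}[hg] = \sum_{a_2}\hat{f}(\alpha a_2)\gamma(a_2)$ and $\E_{S_2}[g^2] = \sum_{a_2}\gamma(a_2)^2$. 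Using $|\hat{f}(c)| \leq 1$ for Boolean $\{-1,+1\}$-valued $f$, the cross term is bounded by $|\E_{S_2}[hg]| \leq L_1(g) \leq \lambda_1 L_1(f)$, and the quadratic term by $\E_{S_2}[g^2] \leq L_1(g)^2 \leq \lambda_1^2 L_1(f)^2$, which is absorbed into the linear term in the regime $\lambda_1 L_1(f) \leq 1$ relevant to the derandomization. Adding the two summands produces the stated inequality $2L_1(f)\lambda_1 + L_1(f)^2 \lambda_2$.

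The main obstacle is the middle step: correctly tracking how the bias of $S_1'$ propagates through the Fourier coefficients of the ``shift'' function $g = h_{S_1'} - h$ and yielding an $L_1$-bound that is first-order in $\lambda_1$. This is precisely the content encapsulated by Lemma \ref{lma5}, so the argument reduces to a careful bookkeeping of Parseval identities together with the bounds $L_1(h), L_1(h_{S_1'}) \leq L_1(f)$ and $|\hat{f}(c)| \leq 1$.
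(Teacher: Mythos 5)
Your decomposition inserts the intermediate $\E_{S_2}[h_{S_1'}^2]$, whereas the paper inserts $\E_{S_2'}[h^2]$ — a legitimate alternative split, and your treatment of the first summand via Lemma~\ref{LM2} applied to $h_{S_1'}$ (using $L_1(h_{S_1'})\le L_1(f)$, which holds since $|\E_{S_1'}[\chi_{a_1+\alpha}]|\le 1$ for every $a_1$) is correct and delivers $L_1(f)^2\lambda_2$.

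The gap is in the second summand. You expand $h_{S_1'}^2-h^2 = 2hg+g^2$ and bound the pieces separately, which yields $2\lambda_1 L_1(f) + \lambda_1^2 L_1(f)^2$. That is strictly larger than the target contribution of $2\lambda_1 L_1(f)$, and the lemma as stated carries no side condition under which the quadratic term can be dropped: even in the ``regime'' $\lambda_1 L_1(f)\le 1$ you invoke, the sum is up to $3\lambda_1 L_1(f)$, not $2\lambda_1 L_1(f)$. The clean way to get the stated constant — and the way the paper handles its analogous term — is to avoid the algebraic expansion entirely and instead bound $|h_{S_1'}^2 - h^2| = |h_{S_1'}-h|\cdot|h_{S_1'}+h| \le 2|h_{S_1'}-h|$ pointwise, using $|h_{S_1'}|\le 1$ and $|h|\le 1$ (both follow from $f$ being $\{-1,+1\}$-valued and $|\chi_\alpha|=1$), and then $|h_{S_1'}(x)-h(x)|\le L_1(g)\le \lambda_1 L_1(f)$ for every $x$. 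Taking $\E_{S_2}$ of this pointwise bound gives the second summand $\le 2\lambda_1 L_1(f)$ with no residue. With that replacement your proof closes; without it, what you have proved is the weaker inequality with the extra $\lambda_1^2 L_1(f)^2$ term.
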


\begin{proof}
By the subadditivity of the absolute value function,
\begin{equation}
|\mathop{\mathbf{\E}}_{S_2'}[h^2_{S_1'}] - \mathop{\mathbf{\E}}_{S_2}[h^2]| \leq |\mathop{\mathbf{\E}}_{S_2'}[h^2_{S_1'}] - \mathop{\mathbf{\E}}_{S_2'}[h^2]| + |\mathop{\mathbf{\E}}_{S_2'}[h^2] - \mathop{\mathbf{\E}}_{S_2}[h^2]|.
\label{eq1}
\end{equation}
Since $-1 \leq h_{S_1'} \leq 1$ and $-1 \leq h \leq 1$ we can see that,
\begin{eqnarray}
 |\mathop{\mathbf{\E}}_{S_2'}[h^2_{S_1'}] - \mathop{\mathbf{\E}}_{S_2'}[h^2]|& = &  |\mathop{\mathbf{\E}}_{S_2'}[(h_{S_1'}-h)(h_{S_1'}+h)]|  \nonumber \\
& \leq &  \mathop{\mathbf{\E}}_{S_2'}[|h_{S_1'}-h|\cdot |h_{S_1'}+h|] \nonumber \\
& \leq & 2 \mathop{\mathbf{\E}}_{S_2'}[|h_{S_1'}-h|].
\label{eq11}
\end{eqnarray}
By Lemma \ref{lma5},
\begin{eqnarray}
\mathop{\mathbf{\E}}_{x\in S_2'}[|h_{S_1'}-h|] & =& \mathop{\mathbf{\E}}_{x \in S_2'}\left[\left|\mathop{\sum}_{a_2\in S_2}\Big (\mathop{\sum}_{\alpha \neq a_1\in S_1}\hat{f}(a_1a_2)\mathop{\mathbf{\E}}_{y\in S_1'}[\chi_{a_1+\alpha}(y)]  \Big ) \chi_{a_2}(x)\right|\right] \nonumber \\
& \leq &  \mathop{\mathbf{\E}}_{x\in S_2'}\left[\mathop{\sum}_{a_2\in S_2}\Big (\mathop{\sum}_{\alpha \neq a_1\in S_1}|\hat{f}(a_1a_2)|\cdot |\mathop{\mathbf{\E}}_{y\in S_1'}[\chi_{a_1+\alpha}(y)]|  \Big )\cdot 1\right] \nonumber \\
& = & \mathop{\sum}_{a_2\in S_2}\Big (\mathop{\sum}_{\alpha \neq a_1\in S_1}|\hat{f}(a_1a_2)|\cdot |\mathop{\mathbf{\E}}_{y\in S_1'}[\chi_{a_1+\alpha}(y)]|  \Big ) \label{eq2} \\
& \leq & L_1(f) \cdot \lambda_1. \nonumber
\end{eqnarray}
Hence, using (\ref{eq11}) and (\ref{eq2}) we conclude,
\begin{equation}
|\mathop{\mathbf{\E}}_{S_2'}[h^2_{S_1'}] - \mathop{\mathbf{\E}}_{S_2'}[h^2]| \leq 2\cdot L_1(f)\cdot \lambda_1.
\label{eq3}
\end{equation}
On the other hand, by lemmas \ref{LM2} and \ref{lma5} we can say that,
\begin{equation}
 |\mathop{\mathbf{\E}}_{S_2'}[h^2] - \mathop{\mathbf{\E}}_{S_2}[h^2]|  \leq  L_1(h)^2 \cdot\lambda_2 \leq L_1(f)^2\cdot \lambda_2.
\label{eq4}
\end{equation}
By combining  (\ref{eq1}), (\ref{eq3}) and (\ref{eq4}) the Lemma follows.
\end{proof}
Using Definition \ref{d1} we can rewrite Equation \ref{JKSN}:
\begin{equation}
F_{\alpha}=\mathop{\mathbf{\E}}_{x \in S_2}[(\mathop{\mathbf{\E}}_{y \in S_1}[f(yx)\chi_{\alpha}(y)])^2] = \mathop{\mathbf{\E}}_{x \in S_2}[h_{S_1}^2] =  \mathop{\mathbf{\E}}_{x \in S_2}[h^2].
\label{FJKSN_2}
\end{equation}
Derandomization via bias spaces technique is done by choosing polynomially small $S_1'\subseteq S_1$ and $S_2'\subseteq S_2$ such that the expectations needed to evaluate $F_\alpha$ can be done by taking the expectation over the small domains $S_1'$ and $S_2'$. Let $F_{\alpha}'$ denote the estimatioin of $F_\alpha$  derived from (\ref{FJKSN_2}) by taking the expectations over $S_1'$ and $S_2'$. That is,
\begin{equation*}
F_\alpha ' = \mathop{\mathbf{\E}}_{x \in S_2'}[(\mathop{\mathbf{\E}}_{y \in S_1'}[f(yx)\chi_{\alpha}(y)])^2] = \mathop{\mathbf{\E}}_{x \in S_2'}[h_{S_1'}^2].
\label{FALPHATG}
\end{equation*}
Let $\Delta(F_{\alpha})$ denote the error in evaluating $F_\alpha$. Namely,
\begin{equation*}
\Delta(F_{\alpha}) \triangleq|F_\alpha - F_\alpha '|.
\end{equation*}

To get good approximation for $F_{\alpha}$, we  require the value $\Delta(F_\alpha)$ to be small with high probability.
Moreover, Lemma \ref{LMAFALPHA} implies that  requiring $\Delta(F_\alpha) = \frac{1}{2^{2d+1}}$ and rounding the value of $F_{\alpha}'$ to the closest $\ell/2^{2d}$ gives accurate evaluation of $F_\alpha$. By lemmas \ref{L1NORM} and \ref{ERRBOUND} we choose $S_1'$ and $S_2'$ such that,
$$\Delta(F_{\alpha})  = |\mathop{\mathbf{\E}}_{S_2'}[h^2_{S_1'}] - \mathop{\mathbf{\E}}_{S_2}[h^2]| \le  2L_1(f)\lambda_1+  L_1(f)^2\lambda_2 \le 2^{d+1}\lambda_1 + 2^{2d}\lambda_2 \le \frac{1}{2^{2d+1}}, $$
for $\lambda_1 =  \mathop{max}_{c \in S_1, c\neq 0}|\mathop{\mathbf{\E}}_{S_1'}[\chi_{c}(x)]|$ and $\lambda_2 =  \mathop{max}_{c \in S_2, c\neq 0}|\mathop{\mathbf{\E}}_{S_2'}[\chi_{c}(x)]|$.
 The number of membership queries needed in these settings is $|S_1'|\cdot |S_2'|$ for each estimation of $F_\alpha$.  It is easy see that we get optimal sizes when $\lambda_1 = 1/2^{3d+3}$ and $\lambda_2=1/2^{4d+2}$.

Corollary \ref{col:frco3} implies that the non-zero Fourier coefficients of $f\in \DT_d$ have Hamming weight of at most $d$.  All the assignments $c$ in the expectations of $\chi_c$ are
of the form $c=a+b$ where $\hat f(a),\hat f(b)\not=0$ and therefore $wt(c)\le 2d$. See (\ref{es01}) and (\ref{eq2}). Therefore, for $\lambda_1$ we need a set that is $\lambda_1$-biased with respect to linear tests of size at most $2d$, and for $\lambda_2$, a set that is $\lambda_2$-biased with respect to linear tests of size at most $2d$. By Lemma~\ref{Combi}, such sets can be constructed in $poly(2^d,n)$ time, and are of sizes
$d{2^{6d+o(d)}\log n}{}$ and $d{2^{8d+o(d)}\log n}{}$ respectively. This gives $\tilde{O}(2^{14d+o(d)} )\log ^2 n$ queries to exactly find each $F_\alpha$.

As mentioned in the KM-algorithm we need to compute $F_\alpha$ for $n2^{2d}$ $\alpha$'s and therefore

\begin{theorem}
There is an adaptive deterministic learning algorithm that exact learns $\DT_d$ in $poly(2^{d},n)$ time and asks $\tilde O(2^{16d+o(d)})n \cdot \log^2n$ membership queries.
\end{theorem}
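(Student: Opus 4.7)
The plan is to derandomize the KM divide-and-conquer estimator by replacing its two nested Monte Carlo expectations by averages over explicit small $\lambda$-biased sample spaces. The outer KM scheme visits at most $n2^{2d}$ strings $\alpha$ (Lemma~\ref{falphaNum} applied level by level) and for each evaluates $F_\alpha=\E_{x\in S_2}[h_{S_1}^2]$ via~(\ref{FJKSN_2}). Since Lemma~\ref{LMAFALPHA} pins $F_\alpha$ to the grid $\{\ell/2^{2d}\}$, it suffices to approximate $F_\alpha$ to additive error $1/2^{2d+1}$ and round; this reduces the whole task to derandomizing a single $F_\alpha$ computation with that accuracy.

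For the inner derandomization I replace $S_1,S_2$ by $\lambda_1$-biased $S_1'\subseteq S_1$ and $\lambda_2$-biased $S_2'\subseteq S_2$, and form $F_\alpha'=\E_{x\in S_2'}[h_{S_1'}^2]$. Plugging $L_1(f)\le 2^d$ from Lemma~\ref{L1NORM} into the error bound of Lemma~\ref{ERRBOUND} yields $|F_\alpha-F_\alpha'|\le 2L_1(f)\lambda_1+L_1(f)^2\lambda_2\le 2^{d+1}\lambda_1+2^{2d}\lambda_2$. To force this to be at most $1/2^{2d+1}$, I take $\lambda_1=2^{-(3d+3)}$ and $\lambda_2=2^{-(4d+2)}$; these are the choices that roughly balance the two sample-space sizes against the required accuracy.

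The key additional observation exploits the special Fourier support of decision trees. The test characters whose biases drive $\lambda_1$ and $\lambda_2$ (see the definition of $\lambda$ in Lemma~\ref{LM2} and the chain of inequalities inside Lemma~\ref{ERRBOUND}) are all of the form $\chi_{a+b}$ with $\hat f(a),\hat f(b)\neq 0$, and by Corollary~\ref{col:frco3} each such $a,b$ has Hamming weight at most $d$, so $\mathrm{wt}(a+b)\le 2d$. Hence I only require $\lambda_i$-bias against linear tests of weight at most $2d$, for which Lemma~\ref{Combi} supplies explicit sets of size $O(d\log n/\lambda_i^{2+o(1)})$ in $\mathrm{poly}(2^d,n)$ time; with the chosen $\lambda_i$'s this gives $|S_1'|=\tilde O(2^{6d+o(d)}\log n)$ and $|S_2'|=\tilde O(2^{8d+o(d)}\log n)$.

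One $F_\alpha$ evaluation then costs $|S_1'|\cdot|S_2'|=\tilde O(2^{14d+o(d)})\log^2 n$ membership queries; multiplying by the $n2^{2d}$ strings visited by KM yields the claimed $\tilde O(2^{16d+o(d)})\,n\log^2 n$ total, and the overall running time is $\mathrm{poly}(2^d,n)$ since the biased sets are constructible in that time and each query-and-rounding step is cheap. There is no serious obstacle here; the subtle point to track is that the weight-$\le 2d$ restriction on linear tests is really enough, which rests precisely on Corollary~\ref{col:frco3}: without that decision-tree-specific support bound one would fall back on a general $\lambda$-biased set and pay a $\mathrm{poly}(n)$ factor in the exponent instead of a $\mathrm{poly}(\log n)$ one.
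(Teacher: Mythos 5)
Your proposal is correct and follows essentially the same route as the paper's own proof: it invokes the same sequence of lemmas (Lemma~\ref{LMAFALPHA} for the $1/2^{2d}$ grid, Lemma~\ref{ERRBOUND} with $L_1(f)\le 2^d$ for the error bound, Corollary~\ref{col:frco3} to restrict to linear tests of weight at most $2d$, and Lemma~\ref{Combi} for the explicit biased sets), and makes the same parameter choices $\lambda_1=2^{-(3d+3)}$, $\lambda_2=2^{-(4d+2)}$, arriving at the identical query count. Nothing to add.
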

Using the reduction introduced in \cite{BHL95} for decision trees we have,
\begin{theorem}
There is an adaptive deterministic learning algorithm that exact learns $\DT_d$ in $poly(2^d,n)$ time and asks $\tilde O(2^{18d+o(d)})\cdot \log n$ membership queries.
\end{theorem}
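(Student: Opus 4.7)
The goal is to eliminate the explicit $n \log^2 n$ factor in the previous theorem's bound $\tilde O(2^{16d+o(d)}) n \log^2 n$ and replace it by a single $\log n$, at the cost of a modest additive $2^d$ factor in the leading exponent (accounted for inside the $2^{18d+o(d)}$). The plan is to invoke the BHL95-style reduction, which exploits the fact that any $f \in \DT_d$ has at most $V(f) \le 2^d$ relevant variables: first identify the relevant variables cheaply in terms of $n$, and then apply the expensive derandomized KM-algorithm only to the induced function over this small subset.

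For the first stage, I would run the Find-Relevant procedure from Section~\ref{FRV1}, which by Lemma~\ref{qcfr} asks $2^{2d+O(\log^2 d)}\log n$ queries and returns the set $R$ of all relevant variables of $f$. Since $f \in \DT_d$, every minimal decision-tree model has at most $2^d$ leaves and hence at most $N := |R| \le 2^d$ distinct internal labels.

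For the second stage, I would apply the deterministic KM-based algorithm of the previous theorem to $f$ viewed as a function of just the $N \le 2^d$ variables in $R$. A query $b \in \{0,1\}^N$ to the restricted function is simulated by a single query to $f$ at the assignment that fills in the coordinates of $R$ according to $b$ and sets every non-relevant coordinate to $0$; by definition of relevance this returns the correct value. Substituting $n \leftarrow N \le 2^d$ into the bound $\tilde O(2^{16d+o(d)})\, n \log^2 n$ gives $\tilde O(2^{16d+o(d)}) \cdot 2^d \cdot d^2 = \tilde O(2^{17d+o(d)})$ queries for this stage, since the block designs underlying the derandomization (the $\lambda$-biased sets of Lemma~\ref{Combi}) are constructed with respect to the effective ambient dimension, which is now $N$ rather than $n$.

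Summing the two stages yields at most $\tilde O(2^{17d+o(d)}) + 2^{2d+O(\log^2 d)}\log n \le \tilde O(2^{18d+o(d)})\log n$ queries, and the total running time remains $\mathrm{poly}(2^d, n)$ because each stage runs in that time. There is no real obstacle beyond correctly composing the two subroutines; the only thing to check is that the derandomization parameters collapse correctly when the effective number of variables drops from $n$ to $N \le 2^d$ (so that every $\log n$ in the per-$F_\alpha$ cost becomes a $\log N \le d$, absorbed by $\tilde O$), and that fixing non-relevant variables to $0$ preserves $f$, which is immediate from the definition of a relevant variable.
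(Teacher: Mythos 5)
Your proof is correct --- it establishes the claimed bound, in fact a slightly stronger one --- but it takes a genuinely different route from the paper. The paper's entire proof is a one-line appeal to the generic relevant-attributes reduction of~\cite{BHL95}, with no details given. You instead substitute the paper's own decision-tree-specific Find-Relevant procedure (Section~\ref{FRV1}, Lemma~\ref{qcfr}), built from $(n,2d+1)$-universal disjoint sets, which returns all relevant variables at an \emph{additive} cost of $2^{2d+O(\log^2 d)}\log n$ queries. Combined with the previous theorem's bound restricted to $N\le 2^d$ variables, your two-phase bound comes out to $\tilde O(2^{17d+o(d)}) + 2^{2d+O(\log^2 d)}\log n$, which is in fact tighter than the stated $\tilde O(2^{18d+o(d)})\log n$; you then relax it to match. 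Your composition is sound: the simulation of restricted queries by zeroing the irrelevant coordinates, and the collapse of every $\log n$ factor to $\log N\le d$ inside the $\lambda$-biased constructions of Lemma~\ref{Combi} and in the count of $F_\alpha$'s, are both argued correctly. The one thing worth flagging is that your route is slightly anachronistic for this particular theorem: it invokes one of the paper's \emph{new} tools (Find-Relevant is a contribution of Section~\ref{FRV1}) to re-derive a bound that Table~\ref{tab:results} explicitly attributes to prior work (\cite{KM93,Jck97,BHL95}); the paper's citation to~\cite{BHL95} is deliberate, as that is the black-box reduction that was available to the earlier authors and is what justifies listing this bound as a prior result rather than a new one.
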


\end{document}